\newcommand{\E}{\mathbb{E}}
\renewcommand{\S}{\mathcal{S}}
\newcommand{\A}{\mathcal{A}}
\newcommand{\B}{\mathcal{B}}
\newcommand{\F}{\mathcal{F}}
\newcommand{\diam}{\text{diam}}
\newcommand{\norm}[1]{\|#1\|}
\newcommand{\wass}[2]{W_1(#1\lvert\rvert #2)}
\theoremstyle{plain}
\newtheorem{theorem}{Theorem}
\newtheorem{proposition}{Proposition}
\newtheorem{corollary}{Corollary}
\newtheorem{lemma}{Lemma}
\newtheorem{fact}{Fact}
\newtheorem{definition}{Definition}
\title{Explaining Off-Policy Actor-Critic From A Bias-Variance Perspective}
\author{
  Ting-Han Fan and Peter J. Ramadge\\
  Department of Electrical and Computer Engineering, Princeton University \\
  \texttt{\{tinghanf,ramadge\}@princeton.edu}
}
\date{}
\begin{document}
\maketitle

\begin{abstract}
	Off-policy Actor-Critic algorithms have demonstrated phenomenal experimental performance but still require better explanations. To this end, we show its policy evaluation error on the distribution of transitions decomposes into: a Bellman error, a bias from policy mismatch, and a variance term from sampling. By comparing the magnitude of bias and variance, we explain the success of the Emphasizing Recent Experience sampling and 1/age weighted sampling. Both sampling strategies yield smaller bias and variance and are hence preferable to uniform sampling.
\end{abstract}

\section{Introduction}
A practical reinforcement learning (RL) algorithm is often in an actor-critic setting \citep{lin1992rl,sutton2000offpolicy} where the policy (actor) generates actions and the Q/value function (critic) evaluates the policy's performance. Under this setting, off-policy RL uses transitions sampled from a replay buffer to perform Q function updates, yielding a new policy $\pi$. Then, a finite-length trajectory under $\pi$ is added to the buffer, and the process repeats. Notice that sampling from a replay buffer is an offline operation and that the growth of replay buffer is an online operation. This implies off-policy actor-critic RL lies between offline RL \citep{yu2020mopo,levine2020offline} and on-policy RL \citep{schulman2015trpo, schulman2017ppo}. From a bias-variance perspective, offline RL experiences large policy mismatch bias but low sampling variance, while on-policy RL has a low bias but high variance. Hence, with a careful choice of the sampling from its replay buffer, off-policy actor-critic RL may achieve a better bias-variance trade-off. This is the direction we explore this work.

To reduce policy mismatch bias, off-policy RL employs importance sampling with the weight given by the probability ratio of the current to behavior policy (the policy that samples the trajectories) \citep{sutton2000offpolicy,xie2019offpolicy,schmidt2020offpolicy}. However, because the behavior policy is usually not given in practice, one can either estimate the probability ratio from the data \citep{lazic2020maximum, yang2020off, sinha2020experience} or use other reasonable quantities, such as the Bellman error \citep{Schaul2016Prioritized}, as the sampling weight. Even using a naive uniform sampling from the replay buffer, some off-policy actor-critic algorithms can achieve a nontrivial performance \citep{Haarnoja2018sac, Fujimoto2018td3}. These observations suggest we need to better understand the success of off-policy actor-critic algorithms, especially in practical situations where a fixed behavior policy is unavailable.

Our contributions are as follows. To understand the actor-critic setting without a fixed behavior policy, we construct a non-stationary policy that generates the averaged occupancy measure. We use this policy as a reference and show the policy evaluation error in an off-policy actor-critic setting decomposes into the Bellman error, the policy mismatch bias, and the variance from sampling. Since supervised learning during the Q function update only controls the Bellman error, we need careful sampling strategies to mitigate bias and variance. We show that the 1/age weighting or its variants like the Emphasizing Recent Experience (ERE) strategy \citep{wang2019boosting} are preferable because both their biases and variances are smaller than that of uniform weighting.

To ensure the applicability of our explanation to practical off-policy actor-critic algorithms, we adopt weak but verifiable assumptions such as Lipschitz Q functions, bounded rewards, and a bounded action space. We avoid strong assumptions such as a fixed well-explored behavior policy, concentration coefficients, bounded probability ratios (e.g., ratios of current to behavior policy), and tabular or linear function approximation. Hence our analysis is more applicable to practical settings. In addition, we explain the success of the ERE strategy \citep{wang2019boosting} from a bias-variance perspective and show that ERE is essentially equivalent to 1/age weighting. Our simulations also support this conclusion. Our results thus provide theoretical foundations for practical off-policy actor-critic RL algorithms and allow us to reduce the algorithm to a simpler form.

\section{Preliminaries}
\subsection{Reinforcement learning}
Consider an infinite-horizon Markov Decision Process (MDP) $\langle\mathcal{S},~\mathcal{A},~ T,~r,~\gamma\rangle,$ where 
$\mathcal{S},~\mathcal{A}$ are finite-dimensional continuous state and action spaces, $r(s,a)$ is a deterministic reward function, $\gamma\in(0,1)$ is the discount factor, and $T(s'|s,a)$ is the state transition density; i.e., the density of the next state $s'$ given the current state and action $(s,a).$ Given an initial state distribution $\rho_0,$ the objective of RL is to find a policy $\pi$ that maximizes the $\gamma$-discounted cumulative reward when the actions along the trajectory follow $\pi$:
\begin{equation}\label{eq:rl_obj}
\max_\pi J(\pi) = \max_\pi \E\left[\sum_{i=0}^\infty  \gamma^i r(s_i,a_i)\Big\lvert s_0\sim\rho_0,~a_i\sim\pi(\cdot|s_i),~s_{i+1}\sim T(\cdot|s_i,a_i)\right].
\end{equation}
Let
$\rho_i(s|\rho_0,~\pi,~T)$ be the state distribution under $\pi$ at trajectory step $i$. Define the normalized state occupancy measure by
\begin{equation}
\rho^\pi_{\rho_0}(s)
\triangleq
(1-\gamma)\sum_{i=0}^\infty\gamma^i\rho_i(s|\rho_0,~\pi,~T).
\label{eq:occu}
\end{equation}

Ideally, the maximization in \eqref{eq:rl_obj} is achieved using a parameterized policy $\pi_\theta$ and policy gradient updates with \citep{sutton1999pg,silver2014dpg}: 
\begin{equation}
\nabla_\theta J(\pi_\theta)=(1-\gamma)^{-1}\E_{(s,a)\sim\rho^{\pi_\theta}_{\rho_0}}[(\nabla_\theta\log\pi_\theta (a|s))Q^{\pi_\theta}(s,a)].
\label{eq:pg}
\end{equation}
Here $\rho^{\pi_\theta}_{\rho_0}(s,a)=\rho^{\pi_\theta}_{\rho_0}(s)\pi_{\theta}(a|s)$ is given in \eqref{eq:occu}, and $Q^{\pi_{\theta}}$ is the Q function under policy $\pi_{\theta}$:
\begin{equation}
Q^\pi(s,a)=\mathbb{E}\left[\sum_{i=0}^\infty  \gamma^i r(s_i,a_i)\Big\lvert s_0=s,~a_0=a,~a_i\sim\pi(\cdot|s_i),~s_{i+1}\sim T(\cdot|s_i,a_i)\right].
\label{eq:qfunc}
\end{equation}

Off-policy RL estimates $Q^\pi$ by approximating the solution of the Bellman fixed-point equation:
$$
(\B^\pi Q^\pi)(s,a)=r(s,a)+\gamma \E_{s'\sim T(\cdot|s,a),~a'\sim\pi(a|s)}Q^\pi(s',a')=Q^\pi(s,a).
$$
It is well-known that $Q^\pi$ is the unique fixed point of the Bellman operator $\B^\pi$. 
Hence if  $(B^\pi \hat{Q})(s,a)\approx \hat{Q}(s,a),$ then $\hat{Q}$ may be a ``good" estimate of $Q^\pi$. In the next subsection, we will see that an off-policy actor-critic algorithm encourages this to hold for the replay buffer.

\subsection{Off-policy actor-critic algorithm}
\label{sec:offpolicy}
We study an off-policy actor-critic algorithm of the form shown in Alg.~\ref{alg:offpolicy}.
\begin{algorithm}[!ht]
	\caption{Off-policy Actor-critic Algorithm
		\label{alg:offpolicy} }
	\begin{algorithmic}[1]
		\REQUIRE Parameterized policy $\pi^e=\pi_{\theta_e}$. Learning rate $\alpha$.
		\FOR{$\text{episode}~e=1,2,...$}
		\STATE Sample a length-$L$ trajectory $\{(s_i^e,a_i^e)\}_{i=0}^{L-1}$ under initial distribution $\rho_0(s)$, policy $\pi^e(a|s)$ and state transition $T(\cdot|s,a)$.
		\STATE Train a ``learned" Q function by $\hat{Q}=\arg\min_Q \sum_{j=1}^e\sum_{i=0}^{L-1} \big(Q(s_i^j,a_i^j)-(\B^{\pi^e} Q)(s_i^j,a_i^j)\big)^2$
		\STATE Approximate Eq.~\eqref{eq:pg} as  $\nabla_\theta \hat{J}(\pi_\theta)=(1-\gamma)^{-1}\E_{(s,a)\sim\rho^{\pi_\theta}_{\rho_0}}[(\nabla_\theta\log\pi_\theta (a|s))\hat{Q}(s,a)]$ and update as $\theta_{e+1}=\theta_e + \alpha \nabla_\theta \hat{J}(\pi_\theta)$.
		\ENDFOR
	\end{algorithmic}
\end{algorithm}
In line 2, for episode index $e$, 
$\pi^e$ samples one trajectory of length $L.$ The transitions in this trajectory are then added to the replay buffer. Since the policies for different episodes are distinct, the collection of transitions in the replay buffer are generally inconsistent with the current policy. 

Line 3 is a supervised learning that minimizes the Bellman error of $\hat{Q}$, making $(\B^{\pi^e} \hat{Q})(s,a)\approx \hat{Q}(s,a)$ for $(s,a)$ in the replay buffer. When this holds, we will prove that the ``distance" between $\hat{Q}$ and $Q^{\pi^e}$ become smaller over the replay buffer. This is a crucial step for line 4 to be truly useful. In line 4, $Q^{\pi^e}$ is replaced by $\hat{Q}$, and the policy is updated accordingly.

In practice, line 3 is replaced by gradient-descent-based mini-batch updates \citep{Fujimoto2018td3,Haarnoja2018sac}.
For example, $\phi\leftarrow \phi - \alpha \nabla J(\hat{Q}_\phi)$ with $\phi$ and $J(\hat{Q}_\phi)$ being the parameter and the Bellman error of $\hat{Q},$ respectively. In section~\ref{sec:sample-select}, we show that a uniform-weighted mini-batch sampling biases learning towards older samples; this motivates the need for a countermeasure.

Note that Alg.~\ref{alg:offpolicy} seems to be inconsistent in the horizon because the Q function, Eq.~\eqref{eq:qfunc}, is defined in the infinite horizon but the trajectories are finite-length. In Corollary~\ref{cor:q-qpi_i}, we will address this inconsistency by approximating the Q function using finite-length trajectories.

\subsection{The construction of our behavior policy}
Although Alg.~\ref{alg:offpolicy} doesn't have a fixed behavior policy, in Lemma~\ref{lem:av-occu}, we construct a non-stationary policy that generates the averaged occupancy measure at every step. This describes the \emph{averaged-over-episode behavior} at every trajectory step $i$ of the historical trajectory $\{(s_i^e,a_i^e)\}_{i,e = 0,1}^{L-1,N}$. We hence define it to be our behavior policy and use it as a reference when analyzing Alg.~\ref{alg:offpolicy}.

The construction is as follows. Denote the state distribution at trajectory step $i$ in episode $e$ as $\rho_i^e(s) = \rho_i(s|\rho_0,~\pi^e,T)$. By Eq.~\eqref{eq:occu}, $\rho_{\rho_0}^{\pi^e}$ is the state occupancy measure generated by $(\rho_0,\pi^e,T)$. Intuitively, $\rho_{\rho_0}^{\pi^e}$ describes the discounted state distribution starting at trajectory step 0 in episode $e$.

More generally, define $\rho^{\pi^e}_{\rho_i^e}$ as the state occupancy measure generated by $(\rho_i^e,\pi^e,T)$. Then, $\rho^{\pi^e}_{\rho_i^e}$ describes the discounted state distribution starting at trajectory step $i$ in episode $e$.

\begin{equation}
\rho^{\pi^e}_{\rho_i^e}(s)
\triangleq (1-\gamma)\sum_{j=i}^\infty\gamma^{j-i}\rho_j^e(s)
\label{eq:occu_ie}
\end{equation}
Since Alg.~\ref{alg:offpolicy} considers trajectories from all episodes, the average-over-episodes distribution $N^{-1}\sum_e\rho_{\rho_i^e}^{\pi^e}$ is of interest. It describes the averaged-discounted state distribution at trajectory step $i$ over all episodes.

\paragraph{Behavior policy through averaging.}Let $\overline{\rho}_i^o$ be the average-over-episodes of $\{\rho_{\rho_i^e}^{\pi^e}\}_e$; i.e., $\overline{\rho}_i^o$ is the averaged occupancy measure if initialized at step $i$. To describe the average-over-episodes behavior at step $i$, we want a policy that generates $\overline{\rho}_i^o$, and Eq.~\eqref{eq:occu_ie} helps construct such a policy. Concretely, let $\overline{\rho}_i$, $\pi_i^D(a|s)$ be the averaged state distribution and the averaged policy at step $i$, respectively.
\begin{equation}
\overline{\rho}_i^o(s) 
\triangleq N^{-1}\sum_{e=1}^N\rho_{\rho_i^e}^{\pi^e}(s), 
\quad
\overline{\rho}_i(s)\triangleq N^{-1}\sum_{e=1}^N\rho_i^e(s),
\quad
\pi_i^D(a|s)\triangleq \frac{\sum_{e=1}^N\pi^e(a|s)\rho_{\rho_i^e}^{\pi^e}(s)}{\sum_{e=1}^N\rho_{\rho_i^e}^{\pi^e}(s)} .
\label{eq:av-dist}
\end{equation}
Lemma~\ref{lem:av-occu} shows $\pi_i^D$ in Eq.~\eqref{eq:av-dist} is a notion of behavior policy in the sense that $(\overline{\rho}_i,\pi_i^D,T)$ generates $\overline{\rho}_i^o$; i.e., $\pi_i^D$ generates the averaged occupancy measures when the initial state follows $\overline{\rho}_i$. Since $\pi_i^D$ describes the averaged discounted behavior starting at step $i$, we define it to be our behavior policy. This is a key to analyzing the policy evaluation error in Alg.~\ref{alg:offpolicy}.

\begin{lemma}
	Let $\rho_{\overline{\rho}_i}^{\pi_i^D}(s)$ be the normalized state occupancy measure generated by $(\overline{\rho}_i,\pi_i^D,T)$. Then  $\overline{\rho}_i^o(s)=\rho_{\overline{\rho}_i}^{\pi_i^D}(s)~a.e.$ and hence from Eq. \eqref{eq:av-dist}, $N^{-1}\sum_{e=1}^N\rho_{\rho_i^e}^{\pi^e}(s)\pi^e(a|s)=\rho_{\overline{\rho}_i}^{\pi_i^D}(s)\pi^D_i(a|s)~a.e.$
	\label{lem:av-occu}
\end{lemma}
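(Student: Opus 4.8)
The plan is to characterize the normalized state occupancy measure as the unique \emph{probability density} satisfying a one-step ``flow'' fixed-point equation, and then to check that $\overline{\rho}_i^o$ satisfies the flow equation attached to the triple $(\overline{\rho}_i,\pi_i^D,T)$. First I would record the flow equation: for any initial density $\nu$, policy $\pi$, and transition kernel $T$, peeling the $k=0$ term off the series defining the occupancy measure (reindexing \eqref{eq:occu_ie} by $k=j-i$) and using the one-step recursion $\rho_{k+1}(s')=\int_\S\int_\A T(s'|s,a)\pi(a|s)\rho_k(s)\,da\,ds$ gives
\begin{equation}
\rho^\pi_\nu(s') = (1-\gamma)\nu(s') + \gamma\int_\S\int_\A T(s'|s,a)\pi(a|s)\rho^\pi_\nu(s)\,da\,ds.
\label{eq:flow-plan}
\end{equation}
Since $(1-\gamma)\sum_{k\ge 0}\gamma^k=1$, $\rho^\pi_\nu$ is a probability density; moreover the affine map sending a density $\mu$ to $(1-\gamma)\nu$ plus $\gamma$ times the one-step pushforward of $\mu$ under $\pi$ and $T$ is a $\gamma$-contraction in total-variation distance on the complete space of probability densities on $\S$, so $\rho^\pi_\nu$ is the \emph{unique} probability density solving \eqref{eq:flow-plan}.

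Next I would show that $\overline{\rho}_i^o$ solves \eqref{eq:flow-plan} for the triple $(\overline{\rho}_i,\pi_i^D,T)$. Applying \eqref{eq:flow-plan} to each episode gives $\rho^{\pi^e}_{\rho_i^e}(s') = (1-\gamma)\rho_i^e(s') + \gamma\int_\S\int_\A T(s'|s,a)\pi^e(a|s)\rho^{\pi^e}_{\rho_i^e}(s)\,da\,ds$. Averaging over $e=1,\dots,N$ and substituting the definitions in \eqref{eq:av-dist} — in particular the algebraic identity $N^{-1}\sum_e \pi^e(a|s)\rho^{\pi^e}_{\rho_i^e}(s)=\pi_i^D(a|s)\,\overline{\rho}_i^o(s)$, which holds for every $(s,a)$ once $\pi_i^D$ is set arbitrarily on the $\overline{\rho}_i^o$-null set $\{s:\sum_e\rho^{\pi^e}_{\rho_i^e}(s)=0\}$, since the numerator also vanishes there — yields
\begin{equation}
\overline{\rho}_i^o(s') = (1-\gamma)\overline{\rho}_i(s') + \gamma\int_\S\int_\A T(s'|s,a)\pi_i^D(a|s)\,\overline{\rho}_i^o(s)\,da\,ds.
\label{eq:flow-avg-plan}
\end{equation}
As an average of probability densities, $\overline{\rho}_i^o$ is itself a probability density, so the uniqueness established above forces $\overline{\rho}_i^o=\rho^{\pi_i^D}_{\overline{\rho}_i}$ a.e. The joint statement is then immediate: by the definition of $\pi_i^D$ and the a.e. identity just proved, $N^{-1}\sum_e\rho^{\pi^e}_{\rho_i^e}(s)\pi^e(a|s)=\overline{\rho}_i^o(s)\pi_i^D(a|s)=\rho^{\pi_i^D}_{\overline{\rho}_i}(s)\pi_i^D(a|s)$ a.e.

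The main obstacle is the uniqueness claim for \eqref{eq:flow-plan} on a continuous state space: one must verify carefully that the flow operator is a well-defined $\gamma$-contraction on the complete metric space of probability densities on $\S$ (equivalently, of signed measures of bounded variation) under the $L^1$/total-variation norm, and that its fixed point is correctly normalized. A secondary subtlety, noted above, is the treatment of the measure-zero set on which the denominator defining $\pi_i^D$ vanishes; this is harmless because every product entering \eqref{eq:flow-avg-plan} vanishes on that set and the conclusion is asserted only a.e.
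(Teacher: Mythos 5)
Your proposal is correct and follows essentially the same route as the paper: you characterize the occupancy measure as the fixed point of the Bellman flow equation, use the $\gamma$-contraction in total-variation distance to get uniqueness, average the per-episode flow equations, and identify the averaged equation with the one generated by $(\overline{\rho}_i,\pi_i^D,T)$. The only addition is your explicit handling of the null set where the denominator defining $\pi_i^D$ vanishes, which the paper leaves implicit in its a.e.\ statement.
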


\section{Related work}
There are two main approaches to
off-policy RL: importance sampling (IS) \citep{tokdar2010importance} and regression-based approach. IS has a low bias but high variance, while the opposite holds for the regression-based approach. Prior work also suggests that combining IS and the regression-based yields robust results \citep{Dudik2011double,jiang2016double,thomas2016double, Kallus2020double}. Below, we briefly review these techniques.

{\bf Importance Sampling.} Standard IS uses behavior policy to form an importance weight from the probability ratio of the current to the behavior policy. For a fully accessible behavior policy, examples of this approach include: the naive IS weight \citep{sutton2000offpolicy}, importance weight clipping \citep{schmidt2020offpolicy} and the marginalized importance weight \citep{xie2019offpolicy,yin2020offtabular,yin2021offpolicy}. Alternatively, one can use the density ratio of the occupancy measures of the current to the behavior policies \citep{liu2018offpolicy}. This is estimated using a maximum entropy approach \citep{lazic2020maximum}, a Lagrangian approach \citep{yang2020off}, or a variational approach \citep{sinha2020experience}. A distinct approach emphasizes experience without considering probability ratios. Examples include emphasizing samples with a higher TD error \citep{Schaul2016Prioritized,horgan2018distributed}, emphasizing recent experience \citep{wang2019boosting} or updating the policy towards the past and discarding distant experience \citep{novati2019ref}.

{\bf Regression-based.} A regression-based approach can achieve strong experimental results using proper policy exploration and good function approximation \citep{Fujimoto2018td3, Haarnoja2018sac}. It also admits strong theoretical results such as generalization error using a concentration coefficient \citep{le2019batch}, policy evaluation error using bounded probability ratios \citep{agarwal2019rlbook}[Chap 5.1], minimax optimal bound under linear function approximation \citet{duan2020offpolicy}, confidence bounds constructed by kernel Bellman loss \citet{feng2020kernel}, and sharp sample complexity in asynchronous setting \citep{Li2020asynch}. However, these settings require a fixed behavior policy and bounded probability ratios (i.e., the behavior policy should be sufficiently exploratory). Both requirements rarely hold in practice. To avoid this issue, we construct a non-stationary behavior policy that generates the occupancy measure and use it to analyze the policy evaluation error. This allows us to avoid assumptions and to work with continuous state and action spaces.

\section{Policy evaluation error of off-policy actor-critic algorithms}
\label{sec:error-analysis}
Let $Q^*$, $Q^{\pi^N}$ be the Q function of the optimal policy and $\pi^N$, respectively. Let $\hat{Q}$ be the estimated Q function. The performance error $|\hat{Q}-Q^*|$ decomposes into $|\hat{Q}-Q^{\pi^N}|+|Q^{\pi^N}-Q^*|$.
The first term $|\hat{Q}-Q^{\pi^N}|$ is the policy evaluation error and is the focus in the off-policy evaluation literature \citep{duan2020offpolicy}. The second term is the policy's optimality gap and to bound this term currently requires strong assumptions such as tabular or linear MDPs \citep{jin2018qlearning,jin2020qlearning}. In an off-policy actor-critic setting, the policy evaluation error has not been analyzed adequately since most analysis requires a fixed behavior policy. This is the focus of our analysis.

Suppose we are given the trajectories sampled in the past episodes (the replay buffer). We analyze the policy evaluation error over the expected distribution of transitions. We express this error in terms of the Bellman error of $\hat{Q}$, the bias term in 1-Wasserstein distance between the policies $(\pi^N,\pi^D_i),$ and the variance term in the number of trajectories $N$. Note $\pi^D_i$ is the behavior policy at trajectory step $i$ defined in Eq.~\eqref{eq:av-dist}. The use of 1-Wasserstein distance \citep{Villani2008opt_trans} makes the results applicable to both stochastic and deterministic policies. Since supervised learning only makes the Bellman error small, we need good sampling strategies to mitigate the bias and variance terms. We hence investigate sampling techniques from a bias-variance perspective in the next section.

\subsection{Problem setup}
\label{sec:setup}
\paragraph{Notation.}
In episode $e,$ a length-$L$ trajectory $\{(s_i^e,a_i^e)\}_{i=0}^{L-1}$ following policy $\pi^e$ is sampled
(Alg.~\ref{alg:offpolicy}, line 2). Then, $\hat{Q}$ is fitted over the replay buffer (line 3). Because the error of $\hat{Q}$ at step $i$ depends on the states sampled at steps $i, i+1,\dots$, the importance of these samples $(s,a)$ depends on the trajectory step $i$. Also, due to the discount factor, the importance of step $j>i$ is discounted by $\gamma^{j-i}$ relative to step $i$. Hence, we will use a Bellman error and a policy mismatch error that reflects the dependency on the trajectory step and the discount factor. For $f:~\S\times\A\rightarrow\mathbb{R}$ and $g:~\S\rightarrow\mathbb{R}$, define an averaging-discounted operator over the replay buffer in $N$ episodes:
$$
\tilde{E}_i^{L}f(\cdot,\cdot) 
\triangleq 
\frac{1}{N}\sum_{e=1}^N\sum_{j=i}^{L-1}(1-\gamma)\gamma^{j-i} f(s_j^e,a_j^e)\quad\text{and}\quad
\tilde{E}_i^{L}g(\cdot) 
\triangleq 
\frac{1}{N}\sum_{e=1}^N\sum_{j=i}^{L-1}(1-\gamma)\gamma^{j-i} g(s_j^e)
$$
Using $\tilde{E}_i^{L}$, the Bellman error of $\hat{Q}$ and the distance between the policies 
$(\pi^N,\pi^D_i)$ on the replay buffer are written as
\begin{equation}
\begin{split}
&\epsilon^{i,L}_{\hat{Q}}=\tilde{E}_i^L\Big|\hat{Q}(\cdot,\cdot)-(\B^{\pi^N} \hat{Q})(\cdot,\cdot)\Big|,\\
&W_1^{i,L}=\tilde{E}_i^L\wass{\pi^N}{\pi^D_i}(\cdot).
\end{split}
\label{eq:terr-w1err}
\end{equation}
$\wass{\pi^N}{\pi^D_i}(s)=\wass{\pi^N(\cdot|s)}{\pi^D_i(\cdot|s)}$ is the 1-Wasserstein distance between two policies at state $s$, which can be viewed as a function of $(s,a)$. Both the Bellman error $\epsilon^{i, L}_{\hat{Q}}$ and the policy mismatch error $W_1^{i, L}$ depend on trajectory step $i$ and are both discounted by $\gamma$.

\paragraph{Assumptions.}
We now relate the Bellman error and the policy mismatch error defined in Eq.~\eqref{eq:terr-w1err} to the policy evaluation error $|\hat{Q}-Q^{\pi^N}|.$ To do so requires some assumptions. First, to control the error of $\hat{Q}$ by policy mismatch error in $W_1$ distance, we assume that for every state, $\hat{Q}$ and $Q^{\pi^N}$ are $L_{\A}$-Lipschitz over actions. We provide reasoning for this assumption in the later discussion. Next, observe that both quantities in Eq.~\eqref{eq:terr-w1err} are random with sources of randomness from initial states,~policies at different episodes, and the state transitions. 
To control this randomness, we need assumptions on the Q functions and apply a concentration inequality. Because Alg.~\ref{alg:offpolicy} only samples \emph{one} trajectory in each episode, higher-order quantities (e.g., variance) is unavailable. This motivates us to use first-order quantities (e.g., a uniform bound on the Q functions) and apply Hoeffding's inequality. Hence, we assume that $\hat{Q}(s,a)$ and $Q^{\pi^N}(s,a)$ are bounded in the interval $[0,r^{\max}/(1-\gamma)]$ and that the action space is bounded with the diameter $\diam_{\A}$. A justification of these assumptions is provided in Appendix~\ref{appendix:assump}.

\subsection{Policy evaluation error}
\label{sec:analysis}
Observe that the Q function, Eq.~\eqref{eq:qfunc}, is defined on infinite-length trajectories while the errors, Eq.~\eqref{eq:terr-w1err}, are evaluated on length-$L$ trajectories. Discounting makes it possible to approximate the Q functions using finite-length $L$. To approximate the Q function at trajectory step $i$ up to a constant error, we need samples at least to step $i+\Omega((1-\gamma)^{-1})$. Hence, we first prove the main theorem with $i=0$ and $L=\infty$. Then, we generalize the result to $i\geq0$ and finite $L$ in Corollary~\ref{cor:q-qpi_i}.

\begin{theorem}
	Let $N$ be the number of episodes. For $f\colon \S\times \A \to \mathbb{R},$ define the operator
	$\tilde{E}_0^{\infty}f \triangleq \frac{1}{N}\sum_{e=1}^N\sum_{i=0}^\infty(1-\gamma)\gamma^i f(s_i^e,a_i^e)$. Denote the policy mismatch error and the Bellman error as $W_1^{0,\infty}=\tilde{E}_0^{\infty}\wass{\pi^N}{\pi^D_0}(\cdot)$ and $\epsilon^{0,\infty}_{\hat{Q}}=\tilde{E}_0^{\infty}|\hat{Q}(\cdot,\cdot)-(\B^{\pi^N} \hat{Q})(\cdot,\cdot)|$, respectively. Assume
	\begin{itemize}
		\item[(1)] For each $s\in \S,$ $\hat{Q}(s,a)$ and $Q^{\pi^N}(s,a)$ are $L_{\A}$-Lipschitz in $a.$
		
		\item[(2)] $\hat{Q}(s,a)$ and $Q^{\pi^N}(s,a)$ are bounded and take values in $[0,r^{\max}/(1-\gamma)]$.
		
		\item[(3)] The action space is bounded with diameter $\diam_{\A}<\infty$. 	
	\end{itemize}
	Then, with probability at least $1-\delta$,
	\begin{align*}
	&\E_{(s_0,a_0)\sim\rho_0(s)\pi^D_0(a|s)} \Big| \hat{Q}(s_0,a_0)-Q^{\pi^N}(s_0,a_0) \Big|\\
	\leq& \frac{r^{\max}}{1-\gamma}\sqrt{\frac{1}{2N}\log\frac{2}{\delta}}+\Big(\frac{r^{\max}}{(1-\gamma)^2}+\frac{2L_{\A}\diam_{\A}}{1-\gamma}\Big)\sqrt{\frac{2}{N}\log \frac{2}{\delta}}+ \frac{1}{1-\gamma}\Big( \epsilon_{\hat{Q}}^{0,\infty}+ 2L_{\A}W_1^{0,\infty}\Big).
	\end{align*}
	\label{thm:q-qpi}
\end{theorem}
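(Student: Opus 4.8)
The plan is to write $\hat Q-Q^{\pi^N}$ as a discounted Bellman-residual series, re-route the off-policy mismatch through $\pi^D_0$ so that only the Lipschitz objects $\hat Q,Q^{\pi^N}$ meet the $1$-Wasserstein step, identify the resulting occupancy measure with the replay buffer through Lemma~\ref{lem:av-occu}, and finish with Hoeffding's inequality. Set $g\triangleq\hat Q-\B^{\pi^N}\hat Q$ and $\Delta\triangleq\hat Q-Q^{\pi^N}$. Because $Q^{\pi^N}$ is the fixed point of $\B^{\pi^N}$, $\Delta=g+\gamma P^{\pi^N}\Delta$ where $P^{\pi^N}$ is one step of $(T,\pi^N)$, so $\Delta=\sum_{k\ge0}\gamma^k(P^{\pi^N})^kg$; since we need $\E|\Delta|$ rather than $|\E\Delta|$, I keep the absolute value inside: $|\Delta|\le\sum_{k\ge0}\gamma^k(P^{\pi^N})^k|g|$ pointwise.

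This rollout is under $\pi^N$, whereas the replay buffer follows the historical policies whose averaged behavior is $\pi^D_0$ from Eq.~\eqref{eq:av-dist} (a legitimate stationary policy). I would therefore split $\Delta=(\hat Q-Q^{\pi^D_0})+(Q^{\pi^D_0}-Q^{\pi^N})$. The residual of $\hat Q$ with respect to $\pi^D_0$ differs from $g$ by $\gamma\,\E_{s'\sim T(\cdot|s,a)}[(\E_{\pi^N}-\E_{\pi^D_0})\hat Q(s',\cdot)]$, which assumption~(1) and Kantorovich--Rubinstein duality bound in magnitude by $\gamma L_\A\,\E_{s'\sim T(\cdot|s,a)}\wass{\pi^N}{\pi^D_0}(s')$; and $h\triangleq Q^{\pi^D_0}-Q^{\pi^N}$ satisfies $h=\gamma P^{\pi^D_0}h+\mathrm{err}$ with $|\mathrm{err}(s,a)|\le\gamma L_\A\,\E_{s'\sim T(\cdot|s,a)}\wass{\pi^N}{\pi^D_0}(s')$ (assumption~(1), now on $Q^{\pi^N}$). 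Crucially, the Lipschitz/Wasserstein step only ever touches $\hat Q$ and $Q^{\pi^N}$, never the possibly non-Lipschitz residual $g$. Unrolling both pieces with $\pi^D_0$-rollouts gives, pointwise, $|\Delta(s,a)|\le\sum_{k\ge0}\gamma^k\big[(P^{\pi^D_0})^k W\big](s,a)$ with $W(s,a)=|g(s,a)|+2\gamma L_\A\,\E_{s'\sim T(\cdot|s,a)}\wass{\pi^N}{\pi^D_0}(s')$.

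Taking $\E_{(s_0,a_0)\sim\rho_0\pi^D_0}$, the $(1-\gamma)\gamma^k$-weighted sum of the $k$-step pushforwards of $\rho_0(s)\pi^D_0(a|s)$ under $(T,\pi^D_0)$ is exactly $\rho^{\pi^D_0}_{\rho_0}(s)\pi^D_0(a|s)$, and Lemma~\ref{lem:av-occu} at $i=0$ (where $\overline\rho_0=\rho_0$) equates this with the buffer distribution $N^{-1}\sum_e\rho^{\pi^e}_{\rho_0}(s)\pi^e(a|s)$. The next-state-pushed Wasserstein term is absorbed by the occupancy flow identity $\rho^{\pi^D_0}_{\rho_0}(s')=(1-\gamma)\rho_0(s')+\gamma\!\int\!\rho^{\pi^D_0}_{\rho_0}(s)\pi^D_0(a|s)T(s'|s,a)\,ds\,da$, which gives $\E_{s'\sim T}\wass{\pi^N}{\pi^D_0}(s')\le\gamma^{-1}\E_{s'\sim\rho^{\pi^D_0}_{\rho_0}}\wass{\pi^N}{\pi^D_0}(s')$, cancelling the stray $\gamma$. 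The outcome is the deterministic bound $\E_{(s_0,a_0)\sim\rho_0\pi^D_0}|\Delta|\le(1-\gamma)^{-1}(\overline\epsilon+2L_\A\overline W_1)$, where $\overline\epsilon,\overline W_1$ are the expectations over the trajectory sampling of $\epsilon^{0,\infty}_{\hat Q},W_1^{0,\infty}$ (using $\E\,\tilde{E}_0^{\infty}f=N^{-1}\sum_e\E_{(s,a)\sim\rho^{\pi^e}_{\rho_0}\pi^e}f$). Finally, $\epsilon^{0,\infty}_{\hat Q}$ and $W_1^{0,\infty}$ are each an average over the $N$ episodes of a per-episode term bounded by $r^{\max}/(1-\gamma)$ for the Bellman residual (assumption~(2) puts both $\hat Q$ and $\B^{\pi^N}\hat Q$ in $[0,r^{\max}/(1-\gamma)]$) and by $\diam_\A$ for a Wasserstein distance between action distributions on a set of that diameter (assumption~(3)); Hoeffding's inequality and a union bound replace $\overline\epsilon,\overline W_1$ by $\epsilon^{0,\infty}_{\hat Q},W_1^{0,\infty}$, the $(1-\gamma)^{-1}$ prefactor inflating the ranges into the coefficients $r^{\max}/(1-\gamma)^2$ and $2L_\A\diam_\A/(1-\gamma)$ of the $\sqrt{N^{-1}\log(2/\delta)}$ terms.

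The crux is the middle two steps: the mismatch has to be routed so that (i) the bound is on $\E|\Delta|$, not $|\E\Delta|$, which forbids the slick one-line stationary-distribution argument and forces the pointwise series; (ii) the $1$-Wasserstein/Lipschitz step never hits the residual $g$ (not assumed Lipschitz), only $\hat Q$ and $Q^{\pi^N}$; and (iii) the occupancy measure that emerges is precisely the one Lemma~\ref{lem:av-occu} matches to the buffer, with the ``next-state'' Wasserstein term folded back via the flow identity. The concentration step also warrants care, since $\hat Q$ and $\pi^N$ are themselves functions of the sampled trajectories; one proceeds conditionally on them and uses only the boundedness from assumptions (2)--(3).
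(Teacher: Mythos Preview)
Your proposal is essentially the paper's proof: the same split $\hat Q - Q^{\pi^N} = (\hat Q - Q^{\pi^D_0}) + (Q^{\pi^D_0} - Q^{\pi^N})$, the same Kantorovich--Rubinstein/Lipschitz handling of the policy switch (the paper's Lemmas~\ref{lem:qpid-qpi} and~\ref{lem:q-qpid}), the same fold-back of the next-state Wasserstein term via the flow identity, the same occupancy identification through Lemma~\ref{lem:av-occu}, and concentration at the end. Two divergences are worth flagging. First, the paper interposes the empirical initial distribution $\hat\rho_0$ before unrolling, which produces the separate term $\tfrac{r^{\max}}{1-\gamma}\sqrt{\tfrac{1}{2N}\log\tfrac{2}{\delta}}$ via Hoeffding on the i.i.d.\ initial states and then concentrates the rest conditionally on $\{s_0^e\}$; your one-shot route bypasses this detour and, if carried through, yields the stated bound without that extra summand---still proving the theorem, just slightly sharper. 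Second, and more substantively, you invoke plain Hoeffding for the final step, but the episodes are \emph{not} independent: each $\pi^e$ is a function of earlier trajectories, and conditioning on $\hat Q,\pi^N$ (which depend on \emph{all} episodes) does not restore independence. The paper handles this via Azuma--Hoeffding on the martingale differences $M^e$ adapted to the episode filtration (Lemma~\ref{lem:mgd}); with that replacement your argument goes through.
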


Theorem~\ref{thm:q-qpi} expresses the error of $\hat{Q}$ as the sum of Bellman error, bias, and variance terms. To be more specific, the first two terms are understood as the ``variance from sampling" because these decrease in the number of episodes $N$. 
On the other hand, $W_1^{0,\infty}$ is the ``policy mismatch bias" w.r.t. $\pi^N$. 
Because the behavior policy $\pi_0^D$ is a mixture of the historical policies 
$\{\pi^e\}_{e=1}^N,$ Eq.~\eqref{eq:av-dist},
we expect it to increase in $N$ until $\pi^N$ begins to converge.

Theorem~\ref{thm:q-qpi} only indicates the difference between $\hat{Q}$ and $Q^{\pi^N}$ at $i=0$ and infinite-length trajectories. We can generalize it to $i\geq 0$ and finite-length trajectories as follows. Recall from Eq.~\eqref{eq:av-dist} and Lemma~\ref{lem:av-occu}, the average state distribution at the $i$-th step, $\overline{\rho}_i,$ and the behavior policy at the $i$-th step, $\pi^D_i,$ generate the average state occupancy measure, i.e., 
$\rho_{\overline{\rho}_i}^{\pi^D_i}=N^{-1}\sum_{e=1}^N\rho_{\rho_i^e}^{\pi^e}~a.e.$ 
Therefore, by restricting attention to the states sampled at time steps $i, i+1,\dots$ the ``initial state distribution" and the behavior policy become $\overline{\rho}_i$ and $\pi^D_i$, which generalizes Theorem~\ref{thm:q-qpi} from 
$i=0$ to $i\geq 0$. In addition, due to 
$\gamma$-discounting, we may use length-$L$ trajectories to approximate infinite-length ones, provided that $L\geq i + \Omega((1-\gamma)^{-1})$. These observations lead to the following corollary.

\begin{corollary}
	Fix assumptions (1) (2) (3) of Theorem~\ref{thm:q-qpi}. Rewrite the policy mismatch error and the Bellman error as $W_1^{i,L}=\tilde{E}_i^L\wass{\pi^N}{\pi^D_i}(\cdot)$ and $\epsilon_{\hat{Q}}^{i,L}=\tilde{E}_i^L|\hat{Q}(\cdot,\cdot)-(\B^{\pi^N} \hat{Q})(\cdot,\cdot)|$, respectively. Note $\tilde{E}_i^L$ is defined in Eq.~\eqref{eq:terr-w1err}. Then, with probability at least $1-\delta$,
	\begin{align*}
	&\E_{(s_i,a_i)\sim\overline{\rho}_i(s)\pi^D_i(a|s)} \Big| \hat{Q}(s_i,a_i)-Q^{\pi^N}(s_i,a_i) \Big|\leq\\
	& {\scriptsize \frac{r^{\max}}{1-\gamma}\sqrt{\frac{1}{2N}\log\frac{2}{\delta}}+\Big(\frac{r^{\max}}{(1-\gamma)^2}+\frac{2L_{\A}\diam_{\A}}{1-\gamma}\Big)\Big(\sqrt{\frac{2}{N}\log \frac{2}{\delta}}+\gamma^{L-i}\Big)+ \frac{1}{1-\gamma}\Big( \epsilon_{\hat{Q}}^{i,L}+ 2L_{\A}W_1^{i,L}\Big).}
	\end{align*}
	Moreover, if $i\leq L-\frac{\log \epsilon}{\log\gamma}$ with $0<\epsilon<1$ and the constants are normalized as $L_{\A}=c/(1-\gamma)$, $\epsilon_{\hat{Q}}^{i,L}=\xi_{\hat{Q}}^{i,L}/(1-\gamma)$, then, with probability at least $1-\delta$,
	\begin{align*}
	&\E_{(s_i,a_i)\sim\overline{\rho}_i(s)\pi^D_i(a|s)} \Big| \hat{Q}(s_i,a_i)-Q^{\pi^N}(s_i,a_i) \Big|\\
	\leq&\tilde{O}\Big(\frac{1}{(1-\gamma)^2}\Big((r^{\max}+c\cdot\diam_{\A})(1/ \sqrt{N}+\epsilon)+\xi_{\hat{Q}}^{i,L}+c\cdot W_1^{i,L}\Big)\Big),
	\end{align*}
	where $\tilde{O}(\cdot)$ is a variant of $O(\cdot)$ that ignores logarithmic factors.
	\label{cor:q-qpi_i}
\end{corollary}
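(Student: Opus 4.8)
The plan is to obtain Corollary~\ref{cor:q-qpi_i} from Theorem~\ref{thm:q-qpi} by two reductions: a re-indexing that moves the ``time origin'' from trajectory step $0$ to step $i$, and a geometric-tail estimate that replaces infinite-length trajectories by length-$L$ ones. The normalized statement then follows by plugging in the stated substitutions.

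\emph{Re-indexing.} Fix $i\ge 0$. By Eq.~\eqref{eq:occu_ie}, $\rho_{\rho_i^e}^{\pi^e}$ is the occupancy measure produced by running $\pi^e$ from the step-$i$ state law $\rho_i^e$, and Lemma~\ref{lem:av-occu} gives $N^{-1}\sum_e\rho_{\rho_i^e}^{\pi^e}(s)\pi^e(a|s)=\rho_{\overline{\rho}_i}^{\pi^D_i}(s)\pi^D_i(a|s)$ a.e.\ --- exactly the $i$-shifted analogue of the averaging identity the proof of Theorem~\ref{thm:q-qpi} uses at $i=0$ (there $\overline{\rho}_0=\rho_0$ because every episode starts from $\rho_0$). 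I would therefore rerun the proof of Theorem~\ref{thm:q-qpi} verbatim with $\rho_0\mapsto\overline{\rho}_i$, $\pi^D_0\mapsto\pi^D_i$, and $\tilde{E}_0^\infty\mapsto\tilde{E}_i^\infty$. The Bellman/telescoping manipulations and the Hoeffding step are insensitive to this shift because the weights $(1-\gamma)\gamma^{j-i}$ still sum to one and the range bounds used in Hoeffding come only from assumptions (2)(3), which do not involve $i$. This gives the bound of Theorem~\ref{thm:q-qpi} with $\epsilon_{\hat{Q}}^{0,\infty},W_1^{0,\infty}$ replaced by $\epsilon_{\hat{Q}}^{i,\infty},W_1^{i,\infty}$.

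\emph{Truncation.} For bounded $f$ one has $|\tilde{E}_i^\infty f-\tilde{E}_i^L f|\le\|f\|_\infty\sum_{j\ge L}(1-\gamma)\gamma^{j-i}=\|f\|_\infty\gamma^{L-i}$. Applying this with $f=|\hat{Q}-\B^{\pi^N}\hat{Q}|$, which is bounded by $r^{\max}/(1-\gamma)$ (assumption (2) bounds $\hat{Q}$, and $\B^{\pi^N}\hat{Q}=r+\gamma\E\hat{Q}$ lies in the same interval), and with $f=\wass{\pi^N}{\pi^D_i}$, which is bounded by $\diam_{\A}$ since a $1$-Wasserstein distance cannot exceed the diameter of the action space, yields $\epsilon_{\hat{Q}}^{i,\infty}\le\epsilon_{\hat{Q}}^{i,L}+\tfrac{r^{\max}}{1-\gamma}\gamma^{L-i}$ and $W_1^{i,\infty}\le W_1^{i,L}+\diam_{\A}\gamma^{L-i}$. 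Substituting into the re-indexed bound and collecting the $\gamma^{L-i}$ contributions reproduces the extra additive term $\big(\tfrac{r^{\max}}{(1-\gamma)^2}+\tfrac{2L_{\A}\diam_{\A}}{1-\gamma}\big)\gamma^{L-i}$, which is the first displayed inequality of the corollary. For the second, substitute $L_{\A}=c/(1-\gamma)$ and $\epsilon_{\hat{Q}}^{i,L}=\xi_{\hat{Q}}^{i,L}/(1-\gamma)$, use that $i\le L-\log\epsilon/\log\gamma$ is equivalent to $\gamma^{L-i}\le\epsilon$, bound $\tfrac{1}{1-\gamma}\le\tfrac{1}{(1-\gamma)^2}$, and absorb $\log(2/\delta)$ into $\tilde{O}(\cdot)$.

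The main obstacle is the re-indexing step: one must verify that no part of the proof of Theorem~\ref{thm:q-qpi} implicitly relies on $\rho_0$ being the genuine initial distribution rather than an average of per-episode step-$i$ laws, or on the summation starting at $0$. This works precisely because Lemma~\ref{lem:av-occu} is stated for all $i$, so the behavior-policy identity --- the only place where the reference distribution enters the argument --- is available in the required shifted form; everything else (the Bellman-error bookkeeping and the first-order Hoeffding concentration over the $N$ episodes) is translation-invariant in the trajectory index. The truncation and normalization steps are then routine.
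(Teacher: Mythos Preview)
Your proposal is correct and follows essentially the same route as the paper's own proof: re-index Theorem~\ref{thm:q-qpi} via the substitution $(\overline{\rho}_0,\pi^D_0,\tilde{E}_0^\infty)\to(\overline{\rho}_i,\pi^D_i,\tilde{E}_i^\infty)$ using Lemma~\ref{lem:av-occu}, then truncate with the geometric-tail bound $\tilde{E}_i^\infty f\le\tilde{E}_i^L f+\gamma^{L-i}\|f\|_\infty$, and finally plug in the normalizations. Your explicit bounds on $\|\hat{Q}-\B^{\pi^N}\hat{Q}\|_\infty$ and $\|\wass{\pi^N}{\pi^D_i}\|_\infty$ and your discussion of why the re-indexing is legitimate are in fact more detailed than what the paper writes out.
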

Note that if $i=0$ and $L=\infty,$
Corollary~\ref{cor:q-qpi_i} is identical to Theorem~\ref{thm:q-qpi}. 
Because the Bellman error of $\hat{Q}$ and the bias of the policy are both evaluated using the averaging-discounted operator $\tilde{E}_i^L$,  Corollary~\ref{cor:q-qpi_i} implies the difference of $\hat{Q}$ and $Q^{\pi^N}$ at trajectory step $i$ mainly depends on states at trajectory steps
$\geq i$. Since the Q function is a discounted sum of rewards from the current step to the future, the error at step $i$ should mainly depend on the steps $\geq i$.

\paragraph{Normalization.}
Although the first conclusion in Corollary~\ref{cor:q-qpi_i} gives a bound on the error of $\hat{Q}$, the constants may implicitly depend on the expected horizon $(1-\gamma)^{-1}.$ Hence its interpretation requires care. For instance, $L_{\A}$, the Lipschitz constant of the Q functions w.r.t. actions, is probably the most tricky constant. While it is used extensively in the prior work 
\citep{luo2018slbo,xiao2019learning,Ni2019metric,yu2020mopo},
its magnitude is never properly addressed in the literature. 
Intuitively, if a policy $\pi$ is good enough, it should quickly correct some disturbance on actions. In this case, the rewards after the disturbance only differ in a few trajectory steps, so the Lipschitzness of $Q^\pi$ in actions is sublinear in $(1-\gamma)^{-1}$. On the other hand, if the policy $\pi$ fails to correct a disturbance $\delta$, due to error propagation, the error $\delta$ propagates to every future step, leading to a linear error dependency to the horizon $O(\delta H)$. Therefore, the Lipschitzness of $Q^\pi$ in actions can be as large as $O((1-\gamma)^{-1})$. In addition to $L_{\A}$, the Bellman error 
$\epsilon_{\hat{Q}}^{i,L}$ should scale linearly in $(1-\gamma)^{-1}$ because the Q function represents the discounted cumulative reward, Eq.~\eqref{eq:qfunc}, which scales linearly in $(1-\gamma)^{-1}$. 
These observations suggest that $\epsilon_{\hat{Q}}^{i, L}$ and $L_{\A}$ in Corollary~\ref{cor:q-qpi_i} are either linear in the horizon or lie between sublinear and linear. To better capture the dependency on the horizon, we normalize the constants and get the second conclusion. 

\paragraph{Interpretation.}The second conclusion shows the approximation error from infinite to finite-length trajectories is bounded by a constant for $i\leq L-\frac{\log \epsilon}{\log \gamma}\overset{\text{Taylor}}{\approx}L-\Omega((1-\gamma)^{-1})$, and will become harder to control for the higher $i$ due to the lack of samples. Besides, the variance from sampling $\tilde{O}(1/\sqrt{N})$ and the bias from policy mismatch $W_1^{i,L}$ correspond to the same order of the expected horizon $(1-\gamma)^{-1}$. This means that the relative magnitude between them is irrelevant to the expected horizon and may largely depend on $N$. The variance term dominates when $N$ is small, while the bias term dominates when $N$ is large. Therefore, one may imagine that the training of Alg.~\ref{alg:offpolicy} has two phases. At phase 1, the variance term dominates and decreases in $N$, so the learning improves quickly as more trajectories are collected. At phase 2, the bias term dominates, so $\pi^N$ becomes harder to improve and tends to converge. The tendency of convergence makes the bias smaller, meaning that the error of $\hat{Q}$ becomes smaller, too. Therefore, the policy can still slowly improve in $N$ at phase 2. This explains why experiments (Figure~\ref{fig:mujoco}) usually show a quick performance improvement at the beginning followed by a drastic slowdown.

\section{Practical sampling strategies}
\label{sec:application}

As previously mentioned, the supervised learning during the Q function update fails to control the bias and variance. We need careful sampling techniques during the sampling from the replay buffer to mitigate the policy evaluation error. In particular, \citet{wang2019boosting} proposes to emphasize recent experience (ERE) because the recent policies are closer to the latest policy. We show below that the ERE strategy is a refinement of 1/age weighting and that both methods help balance the expected selection number of each training transition $(s, a)$. Balanced selection numbers reduce both the policy mismatch bias and sampling variance. Hence, this suggests the potential usefulness of ERE and 1/age, which we verify through experiments in the last subsection.

\subsection{Emphasizing recent experience}
\label{sec:ere}

In \citet{wang2019boosting}, the authors use a length-$K$ trajectory ($K$ may differ across episodes) and perform $K$ updates. In the $k$-th ($1\leq k \leq K$) update, a mini-batch is sampled uniformly from the most recent $c_k=\max(N_0 \eta^{k\frac{L_0}{K}}, c_{\min})$ samples, where $N_0$ is the current size of the replay buffer, 
$L_0$ is the maximum horizon of the environment, $\eta$ is the decay parameter, and $c_{\min}$ is the minimum coverage of the sampling. For MuJoCo \citep{Todorov2012mujoco} environments, the paper suggests the values: $(L_0,~\eta,~c_{\min})=(1000,~0.996,~5000)$. 
One can see that $\eta=1$ does a uniform weighting over the replay buffer, and the emphasis on the recent data becomes larger as $\eta$ becomes smaller. To see how the ERE strategy affects the mini-batch sampling, we prove the following result.
\begin{proposition}
	ERE is approximately equivalent (Taylor Approx.) to the non-uniform weighting: 
	\begin{equation}
	w_t \propto \frac{1}{\max(t,c_{\min},N_0\eta^{L_0})}-\frac{1}{N_0} +\frac{\mathbbm{1}(t\leq c_{\min})}{c_{\min}} \max\Big(\ln \frac{c_{\min}}{N_0\eta^{L_0}},~0\Big),
	\label{eq:ere_apx}
	\end{equation}
	where $t$ is the age of a data point relative to the newest time step; i.e., $w_1$ is the newest sample.
	\label{prop:ere}
\end{proposition}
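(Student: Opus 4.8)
The plan is to translate the ERE update schedule into an expected selection count per data point, and then recognize the resulting weight as a Riemann sum that Taylor-approximates the claimed closed form. First I would fix an episode with current buffer size $N_0$ and recall that ERE performs $K$ mini-batch updates, where in the $k$-th update the batch is drawn uniformly from the most recent $c_k=\max(N_0\eta^{k L_0/K},c_{\min})$ samples. A data point of age $t$ is eligible for the $k$-th update iff $t\le c_k$, and when eligible it is picked with probability proportional to $1/c_k$ (uniform over $c_k$ candidates). So the expected number of times the age-$t$ point is selected across the episode is proportional to $w_t \propto \sum_{k=1}^{K}\frac{\mathbbm{1}(t\le c_k)}{c_k}$. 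Everything reduces to evaluating this sum.

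Next I would split the sum according to which branch of the $\max$ is active. Let $k^\star$ be the largest $k$ with $N_0\eta^{k L_0/K}\ge c_{\min}$, i.e. $k^\star \approx \frac{K}{L_0}\log_\eta(c_{\min}/N_0)$ (valid when $c_{\min}<N_0$; otherwise $c_{\min}$ always dominates and only the indicator term survives). For $k\le k^\star$ we have $c_k=N_0\eta^{k L_0/K}$, which ranges from roughly $N_0\eta^{L_0/K}\approx N_0$ down to $c_{\min}$; for $k>k^\star$ we have $c_k=c_{\min}$. In the first regime, substituting $u = N_0\eta^{k L_0/K}$ turns $\sum_k \frac{1}{c_k}\mathbbm{1}(t\le c_k)$ into a sum that, as $K\to\infty$, is a Riemann approximation of $\int \frac{1}{u}\cdot\frac{du}{u}\,(\text{const})$ over $u\in[\max(t,c_{\min},N_0\eta^{L_0}),\,N_0]$; carrying out the $\int du/u^2 = -1/u$ integral yields exactly the leading term $\frac{1}{\max(t,c_{\min},N_0\eta^{L_0})}-\frac{1}{N_0}$ after absorbing the $K$-dependent proportionality constant. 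The lower limit picks up $N_0\eta^{L_0}$ because the schedule never shrinks $c_k$ below $N_0\eta^{L_0}$ within one episode (that is the value at $k=K$), and picks up $c_{\min}$ because of the floor. In the second regime, each of the $K-k^\star$ remaining updates contributes $1/c_{\min}$ whenever $t\le c_{\min}$, giving $\frac{\mathbbm{1}(t\le c_{\min})}{c_{\min}}(K-k^\star)$, and $K-k^\star$ is proportional to $\max(\ln\frac{c_{\min}}{N_0\eta^{L_0}},0)$ up to the same constant — the $\max$ with $0$ handling the case $c_{\min}\le N_0\eta^{L_0}$ where the floor never binds. Summing the two regimes and normalizing gives Eq.~\eqref{eq:ere_apx}.

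The main obstacle I expect is making the Riemann-sum / Taylor approximation step precise: one must justify replacing $\sum_{k=1}^{k^\star}$ by the corresponding integral (controlling the error by the step size $\sim L_0/K$ times the variation of $1/u^2$), and one must handle the boundary effects near $k\approx k^\star$ and $k\approx K$ cleanly so that the floor value $N_0\eta^{L_0}$ and the threshold $c_{\min}$ enter in exactly the form written. The case analysis $c_{\min} \lessgtr N_0$ and $c_{\min}\lessgtr N_0\eta^{L_0}$ also has to be organized so that the single formula \eqref{eq:ere_apx}, with its $\max$'s and indicator, uniformly covers all regimes; I would present the generic case ($N_0\eta^{L_0}<c_{\min}<N_0$) in detail and note the degenerate cases collapse the formula correctly. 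All remaining manipulations — the substitution, the elementary integral, and collecting the proportionality constant — are routine.
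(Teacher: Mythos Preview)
Your proposal is correct and reaches the same formula, but the route differs from the paper's in one notable way. The paper does \emph{not} pass to a Riemann integral: it evaluates the geometric sum $\sum_k (N_0\eta^{kL_0/K})^{-1}$ exactly via the finite geometric series formula (with the substitution $\xi=\eta^{L_0/K}$), obtaining a closed-form exact weight
\[
w_t \propto \frac{1}{1-\eta^{L_0/K}}\Big(\frac{1}{\max(t,c_{\min},N_0\eta^{L_0})}-\frac{1}{N_0}\Big)+\frac{\mathbbm{1}(t\le c_{\min})K}{c_{\min}}\max\Big(1-\frac{\ln c_{\min}/N_0}{L_0\ln\eta},0\Big),
\]
and only \emph{then} applies a single first-order Taylor step $1-\eta^{L_0/K}\approx -\tfrac{L_0}{K}\ln\eta$, which makes both pieces proportional to $K$ and allows $K$ to be cancelled. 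Your approach instead approximates the sum by $\int du/u^2$ from the outset, which is morally the same approximation (the Riemann error and the Taylor error are both $O(L_0/K)$ and controlled by $\eta$ close to $1$), and your computation of $K-k^\star\propto\max(\ln\tfrac{c_{\min}}{N_0\eta^{L_0}},0)$ matches the paper exactly. The paper's route has the advantage of producing an intermediate \emph{exact} expression before approximating, so the ``Taylor Approx.'' in the statement refers to one explicit step rather than a continuum limit; your route is slightly more heuristic but arguably more transparent about where the $1/\max(\cdot)-1/N_0$ shape comes from.
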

Note that Prop.~\ref{prop:ere} holds for $\eta\neq 1$ because it is derived from the geometric series formula: $(1-\eta^n)/(1-\eta)$, which is valid when $\eta\neq 1$. Despite this discontinuity, we may still claim that the ERE strategy performs a uniform weighting when $\eta$ is close to 1. This is because when $\eta\approx 1$, Eq.~\eqref{eq:ere_apx} suggests $w_t$ is proportional to $1/(N_0\eta^{L_0})-1/{N_0}$ for all $1\leq t\leq N_0$, which is a uniform weighting. The emphasis on the recent experience (indicated by $\mathbbm{1}(t\leq c_{\min})$) is also evident from Eq.~\eqref{eq:ere_apx}. Precisely, the second term increases logarithmically $\ln\frac{1}{\eta}$ when $\eta$ becomes smaller, so the smaller $\eta$ indeed gives more weight on the recent experience.

\paragraph{Discussion.}A key distinction between the original ERE and Prop.~\ref{prop:ere} is that the original ERE considers the trajectory length $K$ while Prop.~\ref{prop:ere} doesn't. Intuitively, the disappearance of $K$'s dependency results from the aggregation of all effects in $1\leq k\leq K$ updates. We verify that  Eq.~\eqref{eq:ere_apx} tracks the original ERE well in the next subsection.

Another feature of Prop.~\ref{prop:ere} is an implicit 1/age weighting in Eq.~\eqref{eq:ere_apx}. Although the original ERE \emph{samples uniformly from the recent $c_k$ points}, the aggregate effect of all $1\leq k\leq K$ updates appears to be well approximated by a 1/age weighting. To understand the effect of 1/age, recall from section~\ref{sec:offpolicy} that in practice, $\hat{Q}$ is updated using mini-batch samples. Define a point $(s_i^e,a_i^e)$'s time step as $i+1+L\cdot(e-1)$. Then the expected number of times in all batch samples that a point at a certain time step is selected (the expected selection number) gives the aggregate weight over the time steps. As shown in Figure.~\ref{fig:visit}, 1/age weighting and ERE\_apx, Eq.~\eqref{eq:ere_apx}, give almost uniform expected selection numbers across time steps while uniform weighting is significantly biased toward the old samples. Therefore, 1/age weighting and its variants help balance the expected selection number.
\vspace{0mm}
\begin{figure}[t!]
	\centering
	\includegraphics[width=0.5\textwidth]{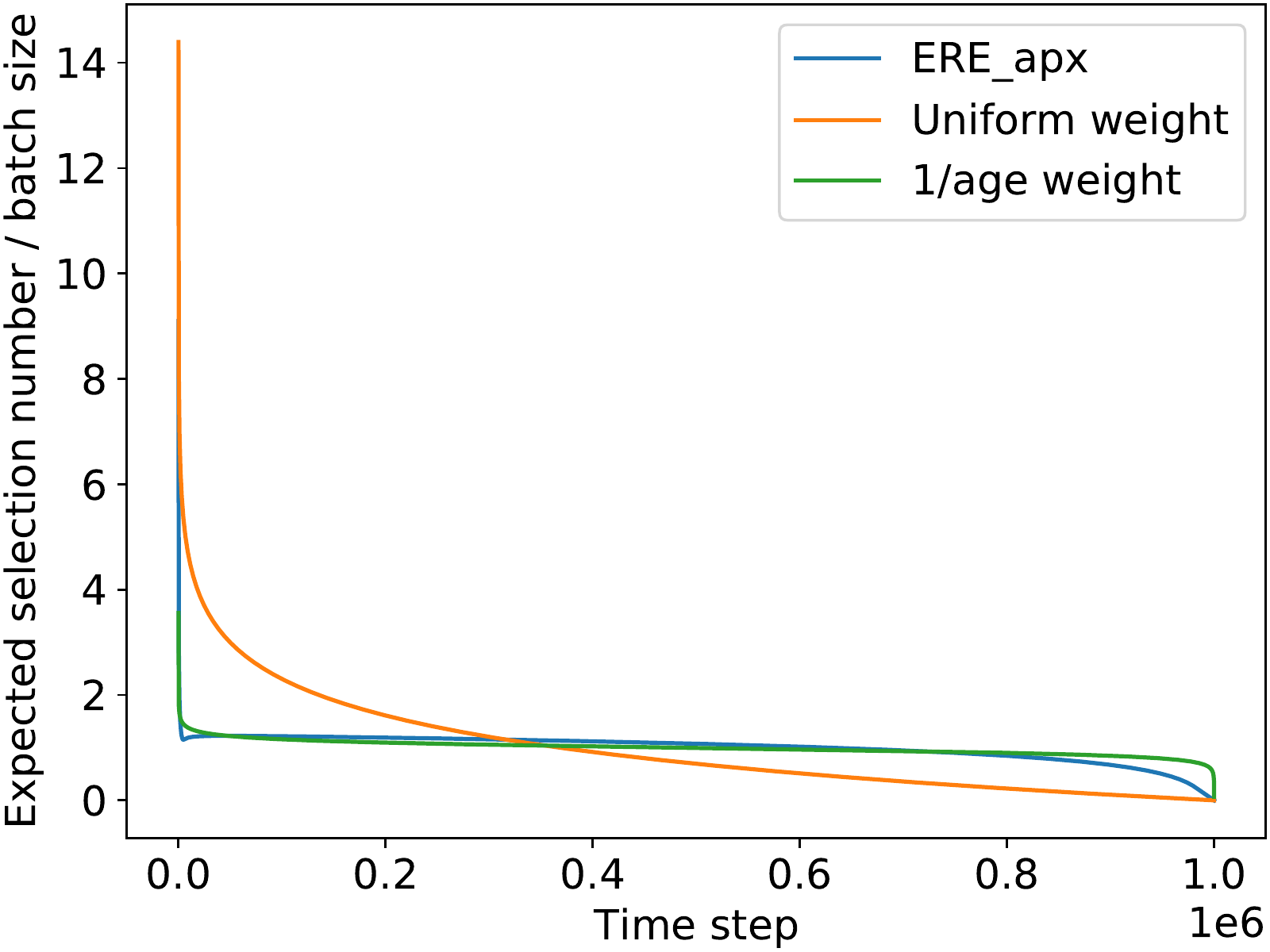}
	\caption{Expected selection numbers (aggregate weights) over a million time steps.}
	\label{fig:visit}
	\vspace{-3mm}
\end{figure}
\vspace{0mm}
\subsection{Merit of balanced selection number}
\label{sec:sample-select}
Recall the expected selection numbers are the aggregate weights over time. To understand the merit of balanced selection numbers, we develop an error analysis under non-uniform aggregate weights in the Appendix (Theorem 2 \& Corollary 2). Similar to the uniform-weight error in section~\ref{sec:error-analysis}, the non-uniform-weight error is controlled by bias from policy mismatch and variance from sampling. In the following, we discuss these biases and variances in ERE, 1/age weighting, and uniform weighting. We conclude that the ERE and 1/age weighting are better because both of their biases and variances are smaller than that of uniform weighting.

For the bias from policy mismatch, Figure~\ref{fig:visit} shows the uniform weighting (for each sampling from the replay buffer) makes the aggregate weights (expected selection number) bias toward old samples. Because the old policies tend to be distant from the current policy, the uniform weighting has a larger policy mismatch bias than ERE and 1/age weighting.

As for the variance from sampling, we generalize Corollary~\ref{cor:q-qpi_i}'s uniform case: $\tilde{O}(1/\sqrt{N})$ to Corollary 2's weighted case: $\tilde{O}(\sqrt{\sum_iw_i^2}/\sum_i w_i)$ in the Appendix. To see this, let $\{X_i\}_{i=1}^N$ be a martingale difference sequence with $0\leq X_i\leq \Delta$. Consider a weighted sequence $\{w_iX_i\}_{i=1}^N$ with weights $\{w_i>0\}$. The Azuma-Hoeffding inequality suggests that with probability $\geq 1-\delta$,
\begin{align*}
\frac{\sum_i w_iX_i-\E[\sum_i w_iX_i]}{\sum_i w_i}\leq \sqrt{\frac{\sum_i (w_i\Delta)^2}{2}\log\frac{1}{\delta}} / \sum_i w_i = \tilde{O}\Big(\Delta\frac{\sqrt{\sum_i w_i^2}}{\sum_i w_i}\Big),
\end{align*}
which means that a general non-uniform weight admits a Hoeffding error $\tilde{O}(\sqrt{\sum_iw_i^2}/\sum_i w_i)$ and is reduced to $\tilde{O}(1/\sqrt{N})$ when the weights are equal. Furthermore, one can prove that the Hoeffding error is minimized under uniform weights:
\begin{proposition}
	Let $\{w_t>0\}_{t=1}^N$ be the weights of the data indexed by $t$. Then the Hoeffding error $\sqrt{\sum_{t=1}^N w_t^2}/\sum_{t=1}^Nw_t$ is minimized when the weights are equal: $w_t=c>0,~\forall~t$.
	\label{prop:hoef-wei}
\end{proposition}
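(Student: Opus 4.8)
The plan is to exploit the fact that the objective $g(w) \triangleq \sqrt{\sum_{t=1}^N w_t^2}\big/\sum_{t=1}^N w_t$ is invariant under positive rescaling $w \mapsto \lambda w$ ($\lambda>0$): it is homogeneous of degree $0$. Hence minimizing $g$ over positive vectors is equivalent to minimizing $\sum_{t=1}^N w_t^2$ over the probability simplex $\{w_t>0,\ \sum_t w_t = 1\}$, a clean convex program whose minimizer I expect to be the uniform vector $w_t = 1/N$.

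First I would establish the universal lower bound by Cauchy--Schwarz applied to the vectors $(1,\dots,1)$ and $(w_1,\dots,w_N)$: $\sum_{t=1}^N w_t = \sum_{t=1}^N 1\cdot w_t \leq \sqrt{N}\,\sqrt{\sum_{t=1}^N w_t^2}$, which rearranges to $g(w) = \sqrt{\sum_t w_t^2}\big/\sum_t w_t \geq 1/\sqrt{N}$ for every positive weight vector. (Equivalently, this is the quadratic-mean/arithmetic-mean inequality $\sqrt{\tfrac1N\sum_t w_t^2}\geq \tfrac1N\sum_t w_t$.) The bound requires nothing beyond $w_t>0$, which merely ensures $\sum_t w_t\neq 0$ so that $g$ is well defined.

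Next I would identify the equality case. Equality in Cauchy--Schwarz forces $(w_1,\dots,w_N)$ to be a scalar multiple of $(1,\dots,1)$, i.e. $w_t = c$ for all $t$ with $c>0$; conversely, substituting $w_t\equiv c$ gives $g = \sqrt{Nc^2}/(Nc) = 1/\sqrt{N}$, meeting the lower bound. Therefore $g$ attains its minimum value $1/\sqrt N$ exactly at the equal-weight vectors, which is the claim. As a cross-check (and alternative derivation), one can run Lagrange multipliers on $\min \sum_t w_t^2$ subject to $\sum_t w_t = 1$: stationarity gives $2w_t = \mu$ for every $t$, again forcing the weights constant, and convexity of $w\mapsto\sum_t w_t^2$ upgrades this stationary point to the global minimum.

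There is essentially no serious obstacle in this argument; the only point needing mild care is that $g$ is scale-invariant but not translation-invariant, so the normalization $\sum_t w_t = 1$ (or, equivalently, $\sum_t w_t^2 = 1$) must be imposed before any optimality reasoning, and the strict positivity of the $w_t$ is used only to keep the ratio well defined and to place the equality case inside the feasible region.
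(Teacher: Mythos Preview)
Your proof is correct. Both you and the paper begin by exploiting the degree-$0$ homogeneity of $g(w)=\sqrt{\sum_t w_t^2}/\sum_t w_t$ to reduce to a normalized problem, but the key step differs: the paper normalizes via $z=w/(\mathbbm{1}^\top w)$ and then applies Lagrange multipliers to $\min\|z\|$ subject to $\mathbbm{1}^\top z=1$, obtaining the stationarity condition $z/\|z\|=\lambda\mathbbm{1}$ and hence $z_t$ constant. Your primary argument instead applies Cauchy--Schwarz (equivalently QM--AM) directly to get the explicit lower bound $g(w)\geq 1/\sqrt{N}$ together with the equality case, and you only invoke Lagrange multipliers as a secondary cross-check. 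Your route is slightly more elementary and has the advantage of producing the minimum value and the full characterization of minimizers in one stroke, without needing to argue separately that the stationary point is in fact a global minimum; the paper's Lagrange approach is perfectly fine here but leaves that last verification implicit.
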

\noindent Therefore, the variance from sampling is large under non-uniform aggregate weights and is minimized by uniform aggregate weights. Since the uniform weighting leads to more non-uniform aggregate weights than ERE and 1/age weighting, its variance is also larger.

Because the ERE and 1/age weighting have balanced selection numbers (i.e., balanced aggregate weights), their biases and variance are smaller. They should perform better than uniform weighting for off-policy actor-critic RL. We will verify this in the next subsection.

\subsection{Experimental verification}
\label{sec:experi}
Since we've established a theoretical foundation of ERE and 1/age from a bias-variance perspective, we will explore two main propositions from the preceding subsections: (1) Are ERE and 1/age weighting better than uniform weighting? (2) Does the approximated ERE proposed in Eq.~\eqref{eq:ere_apx} track the original ERE well? In addition, since prioritized experience replay \citep{Schaul2016Prioritized} (PER) is a popular sampling method, a natural question to ask is (3) Do ERE and 1/age outperform PER?

We evaluate five sampling methods (ERE, ERE\_apx, 1/age weighting, uniform, PER) on five MuJoCo continuous-control environments \citep{Todorov2012mujoco}: Humanoid, Ant, Walker2d, HalfCheetah, and Hopper. All tasks have a maximum horizon of 1000 and are trained using a Pytorch Soft-Actor-Critic \citep{Haarnoja2018sac} implementation on Github \citep{sac_pytorch}. Most hyper-parameters of the SAC algorithm are the same as that in \citet{sac_pytorch} except for the batch size, where we find a batch size of 512 tends to give more stable results. Our code is available at \url{https://github.com/sunfex/weighted-sac}. The SAC implementation and the MuJoCo environment are licensed under the MIT license and the personal student license, respectively. The experiment is run on a server with an Intel i7-6850K CPU and Nvidia GTX 1080 Ti GPUs.

\begin{figure}[!ht]
	\centering
	\includegraphics[width=0.33\textwidth]{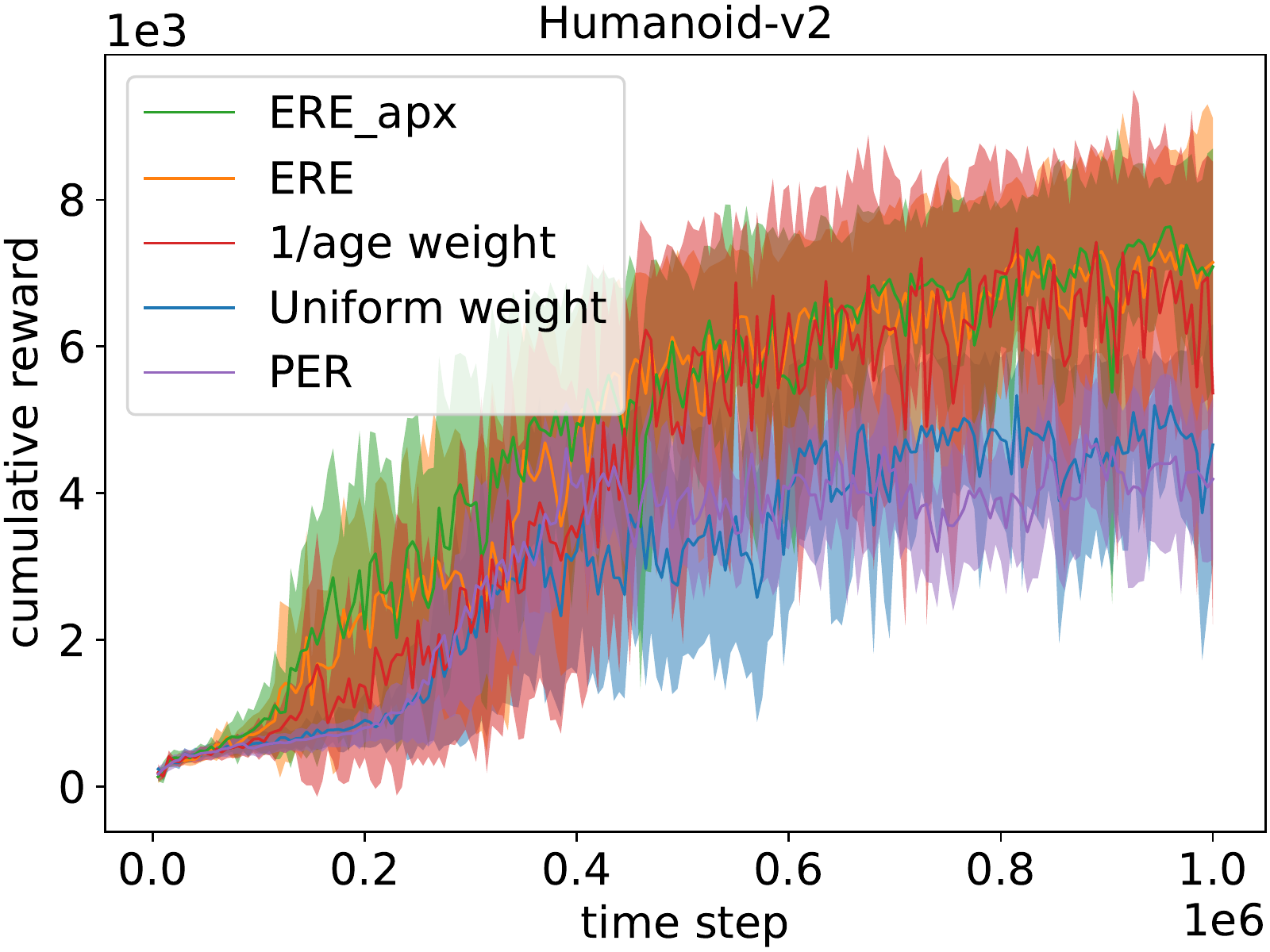}%
	\includegraphics[width=0.33\textwidth]{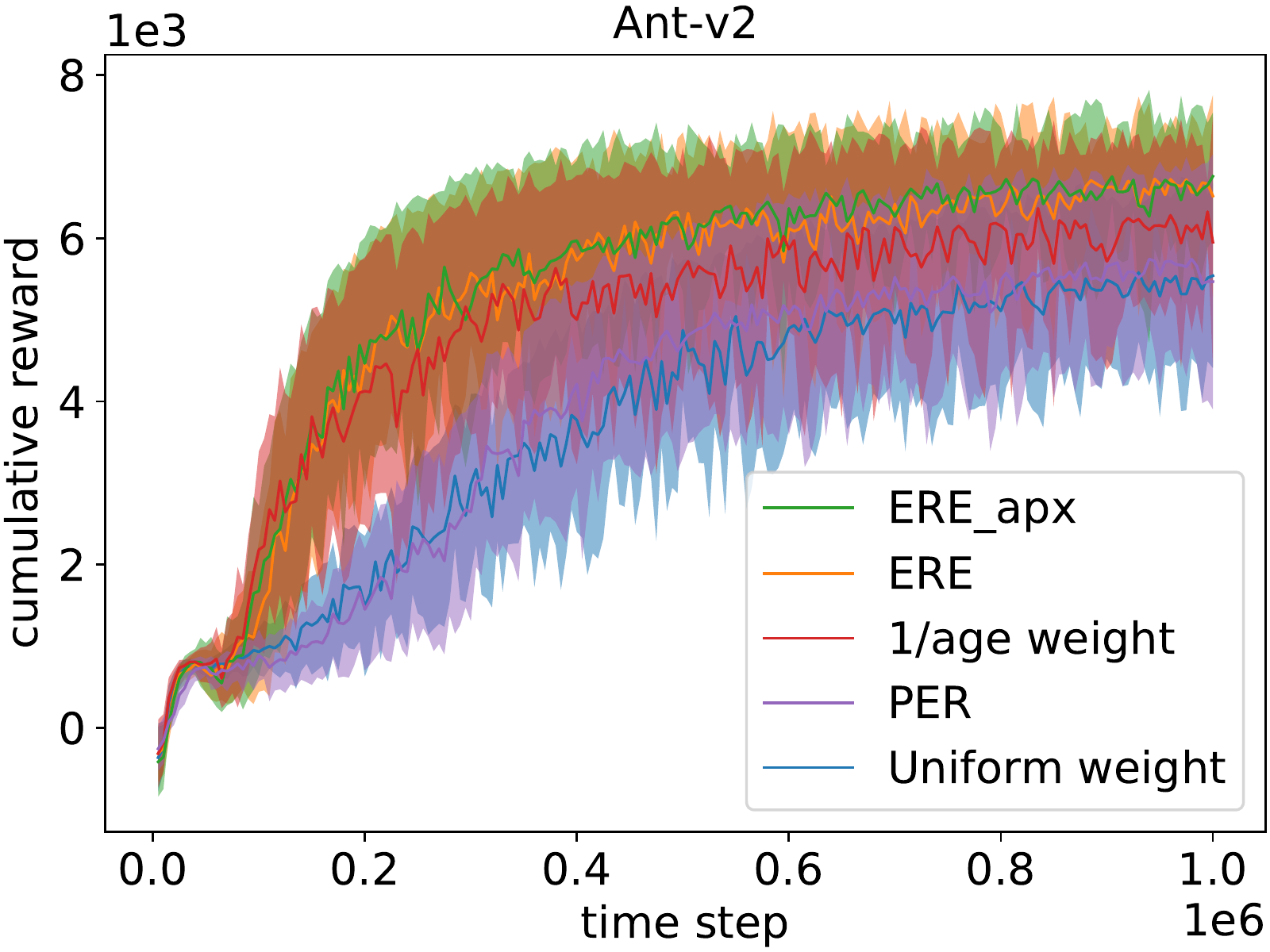}%
	\includegraphics[width=0.33\textwidth]{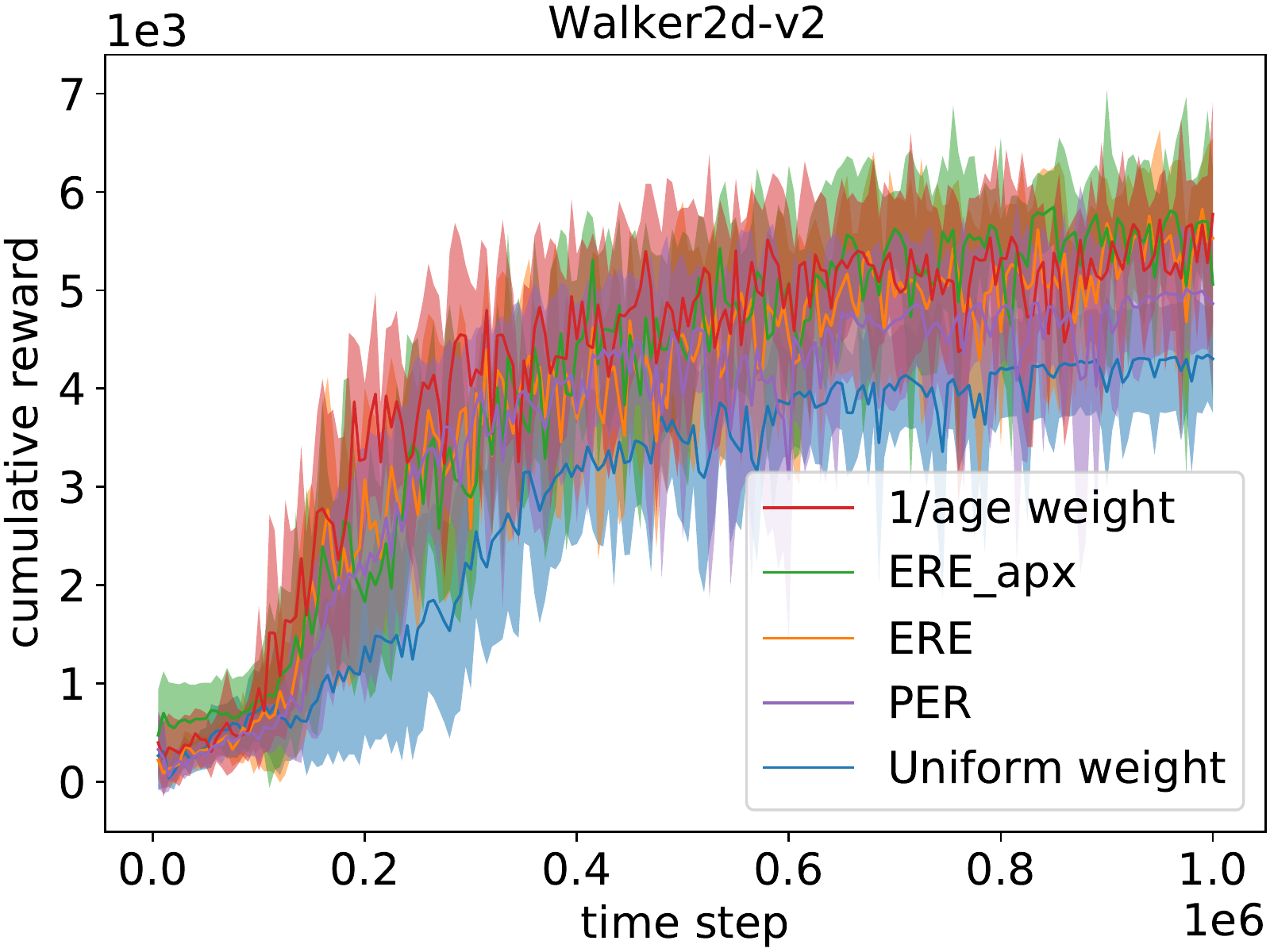}\\
	\includegraphics[width=0.33\textwidth]{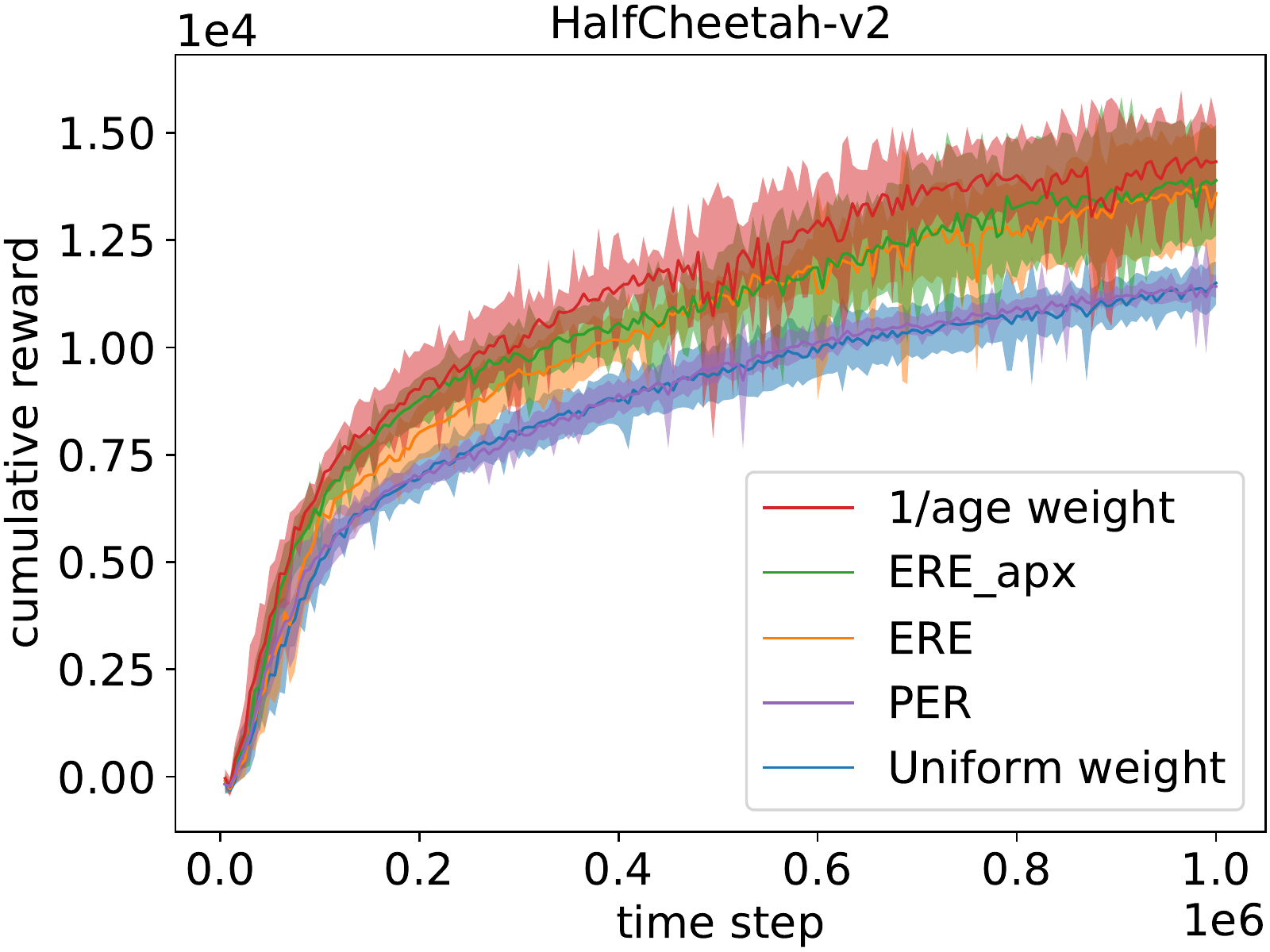}%
	\includegraphics[width=0.33\textwidth]{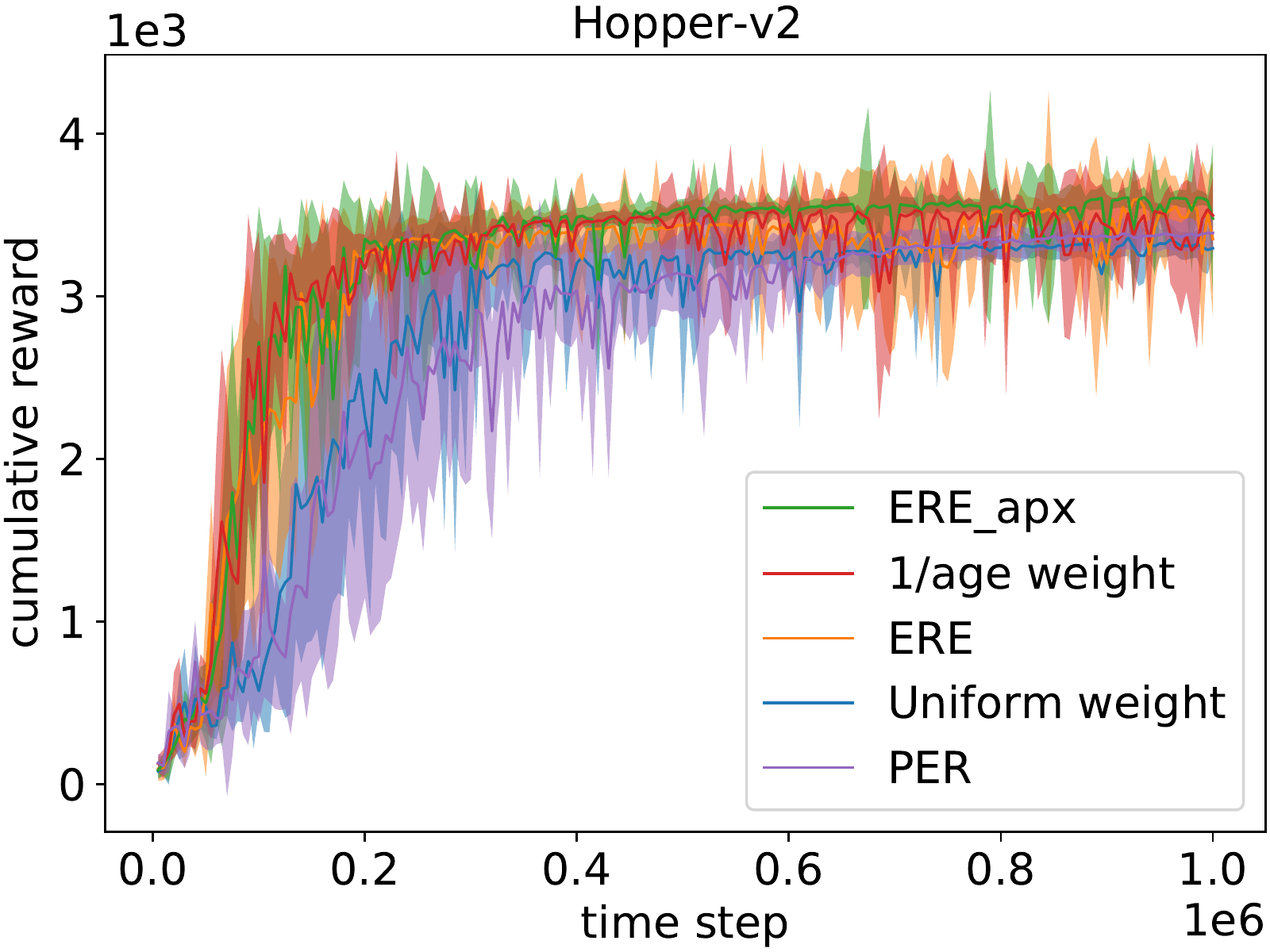}%
	\caption{ERE, ERE\_apx, 1/age weighting and uniform weighting on MuJoCo environments in a million steps. The solid lines and shaded areas are the means and one standard deviations.}
	\label{fig:mujoco}
\end{figure}

In Figure~\ref{fig:mujoco}, the ERE and 1/age weighting strategies perform better than the uniform weighting does in all tasks. This verifies our preceding assertion that the ERE and 1/age weighting are superior because their biases and variances of the estimated Q function are smaller. Moreover, ERE and ERE\_apx mostly coincide with each other, so Eq.~\eqref{eq:ere_apx} is indeed a good approximation of the ERE strategy. This also explains the implicit connection between ERE and 1/age weighting strategies: ERE is almost equivalent to ERE\_apx and 1/age is the main factor in ERE\_apx, so ERE and 1/age weighting should produce similar results. Finally, ERE and 1/age generally outperform PER, so the 1/age-based samplings that we study achieve nontrivial performance improvements.

\section{Conclusion}
To understand an off-policy actor-critic algorithm, we show the policy evaluation error on the expected distribution of transitions decomposes into the Bellman error, the bias from policy mismatch, and the variance from sampling. We use this to explain that a successful off-policy actor-critic algorithm should have a careful sampling strategy that controls its bias and variance. Motivated by the empirical success of Emphasizing Recent Experience (ERE), we prove that ERE is a variant of the 1/age weighting. We then explain from a bias-variance perspective that ERE and 1/age weighting are preferable over uniform weighting because the resulting biases and variances are smaller.

\bibliography{mybib}
\bibliographystyle{humannat}

\appendix
\setcounter{lemma}{0}
\setcounter{theorem}{0}
\setcounter{corollary}{0}
\setcounter{fact}{0}
\setcounter{assumption}{0}
\setcounter{proposition}{0}

\section{Appendix}
\subsection{Hyper-parameters}
\label{appendix:hyper}
To train the SAC agents, we use deep neural networks to parameterize the policy and Q functions. Both networks consist of dense layers with the same widths. Table~\ref{tbl:params} presents the suggested hyper-parameters. As mentioned in the experiment section, the hyper-parameters are similar as the implementation in \citet{sac_pytorch}.

\begin{table}[!ht]
	\centering
	\begin{tabular}{l|c}
		\hline
		\hline
		\textbf{Variable} & \textbf{Value} \\ \hline
		Optimizer & Adam \\
		Learning rate & 3E-4 \\
		Discount factor & 0.99 \\
		Batch size & 512 \\
		Model width & 256 \\
		Model depth & 3 \\
		\hline
		\hline
	\end{tabular}
	\caption{Suggested hyper-parameters for SAC.}
	\label{tbl:params}
\end{table}

\begin{table}[!ht]
	\centering
	\begin{tabular}{llllll}
		\hline
		\hline
		\textbf{Environment}  & Hopper & HalfCheetah & Walker2D & Ant & Humanoid \\ \hline
		\textbf{Temperature} & 0.2 & 0.2 & 0.2 & 0.2 & 0.05 \\ 
		\hline
		\hline
	\end{tabular}
	\caption{Temperature parameters for SAC in MuJoCo environments.}
	\label{tbl:bullet-temp}
\end{table}

\subsection{Justification of assumptions}
\label{appendix:assump}
In section~\ref{sec:setup}, we introduce three main assumptions in this work. Below is a validation for each.

\begin{itemize}[topsep=3pt, itemsep=1pt, itemindent=0mm, leftmargin=6mm]
	\item[1.] {\boldmath\textbf{For each $s\in \S,$ $\hat{Q}(s,a)$ and $Q^{\pi^N}(s,a)$ are $L_{\A}$-Lipschitz in $a.$}} As mentioned in the paragraph "Normalization" of section~\ref{sec:analysis}, the Lipschitzness of $Q^{\pi^N}$ is sublinear or linear in the horizon, which quantifies the magnitude of $Q^{\pi^N}$'s Lipschitz constant. Since $\hat{Q}$ approximates $Q^{\pi^N}$, $\hat{Q}$ should have a similar property as long as the training error is well controlled. The practitioner can also enforce the Lipschitzness of $\hat{Q}$ using gradient penalty \citep{Gulrajani2017gp} or spectral normalization \citep{miyato2018spectral}. 
	
	\item[2.] {\boldmath\textbf{$\hat{Q}(s,a)$ and $Q^{\pi^N}(s,a)$ are bounded and take values in $[0,r^{\max}/(1-\gamma)]$.}} This is a standard assumption in the RL literature. If the bound is violated, one can either clip, translate, or rescale to obtain new Q functions that satisfy the constraint. Note a bounded reward $r(s,a)\in[0,r^{\max}]$ has implied $Q^{\pi^N}\in[0,r^{\max}/(1-\gamma)]$.
	
	\item[3.] {\boldmath\textbf{The action space is bounded with diameter $\diam_{\A}<\infty$.}} This is a standard assumption in continuous-control environments and is usually satisfied in practice. It is also common to use clipping to ensure the bounds of the actions generated by the policy \citep{Fujita2018clip}.
\end{itemize}

\subsection{The construction of our behavior policy}
We first discuss some important relations between state occupancy measures and Bellman flow operator. Similar results about Fact~\ref{fact:contr} and \ref{fact:fixedpt} be found in \citet{liu2018offpolicy}[Lemma 3] and \citet{fan2021contraction}[Appendix A.1].
\begin{definition}[Bellman flow operator]
	The Bellman flow operator $B_{\rho_0,\pi,T}(\cdot)$ generated by $(\rho_0,\pi,T)$ with discount factor $\gamma$ is defined as
	$$
	B_{\rho_0,\pi,T}(\rho) (s)\triangleq (1-\gamma)\rho_0(s) + \gamma\int T(s|s',a')\pi(a'|s')\rho(s')ds'da'.
	$$
\end{definition}

\begin{fact}
	$B_{\rho_0,\pi,T}$ is a $\gamma$-contraction w.r.t. total variational distance.
	\label{fact:contr}
\end{fact}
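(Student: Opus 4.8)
The plan is to work straight from the definition and exploit that the constant term $(1-\gamma)\rho_0$ does not depend on the argument of the operator. Fix two state densities $\rho_1,\rho_2$ on $\S$. Subtracting, the $(1-\gamma)\rho_0(s)$ contributions cancel, leaving
$$B_{\rho_0,\pi,T}(\rho_1)(s) - B_{\rho_0,\pi,T}(\rho_2)(s) = \gamma\int T(s|s',a')\pi(a'|s')\big(\rho_1(s')-\rho_2(s')\big)\,ds'\,da'.$$
So the claim reduces to showing the linear map $\rho\mapsto \int T(s|s',a')\pi(a'|s')\rho(s')\,ds'\,da'$ is a weak TV-contraction on (zero-mean) signed measures, after which multiplying by $\gamma$ gives the $\gamma$-contraction.

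Next I would take the total-variation norm of both sides (using $\norm{\mu}_{\mathrm{TV}}=\tfrac12\int|\mu(s)|\,ds$; the computation is identical for the unnormalized convention), pull the factor $\gamma$ out front, and push the absolute value inside the inner integral via the triangle inequality for integrals:
$$\norm{B_{\rho_0,\pi,T}(\rho_1)-B_{\rho_0,\pi,T}(\rho_2)}_{\mathrm{TV}} \le \frac{\gamma}{2}\int\int\int T(s|s',a')\pi(a'|s')\,\big|\rho_1(s')-\rho_2(s')\big|\,ds'\,da'\,ds.$$
Then I would invoke Fubini/Tonelli to perform the $s$-integration first and use $\int T(s|s',a')\,ds = 1$, followed by $\int \pi(a'|s')\,da' = 1$, to collapse the two outer integrals. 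What remains is $\tfrac{\gamma}{2}\int|\rho_1(s')-\rho_2(s')|\,ds' = \gamma\,\norm{\rho_1-\rho_2}_{\mathrm{TV}}$, the desired bound.

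There is no genuine obstacle here; the only points deserving care are: (i) justifying the interchange of integrals, which is immediate since after taking absolute values the integrand is nonnegative so Tonelli applies directly; and (ii) being explicit that $T(\cdot|s',a')$ and $\pi(\cdot|s')$ are genuine probability densities, so that their total masses are exactly $1$ — this is precisely what makes the two outer integrals disappear rather than merely being bounded. If one prefers to avoid densities, the same argument runs verbatim for Markov transition kernels, using that a Markov kernel is a TV-contraction (the data-processing property).
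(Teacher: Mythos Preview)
Your proposal is correct and follows essentially the same route as the paper's proof: cancel the $(1-\gamma)\rho_0$ term, apply the triangle inequality to push the absolute value inside, then use Tonelli and the fact that $T(\cdot|s',a')$ and $\pi(\cdot|s')$ integrate to $1$ to collapse the extra integrals. Your version is in fact slightly more explicit about the justification of the order-of-integration swap, but the argument is identical.
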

\begin{proof}
	Let $p_1(s),~p_2(s)$ be the density functions of some state distributions.
	\begin{equation*}
	\begin{split}
	D_{TV}(B_{\rho_0,\pi,T}(p_1) \lvert\rvert B_{\rho_0,\pi,T}(p_2)) &=\frac{1}{2}\int \big\lvert B_{\rho_0,\pi,T}(p_1(s)) - B_{\rho_0,\pi,T}(p_2(s)) \big\rvert ds\\
	&=\frac{1}{2}\int \gamma \Big\lvert \int T(s|s',a')\pi(a'|s') \big(p_1(s') - p_2(s') \big) ds'da'\Big\rvert ds\\
	&\leq \frac{\gamma}{2}\int T(s|s',a')\pi(a'|s')\big\lvert p_1(s')-p_2(s') \big\rvert ds'da'ds\\
	&=\frac{\gamma}{2} \int \big\lvert p_1(s')-p_2(s') \big\rvert ds'=\gamma D_{TV}(p_1\lvert\rvert p_2).
	\end{split}
	\end{equation*}
\end{proof}

\begin{fact}
	The normalized state occupancy measure $\rho_{\rho_0}^\pi$ generated by $(\rho_0,\pi,T)$ with discount factor $\gamma$ is a fixed point of $B_{\rho_0,\pi,T}$; i.e., 
	$B_{\rho_0,\pi,T}(\rho_{\rho_0}^\pi)(s) = \rho_{\rho_0}^\pi(s)$.
	\label{fact:fixedpt}
\end{fact}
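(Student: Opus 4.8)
The plan is to verify the fixed-point identity by a direct computation: expand $B_{\rho_0,\pi,T}(\rho_{\rho_0}^\pi)$ using its definition together with the series form of $\rho_{\rho_0}^\pi$ in Eq.~\eqref{eq:occu}, interchange the sum and the integral, and reindex.

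First I would record the one-step recursion for the step-$i$ state marginals. Writing $\rho_i(s)=\rho_i(s|\rho_0,\pi,T)$, the step-$0$ marginal is the initial distribution, $\rho_0(s)=\rho_0(s)$, and for every $i\ge 0$ the marginal at step $i+1$ is obtained from the marginal at step $i$ by pushing forward one step of $(\pi,T)$: $\rho_{i+1}(s)=\int T(s|s',a')\pi(a'|s')\rho_i(s')\,ds'da'$. This is just the definition of the state distribution at the next trajectory step.

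Next I substitute $\rho_{\rho_0}^\pi(s')=(1-\gamma)\sum_{i=0}^\infty\gamma^i\rho_i(s')$ into the second term of $B_{\rho_0,\pi,T}(\rho_{\rho_0}^\pi)(s)$. Since every integrand is nonnegative, Tonelli's theorem lets me move the integral over $(s',a')$ inside the infinite sum over $i$; applying the recursion above to each summand replaces $\int T(s|s',a')\pi(a'|s')\rho_i(s')\,ds'da'$ by $\rho_{i+1}(s)$. This yields $B_{\rho_0,\pi,T}(\rho_{\rho_0}^\pi)(s)=(1-\gamma)\rho_0(s)+\gamma(1-\gamma)\sum_{i=0}^\infty\gamma^i\rho_{i+1}(s)=(1-\gamma)\rho_0(s)+(1-\gamma)\sum_{j=1}^\infty\gamma^j\rho_j(s)$ after the shift $j=i+1$. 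Absorbing the first term as the $j=0$ summand gives $(1-\gamma)\sum_{j=0}^\infty\gamma^j\rho_j(s)=\rho_{\rho_0}^\pi(s)$, which is the claim.

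The only point requiring a word of justification — rather than a genuine obstacle — is the interchange of the infinite sum and the integral, which is legitimate by nonnegativity (Tonelli), and the fact that the series is well defined (its total mass is $(1-\gamma)\sum_i\gamma^i=1$, so it converges in $L^1$). Everything else is bookkeeping: recognizing the one-step pushforward of $\rho_i$ as $\rho_{i+1}$ and shifting the summation index.
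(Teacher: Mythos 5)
Your proof is correct and follows essentially the same computation as the paper's: both use the one-step pushforward recursion $\rho_{i+1}(s)=\int T(s|s',a')\pi(a'|s')\rho_i(s')\,ds'da'$, an interchange of the infinite sum with the integral, and an index shift, with yours merely run in the reverse direction (expanding $B_{\rho_0,\pi,T}(\rho_{\rho_0}^\pi)$ rather than rewriting $\rho_{\rho_0}^\pi$) and with the Tonelli justification made explicit. No gaps.
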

\begin{proof}
	\begin{equation*}
	\begin{split}
	\rho_{\rho_0}^\pi(s)=&(1-\gamma)\sum_{i=0}^\infty \gamma^i f_i(s|\rho_0,\pi,T)\\
	=&(1-\gamma)f_{0}(s|\rho_0,\pi,T)+\gamma(1-\gamma)\sum_{i=0}^\infty \gamma^i f_{i+1}(s|\rho_0,\pi,T)\\
	=&(1-\gamma)\rho_0(s)+\gamma(1-\gamma)\sum_{i=0}^\infty \gamma^i \int T(s|s',a')\pi(a'|s')f_i(s'|\rho_0,\pi,T)ds'da'\\
	=&(1-\gamma)\rho_0(s)+\gamma \int T(s|s',a')\pi(a'|s')(1-\gamma)\sum_{i=0}^\infty\gamma^i f_i(s'|\rho_0,\pi,T)ds'da'\\
	=&(1-\gamma)\rho_0(s)+\gamma \int T(s|s',a')\pi(a'|s')\rho_{\rho_0}^\pi(s')ds'da'=B_{\rho_0,\pi,T}(\rho_{\rho_0}^\pi)(s).
	\end{split}
	\end{equation*}
\end{proof}

Thus, the Bellman flow operator is useful to analyze the state occupancy measures, and we have the following lemma to construct the behavior policy $\pi_i^D$.
\begin{lemma}
	Let $\rho_i^e(s)$ the state distribution at trajectory step $i$ in episode $e$. Let $\rho_{\rho_i^e}^{\pi^e}(s)$ be the normalized occupancy measure starting at trajectory step $i$ in episode $e$. Then  $\frac{1}{N}\sum_{e=1}^N\rho_{\rho_i^e}^{\pi^e}(s)=\rho_{\overline{\rho}_i}^{\pi_i^D}(s)~a.e.$, where $\rho_{\overline{\rho}_i}^{\pi_i^D}$ is the normalized state occupancy measure is generated by $(\overline{\rho}_i,\pi_i^D,T)$. Moreover, we have $\frac{1}{N}\sum_{e=1}^N\rho_{\rho_i^e}^{\pi^e}(s)\pi^e(a|s)=\rho_{\overline{\rho}_i}^{\pi_i^D}(s)\pi^D_i(a|s)~a.e.$
	\label{lem:av-occu_1}
\end{lemma}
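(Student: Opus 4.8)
The plan is to recognize the averaged occupancy measure $\overline{\rho}_i^o \triangleq N^{-1}\sum_{e=1}^N\rho_{\rho_i^e}^{\pi^e}$ as a fixed point of the Bellman flow operator $B_{\overline{\rho}_i,\pi_i^D,T}$ and then appeal to uniqueness of that fixed point, which is exactly what Facts~\ref{fact:contr} and \ref{fact:fixedpt} supply.

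First I would apply Fact~\ref{fact:fixedpt} episode by episode: $\rho_{\rho_i^e}^{\pi^e}$ is the fixed point of $B_{\rho_i^e,\pi^e,T}$, i.e.
$$\rho_{\rho_i^e}^{\pi^e}(s) = (1-\gamma)\rho_i^e(s) + \gamma\int T(s|s',a')\,\pi^e(a'|s')\,\rho_{\rho_i^e}^{\pi^e}(s')\,ds'\,da'.$$
Averaging over $e = 1,\dots,N$ and using $\overline{\rho}_i = N^{-1}\sum_e\rho_i^e$ yields
$$\overline{\rho}_i^o(s) = (1-\gamma)\overline{\rho}_i(s) + \gamma\int T(s|s',a')\Big(\tfrac{1}{N}\sum_{e=1}^N\pi^e(a'|s')\,\rho_{\rho_i^e}^{\pi^e}(s')\Big)\,ds'\,da'.$$
The definition of $\pi_i^D$ in Eq.~\eqref{eq:av-dist} is precisely $N^{-1}\sum_e\pi^e(a|s)\rho_{\rho_i^e}^{\pi^e}(s) = \pi_i^D(a|s)\,\overline{\rho}_i^o(s)$ wherever the denominator $\sum_e\rho_{\rho_i^e}^{\pi^e}(s)$ is positive, and both sides are zero otherwise; so this holds for a.e.\ $(s,a)$. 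Substituting it into the averaged identity gives $\overline{\rho}_i^o = B_{\overline{\rho}_i,\pi_i^D,T}(\overline{\rho}_i^o)$ a.e., so $\overline{\rho}_i^o$ is a fixed point of $B_{\overline{\rho}_i,\pi_i^D,T}$.

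Next I would close the argument by uniqueness: Fact~\ref{fact:contr} says $B_{\overline{\rho}_i,\pi_i^D,T}$ is a $\gamma$-contraction on the space of probability densities with total-variation distance, which is complete, so the Banach fixed-point theorem gives a unique fixed point; Fact~\ref{fact:fixedpt} identifies $\rho_{\overline{\rho}_i}^{\pi_i^D}$ as a fixed point as well, hence $\overline{\rho}_i^o = \rho_{\overline{\rho}_i}^{\pi_i^D}$ a.e. For the joint statement, multiply this equality by $\pi_i^D(a|s)$ and use the defining identity of $\pi_i^D$ once more: $\rho_{\overline{\rho}_i}^{\pi_i^D}(s)\pi_i^D(a|s) = \overline{\rho}_i^o(s)\pi_i^D(a|s) = N^{-1}\sum_e\rho_{\rho_i^e}^{\pi^e}(s)\pi^e(a|s)$ a.e.

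The only subtle point — and the step I would be most careful about — is the a.e.\ bookkeeping around the $0/0$ ambiguity in the definition of $\pi_i^D$: I need to verify that the averaged flow identity and the weighting identity both remain valid on the null set $\{s : \sum_e\rho_{\rho_i^e}^{\pi^e}(s) = 0\}$ (they do, since every summand vanishes there) and that an arbitrary choice of $\pi_i^D$ on that set changes none of the integrals. Everything else is a direct substitution plus the contraction-mapping uniqueness already packaged in Facts~\ref{fact:contr}–\ref{fact:fixedpt}.
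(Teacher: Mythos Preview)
Your proposal is correct and follows essentially the same approach as the paper: apply Fact~\ref{fact:fixedpt} per episode, average to exhibit $\overline{\rho}_i^o$ as a fixed point of $B_{\overline{\rho}_i,\pi_i^D,T}$ via the definition of $\pi_i^D$, and then invoke the $\gamma$-contraction (Fact~\ref{fact:contr}) for uniqueness. Your extra care with the $0/0$ null set is a welcome clarification the paper omits.
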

\begin{proof}
	Precisely, $\rho_i^e(s)=\rho_i(s|\rho_0,\pi^e,T)$ is the state distribution at trajectory step $i$ following the laws of $(\rho_0,\pi^e,T)$. Since $\rho_{\rho_i^e}^{\pi^e}$ is the normalized occupancy measure generated by $(\rho_i^e,\pi^e,T)$, each $\rho_{\rho_i^e}^{\pi^e}$ is the fixed-point of the Bellman flow equation:
	$$\rho_{\rho_i^e}^{\pi^e}(s)=(1-\gamma)\rho_i^e(s)+\gamma\int T(s|s',a')\pi^e(a'|s')\rho_{\rho_i^e}^{\pi^e}(s')ds'da',~~~\forall~e\in[1,...,N].$$
	This implies the average normalized occupancy measure is the fixed point of the Bellman flow equation characterized by $(\overline{\rho}_i,\pi_i^D,T)$:
	\begin{align*}
	\frac{1}{N}\sum_{e=1}^N\rho_{\rho_i^e}^{\pi^e}(s)&=\frac{1}{N}\sum_{e=1}^N\left[(1-\gamma)\rho_i^e(s)+\gamma\int T(s|s',a')\pi^e(a'|s')\rho_{\rho_i^e}^{\pi^e}(s')ds'da'\right]\\
	&=(1-\gamma)\overline{\rho}_{i}(s)+\gamma\int T(s|s',a')\frac{1}{N}\sum_{e=1}^N\left[\pi^e(a'|s')\rho_{\rho_i^e}^{\pi^e}(s')\right]ds'da'\\
	&=(1-\gamma)\overline{\rho}_{i}(s)+\gamma\int T(s|s',a')\frac{\sum_{e=1}^N\pi^e(a'|s')\rho_{\rho_i^e}^{\pi^e}(s')}{\sum_{e=1}^N\rho_{\rho_i^e}^{\pi^e}(s')}\frac{\sum_{e=1}^N\rho_{\rho_i^e}^{\pi^e}(s')}{N}ds'da'\\
	&=(1-\gamma)\overline{\rho}_{i}(s)+\gamma\int T(s|s',a')\pi_i^D(a'|s')\frac{1}{N}\sum_{e=1}^N\rho_{\rho_i^e}^{\pi^e}(s')ds'da',
	\end{align*}
	where $\pi_i^D(a|s)\triangleq \frac{\sum_{e=1}^N\pi^e(a|s)\rho_{\rho_i^e}^{\pi^e}(s)}{\sum_{e=1}^N\rho_{\rho_i^e}^{\pi^e}(s)}$ is the average behavior policy at step $i$. Since the Bellman flow operator is a $\gamma$-contraction in TV distance and hence has a unique fixed point in TV distance, denoted as $\rho_{\overline{\rho}_i}^{\pi_i^D}(s)$, we arrive at $\frac{1}{N}\sum_{e=1}^N\rho_{\rho_i^e}^{\pi^e}(s)=\rho_{\overline{\rho}_i}^{\pi_i^D}(s)$ almost everywhere. Also, by construction, we have $$\frac{1}{N}\sum_{e=1}^N\rho_{\rho_i^e}^{\pi^e}(s)\pi^e(a|s)=\rho_{\overline{\rho}_i}^{\pi_i^D}(s)\pi^D_i(a|s)~a.e.$$
\end{proof}

\subsection{Policy Evaluation Error}
\begin{lemma}
	If $Q(s,a)$ is $L_{\A}$-Lipschitz in $a$ for any $s$, then, for any state distribution $\rho$, $$\E_{s\sim\rho}\Big|\E_{a\sim\pi_1(\cdot|s)}Q(s,a)-\E_{a\sim\pi_2(\cdot|s)}Q(s,a)\Big|\leq L_{\A}\E_{s\sim\rho} \wass{\pi_1(\cdot|s)}{\pi_2(\cdot|s)}.$$
	\label{lem:wass-bound}
\end{lemma}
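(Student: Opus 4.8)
The plan is to reduce the claim to a pointwise-in-$s$ inequality and then integrate. Fix a state $s\in\S$. By hypothesis the map $a\mapsto Q(s,a)$ is $L_{\A}$-Lipschitz, so $a\mapsto Q(s,a)/L_{\A}$ is $1$-Lipschitz. By the Kantorovich--Rubinstein duality for the $1$-Wasserstein distance \citep{Villani2008opt_trans}, for any two probability measures $\mu,\nu$ on $\A$,
\[
W_1(\mu\lvert\rvert\nu)=\sup_{f:\,\mathrm{Lip}(f)\le 1}\Big|\textstyle\int f\,d\mu-\int f\,d\nu\Big|.
\]
Applying this with $f=Q(s,\cdot)/L_{\A}$, $\mu=\pi_1(\cdot|s)$, and $\nu=\pi_2(\cdot|s)$, and multiplying through by $L_{\A}$, gives
\[
\Big|\E_{a\sim\pi_1(\cdot|s)}Q(s,a)-\E_{a\sim\pi_2(\cdot|s)}Q(s,a)\Big|\le L_{\A}\,\wass{\pi_1(\cdot|s)}{\pi_2(\cdot|s)}
\]
for every $s$. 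An equivalent route avoiding duality is the coupling definition of $W_1$: for each $s$ pick a coupling $\lambda_s$ of $(\pi_1(\cdot|s),\pi_2(\cdot|s))$ that is $\varepsilon$-optimal, and estimate $|\E_{a\sim\pi_1}Q(s,a)-\E_{a\sim\pi_2}Q(s,a)|=|\E_{(a,a')\sim\lambda_s}[Q(s,a)-Q(s,a')]|\le L_{\A}\E_{(a,a')\sim\lambda_s}\|a-a'\|\le L_{\A}(\wass{\pi_1(\cdot|s)}{\pi_2(\cdot|s)}+\varepsilon)$, then send $\varepsilon\to 0$.

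Next I would take expectation over $s\sim\rho$ on both sides of the pointwise bound. Since the inequality holds for each fixed $s$ and both sides are nonnegative measurable functions of $s$, monotonicity of the expectation yields
\[
\E_{s\sim\rho}\Big|\E_{a\sim\pi_1(\cdot|s)}Q(s,a)-\E_{a\sim\pi_2(\cdot|s)}Q(s,a)\Big|\le L_{\A}\,\E_{s\sim\rho}\wass{\pi_1(\cdot|s)}{\pi_2(\cdot|s)},
\]
which is exactly the assertion of the lemma.

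The only step requiring any care — and the mildest of obstacles — is measurability: to integrate over $s$ one needs $s\mapsto \wass{\pi_1(\cdot|s)}{\pi_2(\cdot|s)}$ (and the left-hand integrand) to be measurable in $s$. This holds under the paper's standing regularity assumptions on the policy kernels together with the Polish structure of $\A$, since the $1$-Wasserstein distance between measurably varying probability kernels is jointly measurable; I would simply remark on this and proceed. Apart from that bookkeeping, the argument is a one-line invocation of a standard characterization of $W_1$ followed by taking expectations, so there is no substantive difficulty.
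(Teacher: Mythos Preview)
Your proposal is correct and mirrors the paper's own proof: fix $s$, rescale $Q(s,\cdot)$ by $1/L_{\A}$ to obtain a $1$-Lipschitz function, invoke Kantorovich--Rubinstein duality to bound the difference of expectations by $L_{\A}\wass{\pi_1(\cdot|s)}{\pi_2(\cdot|s)}$, and then integrate against $\rho$. The only cosmetic difference is that the paper first bounds the unsigned difference and then appeals to the symmetry of $W_1$ to pass to the absolute value, whereas you state the duality with the absolute value directly; your added measurability remark and the optional coupling argument are extra rigor not present in the paper.
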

\begin{proof}
	For any fixed $s$, we have
	\begin{align*}
	&\E_{a\sim\pi_1(\cdot|s)}Q(s,a)-\E_{a\sim\pi_2(\cdot|s)}Q(s,a)
	=L_{\A}\Big[\E_{a\sim\pi_1(\cdot|s)}\frac{Q(s,a)}{L_{\A}}-\E_{a\sim\pi_2(\cdot|s)}\frac{Q(s,a)}{L_{\A}}\Big]\\
	\leq&L_{\A}\Big[\sup_{\norm{f}_{\text{Lip}}\leq 1}\E_{a\sim\pi_1(\cdot|s)}f(a)-\E_{a\sim\pi_2(\cdot|s)}f(a)\Big]=L_{\A}\wass{\pi_1(\cdot|s)}{\pi_2(\cdot|s)},
	\end{align*}
	where the second line follows from Kantorovich-Rubinstein duality \citep{Villani2008opt_trans}. Since the 1-Wasserstein distance $W_1$ is symmetric, we can interchange the roles of $\pi_1$ and $\pi_2$, yielding
	$$\Big|\E_{a\sim\pi_1(\cdot|s)}Q(s,a)-\E_{a\sim\pi_2(\cdot|s)}Q(s,a)\Big|\leq L_{\A} \wass{\pi_1(\cdot|s)}{\pi_2(\cdot|s)}.$$
	Taking the expectation $\E_{s\sim\rho}$ on both sides completes the proof.
\end{proof}

\begin{lemma}
	If $Q^\pi(s,a)$ is $L_{\A}$-Lipschitz in $a$ for any $s$, then $$\E_{(s_0,a_0)\sim\hat{\rho}_0(s)\pi^D_0(a|s)}\Big|Q^{\pi^D_0}(s_0,a_0)-Q^\pi(s_0,a_0)\Big|\leq \frac{L_{\A}}{1-\gamma}\E_{s\sim\rho^{\pi^D_0}_{\hat{\rho}_0}}\wass{\pi^D_0(\cdot|s)}{\pi(\cdot|s)}.$$
	\label{lem:qpid-qpi}
\end{lemma}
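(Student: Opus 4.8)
The plan is a ``simulation lemma''--style argument: expand $Q^{\pi^D_0}$ and $Q^\pi$ one step using their Bellman fixed-point equations, isolate a policy-mismatch term, and unroll the resulting recursion along the trajectory that $\pi^D_0$ generates starting from $\hat\rho_0$. Put $\Delta(s,a)\triangleq Q^{\pi^D_0}(s,a)-Q^\pi(s,a)$. Since $Q^{\pi'}(s,a)=r(s,a)+\gamma\,\E_{s'\sim T(\cdot|s,a)}\E_{a'\sim\pi'(\cdot|s')}Q^{\pi'}(s',a')$ for $\pi'\in\{\pi^D_0,\pi\}$, subtracting cancels the reward, and adding and subtracting $\E_{a'\sim\pi^D_0(\cdot|s')}Q^\pi(s',a')$ inside the expectation yields
\begin{equation*}
\Delta(s,a)=\gamma\,\E_{s'\sim T(\cdot|s,a)}\Big[\E_{a'\sim\pi^D_0(\cdot|s')}\Delta(s',a')+g(s')\Big],\qquad g(s')\triangleq\E_{a'\sim\pi^D_0(\cdot|s')}Q^\pi(s',a')-\E_{a'\sim\pi(\cdot|s')}Q^\pi(s',a').
\end{equation*}

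Next I would apply the triangle and Jensen inequalities to get $|\Delta(s,a)|\le\gamma\,\E_{s'\sim T(\cdot|s,a)}[\E_{a'\sim\pi^D_0(\cdot|s')}|\Delta(s',a')|+|g(s')|]$, then take $\E_{(s_0,a_0)\sim\hat\rho_0(s)\pi^D_0(a|s)}$. The crucial point is that sampling $s_0\sim\hat\rho_0$, $a_0\sim\pi^D_0(\cdot|s_0)$, $s_1\sim T(\cdot|s_0,a_0)$ makes $s_1$ distributed as $\rho_1(\cdot\,|\,\hat\rho_0,\pi^D_0,T)$, and iterating the bound $m$ times gives
\begin{equation*}
\E_{(s_0,a_0)\sim\hat\rho_0\pi^D_0}\big|\Delta(s_0,a_0)\big|\ \le\ \gamma^m\,\E_{s_m\sim\rho_m,\;a_m\sim\pi^D_0}\big|\Delta(s_m,a_m)\big|+\sum_{k=1}^m\gamma^k\,\E_{s_k\sim\rho_k}\big|g(s_k)\big|,
\end{equation*}
where $\rho_k=\rho_k(\cdot\,|\,\hat\rho_0,\pi^D_0,T)$. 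Because a bounded reward forces $Q^{\pi^D_0},Q^\pi\in[0,r^{\max}/(1-\gamma)]$, one has $|\Delta|\le r^{\max}/(1-\gamma)$, so the $\gamma^m$ residual vanishes as $m\to\infty$ and $\E|\Delta(s_0,a_0)|\le\sum_{k\ge1}\gamma^k\,\E_{s_k\sim\rho_k}|g(s_k)|$.

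Finally I would invoke Lemma~\ref{lem:wass-bound}, the tool that converts the action-expectation gap into a Wasserstein distance via the $L_{\A}$-Lipschitzness of $Q^\pi$: applied with state distribution $\rho_k$, policies $(\pi^D_0,\pi)$ and $Q=Q^\pi$, it gives $\E_{s_k\sim\rho_k}|g(s_k)|\le L_{\A}\,\E_{s_k\sim\rho_k}\wass{\pi^D_0(\cdot|s_k)}{\pi(\cdot|s_k)}$. Summing over $k\ge1$, enlarging the sum to include the nonnegative $k=0$ term, interchanging sum and integral, and using $\sum_{k\ge0}\gamma^k\rho_k=(1-\gamma)^{-1}\rho^{\pi^D_0}_{\hat\rho_0}$ from the definition \eqref{eq:occu} collapses $\sum_{k\ge1}\gamma^k\,\E_{s_k\sim\rho_k}\wass{\pi^D_0(\cdot|s_k)}{\pi(\cdot|s_k)}$ into $(1-\gamma)^{-1}\,\E_{s\sim\rho^{\pi^D_0}_{\hat\rho_0}}\wass{\pi^D_0(\cdot|s)}{\pi(\cdot|s)}$, which is the stated bound. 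I do not expect a genuine obstacle: the step most prone to slips is the bookkeeping of the telescoped unrolling and the justification that the $\gamma^m$ residual disappears, which is precisely where the (implicit) boundedness of the Q functions is used.
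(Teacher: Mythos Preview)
Your proposal is correct and follows essentially the same route as the paper: a one-step Bellman expansion with an added-and-subtracted $\E_{a'\sim\pi^D_0}Q^\pi(s',a')$ term, the Wasserstein bound from Lemma~\ref{lem:wass-bound} on the policy-mismatch piece, an unrolling along the $\pi^D_0$-generated trajectory, and then collapsing the geometric sum into the occupancy measure $\rho^{\pi^D_0}_{\hat\rho_0}$ after enlarging to include the $k=0$ term. The only cosmetic difference is that the paper applies Lemma~\ref{lem:wass-bound} inside the pointwise recursion before unrolling, whereas you unroll first and apply it at the end; you are also slightly more explicit than the paper about why the $\gamma^m$ residual vanishes.
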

\begin{proof}
	For any $(s,a)$, we have
	\begin{equation*}
	\begin{split}
	&|Q^{\pi^D_0}(s,a)-Q^\pi(s,a)|\\
	=&\gamma\Big|\E_{s'\sim T(\cdot|s,a)}\E_{\pi^D_0} Q^{\pi^D_0}(s',\pi^D_0(s'))-\E_\pi Q^\pi(s',\pi(s'))\Big|\\
	\leq &\gamma\E_{s'\sim T(\cdot|s,a)}\Big|\E_{\pi^D_0,\pi} Q^{\pi^D_0}(s',\pi^D_0(s'))-Q^\pi(s',\pi^D_0(s'))+Q^\pi(s',\pi^D_0(s'))- Q^\pi(s',\pi(s'))\Big|\\
	\leq &\gamma\E_{s'\sim T(\cdot|s,a)}\Big(\Big|\E_{\pi^D_0} Q^{\pi^D_0}(s',\pi^D_0(s'))-Q^\pi(s',\pi^D_0(s'))\Big|+\Big|\E_{\pi^D_0,\pi}Q^\pi(s',\pi^D_0(s'))- Q^\pi(s',\pi(s'))\Big|\Big)\\
	\leq &\gamma\E_{s'\sim T(\cdot|s,a)}\Big(\E_{\pi^D_0}\Big| Q^{\pi^D_0}(s',\pi^D_0(s'))-Q^\pi(s',\pi^D_0(s'))\Big|+L_{\A}\wass{\pi^D_0(\cdot|s')}{\pi(\cdot|s')}\Big),
	\end{split}
	\end{equation*}
	where the last line follows from Lemma~\ref{lem:wass-bound}. Let $\rho_i^{\pi^D_0}$ be the state distribution at step $i$ following the laws of $(\hat{\rho}_0,\pi^D_0,T)$. Take expectation over $\hat{\rho}_0$ and expand the recursive relation. We arrive at
	
	\begin{align*}
	&\E_{(s_0,a_0)\sim\hat{\rho}_0(s)\pi^D_0(a|s)}\Big|Q^{\pi^D_0}(s_0,a_0)-Q^\pi(s_0,a_0)\Big|\leq L_{\A}\sum_{i=1}^\infty \gamma^i\E_{s_i\sim\rho_i^{\pi^D_0}}\wass{\pi^D_0(\cdot|s_i)}{\pi(\cdot|s_i)}\\
	=&\frac{L_{\A}}{1-\gamma}\E_{s\sim\rho^{\pi^D_0}_{\hat{\rho}_0}}\wass{\pi^D_0(\cdot|s)}{\pi(\cdot|s)}-L_{\A}\E_{s\sim\hat{\rho}}\wass{\pi^D_0(\cdot|s)}{\pi(\cdot|s)}\\
	\leq&\frac{L_{\A}}{1-\gamma}\E_{s\sim\rho^{\pi^D_0}_{\hat{\rho}_0}}\wass{\pi^D_0(\cdot|s)}{\pi(\cdot|s)},
	\end{align*}
	where the second line follows from $\rho^{\pi^D_0}_{\hat{\rho}_0}=(1-\gamma)\sum_{i=0}^\infty \gamma^i \rho_i^{\pi^D_0}$.
\end{proof}

\begin{lemma}
	If $Q(s,a)$ is $L_{\A}$-Lipschitz in $a$ for any $s$, then
	\begin{align*}
	&\E_{(s_0,a_0)\sim\hat{\rho}_0(s)\pi^D_0(a|s)}\Big|Q(s_0,a_0)-Q^{\pi^D_0}(s_0,a_0)\Big|\\
	\leq& \frac{\E_{(s,a)\sim\rho^{\pi^D_0}_{\hat{\rho}_0}(s)\pi^D_0(a|s)}\Big| Q(s,a) - \B^{\pi} Q(s,a)\Big| + L_{\A}\wass{\pi(\cdot|s)}{\pi^D_0(\cdot|s)}}{1-\gamma}.
	\end{align*}
	\label{lem:q-qpid}
\end{lemma}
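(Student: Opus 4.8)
The plan is to use that $Q^{\pi^D_0}$ is the unique fixed point of $\B^{\pi^D_0}$ and to unroll a one-step recursion for $|Q-Q^{\pi^D_0}|$, mirroring the proof of Lemma~\ref{lem:qpid-qpi}. Fixing $(s,a)$, I would write $Q(s,a)-Q^{\pi^D_0}(s,a)$ as the telescoping sum $(Q-\B^\pi Q)(s,a)+\gamma\,\E_{s'\sim T(\cdot|s,a)}[\E_{a'\sim\pi(\cdot|s')}Q(s',a')-\E_{a'\sim\pi^D_0(\cdot|s')}Q(s',a')]+\gamma\,\E_{s'\sim T(\cdot|s,a)}\E_{a'\sim\pi^D_0(\cdot|s')}[Q-Q^{\pi^D_0}](s',a')$, which merely inserts $\pm\B^\pi Q$ and $\pm Q(s',a')$ averaged under $\pi^D_0$ and uses $Q^{\pi^D_0}=\B^{\pi^D_0}Q^{\pi^D_0}$. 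Taking absolute values, moving $|\cdot|$ inside the $T$-expectations by Jensen's inequality, and bounding the middle bracket via Lemma~\ref{lem:wass-bound} gives the pointwise recursion $|Q-Q^{\pi^D_0}|(s,a)\le|Q-\B^\pi Q|(s,a)+\gamma L_{\A}\,\E_{s'\sim T(\cdot|s,a)}\wass{\pi(\cdot|s')}{\pi^D_0(\cdot|s')}+\gamma\,\E_{s'\sim T(\cdot|s,a)}\E_{a'\sim\pi^D_0(\cdot|s')}|Q-Q^{\pi^D_0}|(s',a')$.

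Next I would take $\E_{(s_0,a_0)\sim\hat{\rho}_0(s)\pi^D_0(a|s)}$ of both sides and iterate the recursion along the trajectory generated by $(\hat{\rho}_0,\pi^D_0,T)$. After $k$ expansions the self-referential term carries $\gamma^k$ times an expectation over the step-$k$ state--action law $\rho_k^{\pi^D_0}(s)\pi^D_0(a|s)$; assumptions (2), (3) bound $Q,Q^{\pi^N}$ and give $\wass{\pi(\cdot|s)}{\pi^D_0(\cdot|s)}\le\diam_{\A}$, so this remainder tends to $0$ and the infinite unrolling is legitimate. Summing, the Bellman-error terms contribute $\sum_{k\ge0}\gamma^k\E_{s_k\sim\rho_k^{\pi^D_0},\,a_k\sim\pi^D_0}|Q-\B^\pi Q|(s_k,a_k)=(1-\gamma)^{-1}\E_{(s,a)\sim\rho^{\pi^D_0}_{\hat{\rho}_0}(s)\pi^D_0(a|s)}|Q-\B^\pi Q|(s,a)$ by the definition $\rho^{\pi^D_0}_{\hat{\rho}_0}=(1-\gamma)\sum_{k\ge0}\gamma^k\rho_k^{\pi^D_0}$, while the mismatch terms contribute $L_{\A}\sum_{k\ge0}\gamma^{k+1}\E_{s_{k+1}\sim\rho_{k+1}^{\pi^D_0}}\wass{\pi(\cdot|s_{k+1})}{\pi^D_0(\cdot|s_{k+1})}=L_{\A}\sum_{j\ge1}\gamma^j\E_{s_j\sim\rho_j^{\pi^D_0}}\wass{\pi(\cdot|s_j)}{\pi^D_0(\cdot|s_j)}\le(1-\gamma)^{-1}L_{\A}\,\E_{s\sim\rho^{\pi^D_0}_{\hat{\rho}_0}}\wass{\pi(\cdot|s)}{\pi^D_0(\cdot|s)}$, the last inequality adding back the nonnegative $j=0$ term. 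Since $\wass{\pi(\cdot|s)}{\pi^D_0(\cdot|s)}$ depends only on $s$, these two pieces assemble into the claimed right-hand side.

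The algebraic insertion, Jensen's inequality, and the appeal to Lemma~\ref{lem:wass-bound} are routine; the step needing care is the reindexing $\sum_{k\ge0}\gamma^{k+1}\E_{\rho_{k+1}^{\pi^D_0}}(\cdot)=\sum_{j\ge1}\gamma^j\E_{\rho_j^{\pi^D_0}}(\cdot)$ together with the observation that dropping the $j=0$ term only weakens the bound — this is precisely what turns the next-state Wasserstein averages naturally produced by the recursion into the full occupancy-measure average $\E_{\rho^{\pi^D_0}_{\hat{\rho}_0}}\wass{\pi(\cdot|s)}{\pi^D_0(\cdot|s)}$ appearing in the statement. I expect this bookkeeping, plus checking that the $\gamma^k$ remainder vanishes so that unrolling to infinity is valid, to be the main (and fairly mild) obstacle.
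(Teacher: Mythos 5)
Your proposal is correct and takes essentially the same route as the paper's proof: the same telescoping insertion of $\B^{\pi}Q$ (equivalently $\B^{\pi^D_0}Q$), the same appeal to Lemma~\ref{lem:wass-bound} for the policy-mismatch term, and the same unrolling of the recursion into the occupancy measure $\rho^{\pi^D_0}_{\hat{\rho}_0}$. The only cosmetic difference is that you perform the $\B^{\pi^D_0}\to\B^{\pi}$ conversion pointwise before unrolling, while the paper unrolls first and swaps operators afterwards; your step of adding back the nonnegative $j=0$ Wasserstein term is exactly the paper's discarding of the $(1-\gamma)L_{\A}\E_{s\sim\hat{\rho}_0}$ term via the Bellman flow identity.
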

\begin{proof}
	For any $(s,a)\in\S\times \A$, we have a recursive relation:
	\begin{align*}
	&\Big|Q(s,a)-Q^{\pi^D_0}(s,a)\Big|=\Big|Q(s,a)-\B^{\pi^D_0} Q^{\pi^D_0}(s,a)\Big|\\
	\leq &\Big|Q(s,a)-\B^{\pi^D_0} Q(s,a) \Big| + \Big|\B^{\pi^D_0} Q(s,a) -  \B^{\pi^D_0} Q^{\pi^D_0}(s,a) \Big|\\
	=&\Big|Q(s,a)-\B^{\pi^D_0} Q(s,a) \Big| + \gamma \Big|\E_{s'\sim T(\cdot|s,a),a'\sim \pi^D_0(\cdot|s')}Q(s',a') -  Q^{\pi^D_0}(s',a') \Big|\\
	\leq &\Big|Q(s,a)-\B^{\pi^D_0} Q(s,a) \Big| + \gamma \E_{s'\sim T(\cdot|s,a),a'\sim \pi^D_0(\cdot|s')}\Big|Q(s',a') -  Q^{\pi^D_0}(s',a') \Big|
	\end{align*}
	Expand the recursive relation. We have
	\begin{equation}
	\begin{split}
	&\E_{(s_0,a_0)\sim\hat{\rho}_0(s)\pi^D_0(a|s)}\Big|Q(s_0,a_0)-Q^{\pi^D_0}(s_0,a_0)\Big|\\
	\leq  &\E_{s_0\sim\hat{\rho}_0}\E_{a_0\sim\pi^D_0(\cdot|s_0)}\Big|Q(s_0,a_0)-\B^{\pi^D_0} Q(s_0,a_0)\Big|\\
	&+\E_{s_0\sim\hat{\rho}_0}\sum_{i=1}^\infty  \gamma^i\E_{a_0,s_1,a_1,...,s_{i-1},a_{i-1}}\E_{s_i\sim T(\cdot|s_{i-1},a_{i-1}),a_i\sim\pi^D_0(\cdot|s_i)} \Big| Q(s_i,a_i) - \B^{\pi^D_0} Q(s_i,a_i)\Big|\\
	= & \E_{s_0\sim\hat{\rho}_0}\sum_{i=0}^\infty\gamma^i\E_{a_0,s_1,a_1,...,s_i,a_i\sim T,\pi^D_0}\Big|Q(s_i,a_i)-\B^{\pi^D_0} Q(s_i,a_i)\Big|\\
	=&\frac{1}{1-\gamma} \E_{(s,a)\sim\rho^{\pi^D_0}_{\hat{\rho}_0}(s)\pi^D_0(a|s)}\Big| Q(s,a) - \B^{\pi^D_0} Q(s,a)\Big|.
	\end{split}
	\label{eq:recur-expan}
	\end{equation}
	The last line uses a fact for the normalized occupancy measure :
	\begin{align*}
	\E_{\rho^{\pi^D_0}_{\hat{\rho}_0}}
	=(1-\gamma)\Big[\E_{s_0\sim\hat{\rho}_0}\E_{a_0\sim\pi^D_0(\cdot|s_0)}+\sum_{i=1}^\infty \gamma^i \E_{s_0\sim\hat{\rho}_0}\E_{a_0\sim\pi^D_0(\cdot|s_0)}...\E_{s_i\sim T(\cdot|s_{i-1},a_{i-1}),a_i\sim\pi^D_0(\cdot|s_i)}\Big].
	\end{align*}
	We are almost done once the $\B^{\pi^D_0}$ in Eq.~\eqref{eq:recur-expan} is replaced by $\B^\pi$. Note that
	\begin{equation}
	\begin{split}
	&\E_{(s,a)\sim\rho^{\pi^D_0}_{\hat{\rho}_0}(s)\pi^D_0(a|s)}\Big| Q(s,a) - \B^{\pi^D_0} Q(s,a)\Big|\\
	\leq& \E_{(s,a)\sim\rho^{\pi^D_0}_{\hat{\rho}_0}(s)\pi^D_0(a|s)}\Big| Q(s,a) - \B^{\pi} Q(s,a)\Big|+\Big|\B^{\pi} Q(s,a)-\B^{\pi^D_0} Q(s,a)\Big|\\
	=& \E_{(s,a)\sim\rho^{\pi^D_0}_{\hat{\rho}_0}(s)\pi^D_0(a|s)}\Big| Q(s,a) - \B^{\pi} Q(s,a)\Big|+\gamma\Big|\E_{s'\sim T(\cdot|s,a)}\E_{\pi,\pi^D_0} Q(s',\pi(s'))-Q(s',\pi^D_0(s'))\Big|\\
	\overset{Lem.~\ref{lem:wass-bound}}{\leq}& \E_{(s,a)\sim\rho^{\pi^D_0}_{\hat{\rho}_0}(s)\pi^D_0(a|s)}\bigg[\Big| Q(s,a) - \B^{\pi} Q(s,a)\Big|+\gamma L_{\A}\E_{s'\sim T(\cdot|s,a)}\wass{\pi(\cdot|s')}{\pi^D_0{\cdot}|s'}\bigg]\\
	=& \E_{(s,a)\sim\rho^{\pi^D_0}_{\hat{\rho}_0}(s)\pi^D_0(a|s)}\Big| Q(s,a) - \B^{\pi} Q(s,a)\Big|\\
	&+L_{\A}\E_{s\sim\rho^{\pi^D_0}_{\hat{\rho}_0}}\wass{\pi(\cdot|s)}{\pi^D_0(\cdot|s)}-(1-\gamma)L_{\A}\E_{s\sim\hat{\rho}_0}\wass{\pi(\cdot|s)}{\pi^D_0(\cdot|s)}\\
	\leq & \E_{(s,a)\sim\rho^{\pi^D_0}_{\hat{\rho}_0}(s)\pi^D_0(a|s)}\Big| Q(s,a) - \B^{\pi} Q(s,a)\Big| + L_{\A}\wass{\pi(\cdot|s)}{\pi^D_0(\cdot|s)}.
	\end{split}
	\label{eq:policy-change}
	\end{equation}
	The second-last line follows from  that the distribution of $s'$ is $\rho^{\pi^D_0}_{\hat{\rho}_0,1}(s')=\int_{sa} T(s'|s,a)\rho^{\pi^D_0}_{\hat{\rho}_0}(s,a)$, which satisfies the identity
	$$\rho^{\pi^D_0}_{\hat{\rho}_0}(s')=(1-\gamma)\hat{\rho}_0(s')+\gamma\int T(s'|s,a)\rho^{\pi^D_0}_{\hat{\rho}_0}(s,a)dsda=(1-\gamma)\hat{\rho}_0(s')+\gamma \rho^{\pi^D_0}_{\hat{\rho}_0,1}(s').$$ 
	Combining Eq.~\eqref{eq:recur-expan} and Eq.~\eqref{eq:policy-change}, the result follows.
\end{proof}

\begin{lemma}
	Let $0\leq f(s,a)\leq \Delta$ be a bounded function for $(s,a)\in \S\times \A$. Let $\tilde{E}_0^{\infty}$ be the averaging-discounted operator with infinite-length trajectories in $N$ episodes; i.e., $\tilde{E}_0^{\infty}f=\frac{1}{N}\sum_{e=1}^N\sum_{i=0}^\infty(1-\gamma)\gamma^i f(s_i^e,a_i^e)$. Then, with probability greather than $1-\delta$,
	$$\left(\tilde{E}_0^{\infty}-\E_{\rho_{\hat{\rho}_0}^{\pi^D_0}(s)\pi^D_0(a|s)}\right)f\leq\Delta\sqrt{\frac{2}{N}\log\frac{1}{\delta}}.$$
	\label{lem:mgd}
\end{lemma}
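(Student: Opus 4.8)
The plan is to split the randomness of $\tilde E_0^{\infty}f$ across episodes into a martingale difference sequence and then invoke the Azuma--Hoeffding inequality. First I would use Lemma~\ref{lem:av-occu_1} with $i=0$ (noting $\rho_0^e=\rho_0$ for every $e$, hence $\overline\rho_0=\hat\rho_0=\rho_0$) to rewrite the target expectation as an average over episodes:
\[
\E_{\rho_{\hat\rho_0}^{\pi^D_0}(s)\pi^D_0(a|s)} f
=\frac{1}{N}\sum_{e=1}^N \E_{(s,a)\sim \rho_{\rho_0}^{\pi^e}(s)\pi^e(a|s)}[f(s,a)]
=\frac{1}{N}\sum_{e=1}^N\sum_{i=0}^\infty(1-\gamma)\gamma^i\,\E_{s\sim\rho_i^e,\,a\sim\pi^e(\cdot|s)}[f(s,a)],
\]
where the last equality uses $\rho_{\rho_0}^{\pi^e}(s)\pi^e(a|s)=\sum_{i\ge 0}(1-\gamma)\gamma^i\rho_i^e(s)\pi^e(a|s)$ together with Tonelli's theorem to exchange the nonnegative, bounded sum with the integral.

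Next I would introduce, for each episode $e$, the $\gamma$-discounted empirical return $Y_e\triangleq\sum_{i=0}^\infty(1-\gamma)\gamma^i f(s_i^e,a_i^e)$, so that $\tilde E_0^{\infty}f=\tfrac1N\sum_{e=1}^N Y_e$; since $0\le f\le\Delta$ and $\sum_{i\ge0}(1-\gamma)\gamma^i=1$, the series converges and $0\le Y_e\le\Delta$. Let $\mathcal F_{e-1}$ be the $\sigma$-algebra generated by the trajectories of episodes $1,\dots,e-1$. Because $\hat Q$ and the policy update in Alg.~\ref{alg:offpolicy} are deterministic functions of the replay buffer, $\pi^e$ is $\mathcal F_{e-1}$-measurable, and since the episode-$e$ trajectory is drawn from $(\rho_0,\pi^e,T)$ we get $\E[Y_e\mid\mathcal F_{e-1}]=\E_{(s,a)\sim\rho_{\rho_0}^{\pi^e}(s)\pi^e(a|s)}[f(s,a)]$, which is exactly the $e$-th summand in the first display. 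Setting $X_e\triangleq Y_e-\E[Y_e\mid\mathcal F_{e-1}]$ then gives a martingale difference sequence adapted to $\{\mathcal F_e\}$ with $|X_e|\le\Delta$, and combining with the first display, $\tfrac1N\sum_{e=1}^N X_e=\tilde E_0^{\infty}f-\E_{\rho_{\hat\rho_0}^{\pi^D_0}(s)\pi^D_0(a|s)}f$.

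Finally I would apply the one-sided Azuma--Hoeffding inequality: with $|X_e|\le\Delta$ we have $\Pr\!\big(\sum_{e=1}^N X_e\ge t\big)\le\exp\!\big(-t^2/(2N\Delta^2)\big)$, and choosing $t=\Delta\sqrt{2N\log(1/\delta)}$ and dividing by $N$ yields the claimed bound $\Delta\sqrt{\tfrac{2}{N}\log\tfrac1\delta}$. The main obstacle is precisely the step that rules out treating the episodes as i.i.d.\ samples: the policies $\pi^e$ are data-dependent, so a plain Hoeffding bound does not apply; the remedy is to condition on $\mathcal F_{e-1}$, which makes $\E[Y_e\mid\mathcal F_{e-1}]$ line up with the $e$-th term of the averaged-occupancy expectation, so that the centered returns form a martingale difference sequence and Azuma--Hoeffding applies. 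A minor bookkeeping point is that using the crude bound $|X_e|\le\Delta$ (rather than the tighter width-$\Delta$ range of $X_e$) is what produces the constant $\sqrt2$ in the statement instead of $\sqrt{1/2}$.
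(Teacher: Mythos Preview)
Your overall strategy---expressing the difference as an average of per-episode centered returns and then invoking Azuma--Hoeffding---is exactly the paper's proof. The one genuine slip is the identification $\hat\rho_0=\overline\rho_0=\rho_0$. In this lemma $\hat\rho_0$ denotes the \emph{empirical} distribution of the realized initial states, $\hat\rho_0=\tfrac1N\sum_{e=1}^N\delta_{s_0^e}$, not the population initial distribution $\rho_0$ (this is why Theorem~\ref{thm:q-qpi} needs the separate Hoeffding step $H_1$ to pass from $\hat\rho_0$ to $\rho_0$). With that reading, the instance of Lemma~\ref{lem:av-occu} you need is the one with Dirac initial measures,
\[
\rho_{\hat\rho_0}^{\pi_0^D}(s)\,\pi_0^D(a\mid s)\;\overset{a.e.}{=}\;\frac{1}{N}\sum_{e=1}^N\rho_{\delta_{s_0^e}}^{\pi^e}(s)\,\pi^e(a\mid s),
\]
and the corresponding centering is $M^e\triangleq Y_e-\E\!\big[Y_e\mid s_0^e,\mathcal F_{e-1}\big]$ rather than your $X_e=Y_e-\E[Y_e\mid\mathcal F_{e-1}]$. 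The sequence $\{M^e\}$ is still a martingale difference (by the tower property $\E[M^e\mid\mathcal F_{e-1}]=0$) with $|M^e|\le\Delta$, and your Azuma--Hoeffding computation then goes through verbatim.

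As written, your argument is internally consistent but establishes the bound with $\rho_0$ in place of $\hat\rho_0$; that variant would in fact make the $H_1$ step in Theorem~\ref{thm:q-qpi} redundant, but it is not the lemma as stated. The fix is purely a matter of which initial measure you plug into Lemma~\ref{lem:av-occu} and, correspondingly, whether you also condition on $s_0^e$ when centering.
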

\begin{proof}
	Let $\delta_{s_0^e}(s)$ be the delta measure at the initial state in episode $e$. Because the empirical distribution $\hat{\rho}_0$ is composed of the realization of the trajectories's initial states, we know $\hat{\rho}_0=\frac{1}{N}\sum_{e=1}^N\delta_{s_0^e}$. Then, Lemma~\ref{lem:av-occu_1} implies $\rho_{\hat{\rho}_0}^{\pi^D_0}$ is an average of the normalized occupancy measures in $N$ episodes.
	\begin{equation}
	\begin{split}
	\rho_{\hat{\rho}_0}^{\pi^D_0}(s)\pi^D_0(a|s)\overset{a.e.}{=}&\frac{1}{N}\sum_{e=1}^N\rho_{\delta_{s_0^e}}^{\pi^e}(s)\pi^e(a|s)\\
	\overset{def.~of~\rho_{\delta_{s_0^e}}^{\pi^e}}{=}&\frac{1}{N}\sum_{e=1}^N\sum_{i=0}^\infty(1-\gamma)\gamma^i\rho_i^e(s|\delta_{s_0^e},\pi^e,T)\pi^e(a|s),
	\end{split}
	\label{eq:occu-expand}
	\end{equation}
	where $\rho_i^e$ is the state density at trajectory step $i$ in episode $e$. Since $(s_i^e,a_i^e)\sim\rho_i^e(s|\delta_{s_0^e},\pi^e,T)\pi^e(a|s)$, we have
	\begin{equation}
	\begin{split}
	\left(\tilde{E}_0^{\infty}-\E_{\rho_{\hat{\rho}_0}^{\pi^D_0}(s)\pi^D_0(a|s)}\right)f&=\frac{1}{N}\sum_{e=1}^N\left[\sum_{i=0}^\infty(1-\gamma)\gamma^i f(s_i^e,a_i^e)-\E_{\rho_{\delta_{s_0^e}}^{\pi^e}(s)\pi^e(a|s)}f\right]\\
	&\overset{\eqref{eq:occu-expand}}{=}\frac{1}{N}\sum_{e=1}^N\left[\sum_{i=0}^\infty(1-\gamma)\gamma^i \big[f(s_i^e,a_i^e)-\E[f(s_i^e,a_i^e)|s_0^e]\big]\right]
	=\frac{1}{N}\sum_{e=1}^NM^e.
	\end{split}
	\label{eq:main-mgd}
	\end{equation}
	
	We claim that $\{M^e\}_{e=1}^N$ defined as $M^e=\sum_{i=0}^\infty(1-\gamma)\gamma^i \big[f(s_i^e,a_i^e)-\E[f(s_i^e,a_i^e)|s_0^e]\big]$ is a martingale difference sequence. To see this, let $\F^e$ be the filtration of all randomness from episode $1$ to $e$, with $\F^0$ being a null set. Clearly, we have $M^e\in \F^e$, $\E[M^e|\F^{e-1}]=0$ and $M^e\in [-\Delta,\Delta]$ since $f(s,a)\in[0,\Delta]$ by assumption, which proves $\{M^e\}_{e=1}^N$ is a martingale difference sequence.
	
	Finally, since $M^e$ is bounded in $[-\Delta,\Delta]$, by Azuma-Hoeffding inequality, we conclude that with probability greater than $1-\delta$,
	$$
	\text{Eq.~}\eqref{eq:main-mgd}\leq N^{-1}\sqrt{\frac{\sum_{e=1}^N(2\Delta)^2}{2}\log\frac{1}{\delta}}= \Delta\sqrt{\frac{2}{N}\log\frac{1}{\delta}}.
	$$
\end{proof}
\begin{theorem}
	Let $N$ be the number of episodes. For $f\colon \S\times \A \to \mathbb{R},$ define the operator
	$\tilde{E}_0^{\infty}f \triangleq \frac{1}{N}\sum_{e=1}^N\sum_{i=0}^\infty(1-\gamma)\gamma^i f(s_i^e,a_i^e)$. Denote the policy mismatch error and the Bellman error as $W_1^{0,\infty}=\tilde{E}_0^{\infty}\wass{\pi^N}{\pi^D_0}(\cdot)$ and $\epsilon^{0,\infty}_{\hat{Q}}=\tilde{E}_0^{\infty}|\hat{Q}(\cdot,\cdot)-(\B^{\pi^N} \hat{Q})(\cdot,\cdot)|$, respectively. Assume
	\begin{itemize}
		\item[(1)] For each $s\in \S,$ $\hat{Q}(s,a)$ and $Q^{\pi^N}(s,a)$ are $L_{\A}$-Lipschitz in $a.$
		
		\item[(2)] $\hat{Q}(s,a)$ and $Q^{\pi^N}(s,a)$ are bounded and take values in $[0,r^{\max}/(1-\gamma)]$.
		
		\item[(3)] The action space is bounded with diameter $\diam_{\A}<\infty$. 	
	\end{itemize}
	Note $\wass{\pi^N}{\pi^D_i}(s)$ means $\wass{\pi^N(\cdot|s)}{\pi^D_i(\cdot|s)}$, which is a function of $s$. Then, with probability greater than $1-\delta$, we have
	\begin{align*}
	&\E_{(s_0,a_0)\sim\rho_0(s)\pi^D_0(a|s)} \Big| \hat{Q}(s_0,a_0)-Q^{\pi^N}(s_0,a_0) \Big|\\
	\leq& \frac{r^{\max}}{1-\gamma}\sqrt{\frac{1}{2N}\log\frac{2}{\delta}}+\Big(\frac{r^{\max}}{(1-\gamma)^2}+\frac{2L_{\A}\diam_{\A}}{1-\gamma}\Big)\sqrt{\frac{2}{N}\log \frac{2}{\delta}}+ \frac{1}{1-\gamma}\Big( \epsilon_{\hat{Q}}^{0,\infty}+ 2L_{\A}W_1^{0,\infty}\Big).
	\end{align*}
	\label{thm:q-qpi_1}
\end{theorem}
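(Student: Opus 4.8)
\begin{sproof}
The idea is to reduce $\bigl|\hat Q - Q^{\pi^N}\bigr|$ to a Bellman residual plus a policy-mismatch Wasserstein term, both evaluated against the averaged occupancy measure $\rho^{\pi^D_0}_{\hat{\rho}_0}$, via the chain of structural lemmas, and then to pay two concentration prices. First apply the pointwise triangle inequality $|\hat Q - Q^{\pi^N}| \le |\hat Q - Q^{\pi^D_0}| + |Q^{\pi^D_0} - Q^{\pi^N}|$ and take $\E_{(s,a)\sim\hat{\rho}_0(s)\pi^D_0(a|s)}$. Bounding the first summand by Lemma~\ref{lem:q-qpid} with $Q=\hat Q$, $\pi=\pi^N$ and the second by Lemma~\ref{lem:qpid-qpi} with $\pi=\pi^N$, and using the symmetry of $W_1$, these combine into
\[
\E_{\hat{\rho}_0\pi^D_0}\bigl|\hat Q - Q^{\pi^N}\bigr| \;\le\; \E_{(s,a)\sim\rho^{\pi^D_0}_{\hat{\rho}_0}(s)\pi^D_0(a|s)}\, g(s,a),\qquad g \;:=\; \tfrac{1}{1-\gamma}\bigl|\hat Q - \B^{\pi^N}\hat Q\bigr| + \tfrac{2L_{\A}}{1-\gamma}\wass{\pi^N}{\pi^D_0},
\]
where the state-only Wasserstein term is absorbed into a single function $g$ on $\S\times\A$ because the $s$-marginal of $\rho^{\pi^D_0}_{\hat{\rho}_0}(s)\pi^D_0(a|s)$ is $\rho^{\pi^D_0}_{\hat{\rho}_0}$.

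Next bound $g$ uniformly and concentrate. Assumption (2) puts both $\hat Q$ and $\B^{\pi^N}\hat Q = r + \gamma\,\E\hat Q$ into $[0,\,r^{\max}/(1-\gamma)]$, so $|\hat Q - \B^{\pi^N}\hat Q| \le r^{\max}/(1-\gamma)$; and $\wass{\pi^N}{\pi^D_0}(s)\le\diam_{\A}$ by assumption (3), since any coupling moves mass a distance at most $\diam_{\A}$. Hence $0\le g\le \Delta$ with $\Delta = \frac{r^{\max}}{(1-\gamma)^2}+\frac{2L_{\A}\diam_{\A}}{1-\gamma}$, and Lemma~\ref{lem:mgd} applied to $f=g$ gives, with probability $\ge 1-\delta/2$, $\E_{\rho^{\pi^D_0}_{\hat{\rho}_0}\pi^D_0}\, g \le \tilde{E}_0^{\infty} g + \Delta\sqrt{\tfrac{2}{N}\log\tfrac{2}{\delta}}$, while linearity of $\tilde{E}_0^{\infty}$ identifies $\tilde{E}_0^{\infty} g = \tfrac{1}{1-\gamma}\epsilon^{0,\infty}_{\hat Q} + \tfrac{2L_{\A}}{1-\gamma}W_1^{0,\infty}$; this produces all but the first term of the claimed bound. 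Finally, since every episode is initialized from the same $\rho_0$, $\hat{\rho}_0=\tfrac1N\sum_e\delta_{s_0^e}$ is the empirical law of $N$ i.i.d.\ draws from $\rho_0$ and $|\hat Q - Q^{\pi^N}|\le r^{\max}/(1-\gamma)$, so Hoeffding gives $\E_{\rho_0\pi^D_0}|\hat Q - Q^{\pi^N}| \le \E_{\hat{\rho}_0\pi^D_0}|\hat Q - Q^{\pi^N}| + \tfrac{r^{\max}}{1-\gamma}\sqrt{\tfrac{1}{2N}\log\tfrac{2}{\delta}}$ with probability $\ge 1-\delta/2$; a union bound over the two events finishes the proof.

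I expect the main obstacle to be the concentration step. One must invoke Lemma~\ref{lem:av-occu_1} with each realized episode treated as started from its own point mass $\delta_{s_0^e}$ to see that $\rho^{\pi^D_0}_{\hat{\rho}_0}(s)\pi^D_0(a|s)$ equals the per-episode average $\tfrac1N\sum_e\rho^{\pi^e}_{\delta_{s_0^e}}(s)\pi^e(a|s)$, so that the per-episode deviations of $g$ form a martingale-difference sequence with respect to the episode filtration and Azuma--Hoeffding applies with range $2\Delta$. A related subtlety is that $\hat Q$ and $\pi^D_0$ depend on the whole sample; the inequalities above are to be read with these regarded as fixed, the only randomness in Lemma~\ref{lem:mgd} being the within-episode trajectory noise given the realized initial states and policies (and likewise the Hoeffding step treats $s\mapsto\E_{a\sim\pi^D_0(\cdot|s)}|\hat Q(s,a)-Q^{\pi^N}(s,a)|$ as a fixed bounded function of $s$). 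The only other thing to watch is extracting the tight constant $r^{\max}/(1-\gamma)$, rather than twice that, on the Bellman residual, which uses nonnegativity of both $\hat Q$ and $\B^{\pi^N}\hat Q$.
\end{sproof}
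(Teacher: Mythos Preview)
Your proposal is correct and follows essentially the same route as the paper's proof: split $|\hat Q-Q^{\pi^N}|$ through $Q^{\pi^D_0}$, apply Lemmas~\ref{lem:q-qpid} and~\ref{lem:qpid-qpi} to land on an expectation over $\rho^{\pi^D_0}_{\hat\rho_0}(s)\pi^D_0(a|s)$ of the combined Bellman-plus-Wasserstein integrand, then use Lemma~\ref{lem:mgd} (Azuma--Hoeffding across episodes) for that term and a separate Hoeffding for $\rho_0$ versus $\hat\rho_0$, finishing with a union bound. The only cosmetic difference is that the paper keeps the $1/(1-\gamma)$ factor outside the function handed to Lemma~\ref{lem:mgd} and multiplies afterward, whereas you fold it into $g$; and you explicitly flag the measurability issue (that $\hat Q$ and $\pi^D_0$ depend on the full sample), which the paper simply leaves implicit.
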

\begin{proof}
	The proof basically combines Lemma~\ref{lem:qpid-qpi},~\ref{lem:q-qpid} and \ref{lem:mgd}. To start with, the objective is decomposed as:
	\begin{equation}
	\begin{split}
	&\E_{(s_0,a_0)\sim\rho_0(s)\pi^D_0(a|s)} \Big| \hat{Q}(s_0,a_0)-Q^{\pi^N}(s_0,a_0) \Big|\\
	= &H_1 + \E_{(s_0,a_0)\sim\hat{\rho}_0(s)\pi^D_0(a|s)} \Big| \hat{Q}(s_0,a_0)-Q^{\pi^N}(s_0,a_0) \Big|\\
	\leq &H_1+\E_{(s,a)\sim\hat{\rho}_0(s)\pi^D_0(a|s)} \Big| \hat{Q}(s_0,a_0)-Q^{\pi^D_0}(s_0,a_0) \Big| + \Big| Q^{\pi^D_0}(s_0,a_0)-Q^{\pi^N}(s_0,a_0) \Big|\\
	\overset{\ref{lem:q-qpid},~\ref{lem:qpid-qpi}}{\leq} &H_1 + \frac{1}{1-\gamma}\E_{(s,a)\sim\rho^{\pi^D_0}_{\hat{\rho}_0}(s)\pi^D_0(a|s)}\bigg[ \Big| \hat{Q}(s,a) - (\B^{\pi^N} \hat{Q})(s,a)\Big| + 2L_{\A}\wass{\pi^N(\cdot|s)}{\pi^D_0(\cdot|s)}\bigg]\\
	=&H_1 + \frac{H_2}{1-\gamma}+ \frac{1}{1-\gamma}\tilde{E}_0^{\infty}\bigg[ \Big| \hat{Q}(\cdot,\cdot) - (\B^{\pi^N} \hat{Q})(\cdot,\cdot)\Big| + 2L_{\A}\wass{\pi^N}{\pi^D_0}(\cdot)\bigg]\\
	\overset{\text{Assump.}}{\leq}&H_1 + \frac{H_2}{1-\gamma}+ \frac{1}{1-\gamma}\Big( \epsilon_{\hat{Q}}^{0,\infty}+ 2L_{\A}W_1^{0,\infty}\Big).
	\end{split}
	\label{eq:main-bound}
	\end{equation}
	Because $|\hat{Q}(s,a)-Q^{\pi^N}(s,a)|\in [0, \frac{r^{\max}}{1-\gamma}]$, we know $\E_{a\sim\pi^D_0(\cdot|s)}|\hat{Q}(s,a)-Q^{\pi^N}(s,a)|\in [0, \frac{r^{\max}}{1-\gamma}]$, too. Suppose $\hat{\rho}_0$ have $N$ independent samples. By Hoeffding's inequality, with probability $\geq 1-\delta$,
	\begin{equation}
	\begin{split}
	H_1&=\big(\E_{s_0\sim\rho_0}-\E_{s_0\sim\hat{\rho}_0} \big)\E_{a_0\sim\pi^D_0(\cdot|s_0)}\big| \hat{Q}(s_0,a_0)-Q^{\pi^N}(s_0,a_0) \big|
	\leq  \frac{r^{\max}}{1-\gamma}\sqrt{\frac{1}{2N}\log\frac{1}{\delta}}.
	\end{split}
	\label{eq:h1}
	\end{equation}
	Also, let $f(s,a)=\big| \hat{Q}(s,a) - (\B^{\pi^N} \hat{Q})(s,a)\big| + 2L_{\A}\wass{\pi^N(\cdot|s)}{\pi^D_0(\cdot|s)}$. Then $f(s,a)$ is bounded in $[0,\frac{r^{\max}}{1-\gamma}+2L_{\A}\diam_{\A}]$. Thereby, Lemma~\ref{lem:mgd} implies that with probability greater than $1-\delta$,
	\begin{equation}
	\begin{split}
	H_2=&\Big(\E_{(s,a)\sim\rho^{\pi^D_0}_{\hat{\rho}_0}(s)\pi^D_0(a|s)}-\tilde{E}_0^\infty\Big)\bigg[ \Big| \hat{Q}(s,a) - (\B^{\pi^N} \hat{Q})(s,a)\Big| + 2L_{\A}\wass{\pi^N(\cdot|s)}{\pi^D_0(\cdot|s)}\bigg]\\
	\leq&\Big(\frac{r^{\max}}{1-\gamma}+2L_{\A}\diam_{\A}\Big)\sqrt{\frac{2}{N}\log \frac{1}{\delta}}
	\end{split}
	\label{eq:h2}
	\end{equation}
	Combining Eq.~\eqref{eq:main-bound},~\eqref{eq:h1} and \eqref{eq:h2}, a union bound implies with probability greater than $1-2\delta$,
	\begin{align*}
	&\E_{(s_0,a_0)\sim\rho_0(s)\pi^D_0(a|s)} \Big| \hat{Q}(s_0,a_0)-Q^{\pi^N}(s_0,a_0) \Big|\\
	\leq& \frac{r^{\max}}{1-\gamma}\sqrt{\frac{1}{2N}\log\frac{1}{\delta}}+\Big(\frac{r^{\max}}{(1-\gamma)^2}+\frac{2L_{\A}\diam_{\A}}{1-\gamma}\Big)\sqrt{\frac{2}{N}\log \frac{1}{\delta}}+ \frac{1}{1-\gamma}\Big( \epsilon_{\hat{Q}}^{0,\infty}+ 2L_{\A}W_1^{0,\infty}\Big)
	\end{align*}
	Finally, rescaling $\delta$ to $\delta/2$ finishes the proof.
\end{proof}

\begin{corollary}
	Let $\tilde{E}_i^{L}$ be the operator defined as $\tilde{E}_i^{L}f=\frac{1}{N}\sum_{e=1}^N\sum_{j=i}^{L-1}(1-\gamma)\gamma^{j-i} f(s_j^e,a_j^e)$. Fix assumptions (1) (2) (3) of Theorem~\ref{thm:q-qpi_1}. Rewrite the policy mismatch error and the Bellman error as $W_1^{i,L}=\tilde{E}_i^L\wass{\pi^N}{\pi^D_i}(\cdot)$ and $\epsilon_{\hat{Q}}^{i,L}=\tilde{E}_i^L|\hat{Q}(\cdot,\cdot)-(\B^{\pi^N} \hat{Q})(\cdot,\cdot)|$, respectively, where $\wass{\pi^N}{\pi^D_i}(s)=\wass{\pi^N(\cdot|s)}{\pi^D_i(\cdot|s)}$. Let $\overline{\rho}_i(s)$ be the average state density at trajectory step $i$ over all $N$ episodes. Then, with probability greater than $1-\delta$, we have
	\begin{align*}
	&\E_{(s_i,a_i)\sim\overline{\rho}_i(s)\pi^D_i(a|s)} \Big| \hat{Q}(s_i,a_i)-Q^{\pi^N}(s_i,a_i) \Big|\leq\\
	& \frac{r^{\max}}{1-\gamma}\sqrt{\frac{1}{2N}\log\frac{2}{\delta}}+\Big(\frac{r^{\max}}{(1-\gamma)^2}+\frac{2L_{\A}\diam_{\A}}{1-\gamma}\Big)\Big(\sqrt{\frac{2}{N}\log \frac{2}{\delta}}+\gamma^{L-i}\Big)+ \frac{1}{1-\gamma}\Big( \epsilon_{\hat{Q}}^{i,L}+ 2L_{\A}W_1^{i,L}\Big).
	\end{align*}
	Moreover, if $i\leq L-\frac{\log \epsilon}{\log\gamma}$ and the constants are normalized as $L_{\A}=c/(1-\gamma)$, $\epsilon_{\hat{Q}}^{i,L}=\xi_{\hat{Q}}^{i,L}/(1-\gamma)$, then, with probability greater than $1-\delta$, we have
	\begin{align*}
	&\E_{(s_i,a_i)\sim\overline{\rho}_i(s)\pi^D_i(a|s)} \Big| \hat{Q}(s_i,a_i)-Q^{\pi^N}(s_i,a_i) \Big|\\
	\leq&\tilde{O}\Big(\frac{1}{(1-\gamma)^2}\Big((r^{\max}+c\cdot\diam_{\A})(1/ \sqrt{N}+\epsilon)+\xi_{\hat{Q}}^{i,L}+c\cdot W_1^{i,L}\Big)\Big)
	\end{align*}
	\label{cor:q-qpi_i_1}
\end{corollary}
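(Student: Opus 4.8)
The plan is to obtain both inequalities straight from Theorem~\ref{thm:q-qpi_1} by three reductions: shift the trajectory origin from step $0$ to step $i$, truncate the infinite horizon to length $L$, and substitute the normalizations $L_{\A}=c/(1-\gamma)$, $\epsilon_{\hat Q}^{i,L}=\xi_{\hat Q}^{i,L}/(1-\gamma)$. For the shift I would re-run the proof of Theorem~\ref{thm:q-qpi_1} with $\rho_0$ replaced by $\overline{\rho}_i$, $\pi^D_0$ replaced by $\pi^D_i$, and every step-$0$ discounted sum $\sum_{j\ge 0}(1-\gamma)\gamma^{j}(\cdot)$ replaced by $\sum_{j\ge i}(1-\gamma)\gamma^{j-i}(\cdot)$, i.e. by $\tilde E_i^{\infty}$ (the $L\to\infty$ analogue of the operator in the statement). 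Lemma~\ref{lem:av-occu_1} supplies the crucial identity $\rho_{\overline{\rho}_i}^{\pi^D_i}(s)\pi^D_i(a|s)=\tfrac1N\sum_{e=1}^N\rho_{\rho_i^e}^{\pi^e}(s)\pi^e(a|s)$ a.e., exactly what Lemma~\ref{lem:mgd} used at $i=0$; Lemmas~\ref{lem:qpid-qpi} and \ref{lem:q-qpid}, invoked with $\pi=\pi^N$ and $(\hat\rho_0,\pi^D_0)\mapsto(\overline{\rho}_i,\pi^D_i)$, then reproduce the decomposition \eqref{eq:main-bound}, and the concentration Lemma~\ref{lem:mgd} with $\tilde E_0^\infty$ replaced by $\tilde E_i^\infty$ controls the analogue of $H_2$ as in \eqref{eq:h2}. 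The delicate point is the analogue of $H_1$ in \eqref{eq:h1}: for $i\ge 1$ the reference $\overline{\rho}_i=\tfrac1N\sum_e\rho_i^e$ is random rather than the fixed $\rho_0$, so i.i.d.\ Hoeffding does not apply directly; instead I would observe that $\pi^e=\pi_{\theta_e}$ is measurable w.r.t.\ the data of episodes $<e$, so conditioned on that history $s_i^e$ has the (history-measurable) law $\rho_i^e$, whence $\tfrac1N\sum_e\big(\E_{s\sim\rho_i^e}g(s)-g(s_i^e)\big)$ with $g(s)=\E_{a\sim\pi^D_i(\cdot|s)}|\hat Q(s,a)-Q^{\pi^N}(s,a)|\in[0,r^{\max}/(1-\gamma)]$ is a martingale difference sequence of conditional range $r^{\max}/(1-\gamma)$, so the martingale form of Hoeffding's inequality still yields the $\tfrac{r^{\max}}{1-\gamma}\sqrt{\tfrac1{2N}\log\tfrac1\delta}$ term. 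A union bound and rescaling $\delta$ then give the bound of Theorem~\ref{thm:q-qpi_1} with $\epsilon_{\hat Q}^{i,\infty}$, $W_1^{i,\infty}$ (built from $\tilde E_i^\infty$) in place of the $i=0$ quantities.

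\textbf{Step 2 (truncate the horizon).} As in the proof of Theorem~\ref{thm:q-qpi_1}, $0\le|\hat Q-\B^{\pi^N}\hat Q|\le r^{\max}/(1-\gamma)$ because $\hat Q$, $Q^{\pi^N}$ and $\B^{\pi^N}\hat Q$ all lie in $[0,r^{\max}/(1-\gamma)]$, and $\wass{\pi^N}{\pi^D_i}(s)\le\diam_{\A}$. The discount-weight mass beyond step $L$ is $\sum_{j\ge L}(1-\gamma)\gamma^{j-i}=\gamma^{L-i}$, so $\epsilon_{\hat Q}^{i,\infty}\le\epsilon_{\hat Q}^{i,L}+\tfrac{r^{\max}}{1-\gamma}\gamma^{L-i}$ and $W_1^{i,\infty}\le W_1^{i,L}+\diam_{\A}\gamma^{L-i}$. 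Substituting these into the Step~1 bound and collecting the two $\gamma^{L-i}$ contributions under the common coefficient $\tfrac{r^{\max}}{(1-\gamma)^2}+\tfrac{2L_{\A}\diam_{\A}}{1-\gamma}$ gives precisely the first displayed inequality of the corollary.

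\textbf{Step 3 (normalize).} When $i\le L-\tfrac{\log\epsilon}{\log\gamma}$, i.e.\ $L-i\ge\tfrac{\log\epsilon}{\log\gamma}$, monotonicity of $t\mapsto\gamma^t$ gives $\gamma^{L-i}\le\gamma^{(\log\epsilon)/\log\gamma}=\epsilon$. Plugging $L_{\A}=c/(1-\gamma)$ turns the coefficient $\tfrac{r^{\max}}{(1-\gamma)^2}+\tfrac{2L_{\A}\diam_{\A}}{1-\gamma}$ into $\tfrac{r^{\max}+2c\,\diam_{\A}}{(1-\gamma)^2}$, plugging $\epsilon_{\hat Q}^{i,L}=\xi_{\hat Q}^{i,L}/(1-\gamma)$ turns $\tfrac{1}{1-\gamma}(\epsilon_{\hat Q}^{i,L}+2L_{\A}W_1^{i,L})$ into $\tfrac{\xi_{\hat Q}^{i,L}+2c\,W_1^{i,L}}{(1-\gamma)^2}$, and bounding $\tfrac1{1-\gamma}\le\tfrac1{(1-\gamma)^2}$ in the first term while absorbing $\sqrt{\log(2/\delta)}$ and numerical constants into $\tilde O(\cdot)$ yields the claimed $\tilde O\big(\tfrac1{(1-\gamma)^2}((r^{\max}+c\,\diam_{\A})(1/\sqrt N+\epsilon)+\xi_{\hat Q}^{i,L}+c\,W_1^{i,L})\big)$.

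\textbf{Main obstacle.} Steps 2 and 3 are mechanical bookkeeping; the substantive point is Step 1, namely verifying that the entire chain of lemmas behind Theorem~\ref{thm:q-qpi_1} is invariant under re-indexing the trajectory origin — and, in particular, that the concentration argument survives when the ``initial distribution'' $\overline{\rho}_i$ is itself random. The martingale observation above (conditioned on episodes $<e$, the step-$i$ state is drawn from the history-measurable law $\rho_i^e$) is what makes this go through.
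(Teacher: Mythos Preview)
Your proposal is correct and follows essentially the same three-step route as the paper: re-index Theorem~\ref{thm:q-qpi_1} via the substitution $(\overline{\rho}_0,\pi^D_0,\tilde E_0^\infty)\to(\overline{\rho}_i,\pi^D_i,\tilde E_i^\infty)$, truncate using $\tilde E_i^\infty f\le\tilde E_i^L f+\gamma^{L-i}\|f\|_\infty$, then normalize. You are in fact more careful than the paper on the $H_1$ term for $i\ge1$ --- the paper simply asserts that the substitution goes through, whereas you correctly note that the $s_i^e$ are no longer i.i.d.\ from a fixed law and supply the martingale-difference argument (conditioning on episodes $<e$) that justifies the same Hoeffding-type bound.
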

\begin{proof}
	Notice that Theorem~\ref{thm:q-qpi_1} is the situation when $i=0$ and $L=\infty$. To push this to $i\geq0$ and $L=\infty$, recall that Lemma~\ref{lem:av-occu_1} defines the behavior policy $\pi^D_i$, the average state distribution $\overline{\rho}_i$ and the state occupancy measure $\rho_{\overline{\rho}_i}^{\pi^D_i}$ at step $i$. Also, since the trajectories in each episode are initialized using the same initial state distribution, we have $\rho_0=\overline{\rho}_0$. Therefore, the objective of Theorem~\ref{thm:q-qpi_1} is actually 
	$\E_{(s_0,a_0)\sim\overline{\rho}_0(s)\pi^D_0(a|s)} \Big| \hat{Q}(s_0,a_0)-Q^{\pi^N}(s_0,a_0) \Big|$. This is generalized to $i\geq 0$ using substitutions: $(\overline{\rho}_0,~\pi^D_0,~\tilde{E}_0^\infty)\rightarrow (\overline{\rho}_i,~\pi^D_i,~\tilde{E}_i^\infty)$, yielding
	\begin{align*}
	&\E_{(s_i,a_i)\sim\overline{\rho}_i(s)\pi^D_i(a|s)} \Big| \hat{Q}(s_i,a_i)-Q^{\pi^N}(s_i,a_i) \Big|\\
	\leq& \frac{r^{\max}}{1-\gamma}\sqrt{\frac{1}{2N}\log\frac{2}{\delta}}+\Big(\frac{r^{\max}}{(1-\gamma)^2}+\frac{2L_{\A}\diam_{\A}}{1-\gamma}\Big)\sqrt{\frac{2}{N}\log \frac{2}{\delta}}+ \frac{1}{1-\gamma}\Big( \epsilon_{\hat{Q}}^{i,\infty}+ 2L_{\A}W_1^{i,\infty}\Big).
	\end{align*}
	Finally, observe that for any bounded $f$:
	$$\tilde{E}_i^\infty f\leq \tilde{E}_i^L f + \gamma^{L-i}\norm{f}_\infty$$
	We arrive at 
	\begin{align*}
	&\E_{(s_i,a_i)\sim\overline{\rho}_i(s)\pi^D_i(a|s)} \Big| \hat{Q}(s_i,a_i)-Q^{\pi^N}(s_i,a_i) \Big|\\
	\leq& \frac{r^{\max}}{1-\gamma}\sqrt{\frac{1}{2N}\log\frac{2}{\delta}}+\Big(\frac{r^{\max}}{(1-\gamma)^2}+\frac{2L_{\A}\diam_{\A}}{1-\gamma}\Big)\Big(\sqrt{\frac{2}{N}\log \frac{2}{\delta}}+\gamma^{L-i}\Big)+ \frac{1}{1-\gamma}\Big( \epsilon_{\hat{Q}}^{i,L}+ 2L_{\A}W_1^{i,L}\Big).
	\end{align*}
	
	As for the second conclusion, with the substitutions in the statement, we know $\gamma^{L-i}\leq \epsilon$. Hence,
	\begin{align*}
	&\E_{(s_i,a_i)\sim\overline{\rho}_i(s)\pi^D_i(a|s)} \Big| \hat{Q}(s_i,a_i)-Q^{\pi^N}(s_i,a_i) \Big|\\
	\leq& \frac{r^{\max}}{1-\gamma}\sqrt{\frac{1}{2N}\log\frac{2}{\delta}}+\Big(\frac{r^{\max}}{(1-\gamma)^2}+\frac{2c\cdot\diam_{\A}}{(1-\gamma)^2}\Big)\Big(\sqrt{\frac{2}{N}\log \frac{2}{\delta}}+\epsilon\Big)+ \frac{1}{(1-\gamma)^2}\Big( \xi_{\hat{Q}}^{i,L}+ 2c\cdot W_1^{i,L}\Big)\\
	=&\tilde{O}\Big(\frac{1}{(1-\gamma)^2}\Big((r^{\max}+c\cdot\diam_{\A})(1/ \sqrt{N}+\epsilon)+\xi_{\hat{Q}}^{i,L}+c\cdot W_1^{i,L}\Big)\Big)
	\end{align*}
\end{proof}

\subsection{Policy Evaluation Error under Non-uniform Weights}
\begin{lemma}
	Following from Lemma~\ref{lem:av-occu_1}, let $w=\{w_e>0\}_{e=1}^N$ be the (unnormalized) weights for the episodes. We have for the weighted case, $\frac{1}{\sum_e w_e}\sum_ew_e\rho_{\rho_i^e}^{\pi^e}(s)=\rho_{\overline{\rho}_{i,w}}^{\pi_{i,w}^D}(s)~a.e.$, where $\rho_{\overline{\rho}_{i,w}}^{\pi_{i,w}^D}(s)$ is the normalized state occupancy measure is generated by $(\overline{\rho}_{i,w},\pi_{i,w}^D,T)$. Also, $\frac{1}{\sum_e w_e}\sum_ew_e\rho_{\rho_i^e}^{\pi^e}(s)\pi^e(a|s)=\rho_{\overline{\rho}_{i,w}}^{\pi_{i,w}^D}(s)\pi_{i,w}^D(a|s)~a.e.$
	\label{lem:nonunif-occu_1}
\end{lemma}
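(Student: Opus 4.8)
The plan is to replicate the proof of Lemma~\ref{lem:av-occu_1} with the uniform average $\frac1N\sum_e$ replaced throughout by the normalized weighted average $\frac{1}{\sum_e w_e}\sum_e w_e$. First I would recall from Fact~\ref{fact:fixedpt} that, for each episode $e$, the occupancy measure $\rho_{\rho_i^e}^{\pi^e}$ is a fixed point of the Bellman flow operator $B_{\rho_i^e,\pi^e,T}$, i.e.
$$
\rho_{\rho_i^e}^{\pi^e}(s)=(1-\gamma)\rho_i^e(s)+\gamma\int T(s|s',a')\pi^e(a'|s')\rho_{\rho_i^e}^{\pi^e}(s')\,ds'da',\qquad e=1,\dots,N .
$$

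Next I would multiply the $e$-th identity by $w_e/\sum_{e'}w_{e'}$ and sum over $e$. Writing $\overline{\rho}_{i,w}\triangleq\frac{1}{\sum_e w_e}\sum_e w_e\rho_i^e$ for the weighted state average, the first term collapses to $(1-\gamma)\overline{\rho}_{i,w}(s)$. For the integral term the crucial manipulation (exactly as in Lemma~\ref{lem:av-occu_1}) is to insert the factor $\big(\sum_e w_e\rho_{\rho_i^e}^{\pi^e}(s')\big)/\big(\sum_e w_e\rho_{\rho_i^e}^{\pi^e}(s')\big)$ so that the weighted sum $\frac{1}{\sum_e w_e}\sum_e w_e\pi^e(a'|s')\rho_{\rho_i^e}^{\pi^e}(s')$ factors as $\pi_{i,w}^D(a'|s')\cdot\frac{1}{\sum_e w_e}\sum_e w_e\rho_{\rho_i^e}^{\pi^e}(s')$, where $\pi_{i,w}^D(a|s)\triangleq\frac{\sum_e w_e\pi^e(a|s)\rho_{\rho_i^e}^{\pi^e}(s)}{\sum_e w_e\rho_{\rho_i^e}^{\pi^e}(s)}$. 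This exhibits the normalized weighted average $\frac{1}{\sum_e w_e}\sum_e w_e\rho_{\rho_i^e}^{\pi^e}$ as a fixed point of $B_{\overline{\rho}_{i,w},\pi_{i,w}^D,T}$. Then, since $B_{\overline{\rho}_{i,w},\pi_{i,w}^D,T}$ is a $\gamma$-contraction in total-variation distance (Fact~\ref{fact:contr}) and hence has a unique fixed point, namely $\rho_{\overline{\rho}_{i,w}}^{\pi_{i,w}^D}$ by definition, we conclude $\frac{1}{\sum_e w_e}\sum_e w_e\rho_{\rho_i^e}^{\pi^e}(s)=\rho_{\overline{\rho}_{i,w}}^{\pi_{i,w}^D}(s)$ a.e. Multiplying both sides by the conditional density and invoking the very definition of $\pi_{i,w}^D$ yields the joint identity $\frac{1}{\sum_e w_e}\sum_e w_e\rho_{\rho_i^e}^{\pi^e}(s)\pi^e(a|s)=\rho_{\overline{\rho}_{i,w}}^{\pi_{i,w}^D}(s)\pi_{i,w}^D(a|s)$ a.e.

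The only genuine subtlety, and thus the step that needs the most care, is the well-definedness of $\pi_{i,w}^D$ on the (null) set where the denominator $\sum_e w_e\rho_{\rho_i^e}^{\pi^e}(s)$ vanishes; there one sets $\pi_{i,w}^D$ to an arbitrary fixed distribution, which does not affect the a.e. statement since that set has zero measure under $\rho_{\overline{\rho}_{i,w}}^{\pi_{i,w}^D}$ as well (it is a subset of $\{\overline{\rho}_{i,w}=0\}$ up to the recursion). Because $w_e>0$ for every $e$, all the weighted convex combinations above are legitimate probability-measure operations and no new estimates are required — the argument is otherwise word-for-word the uniform case, so I would keep the exposition short and simply point to Lemma~\ref{lem:av-occu_1} for the routine algebra.
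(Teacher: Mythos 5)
Your proposal is correct and follows essentially the same route as the paper's proof: weighted-average the per-episode Bellman flow fixed-point identities, factor the integrand to reveal $\pi_{i,w}^D$, and invoke the $\gamma$-contraction uniqueness to identify the weighted mixture with $\rho_{\overline{\rho}_{i,w}}^{\pi_{i,w}^D}$, with the joint identity then following by construction. The added remark on the null set where the denominator of $\pi_{i,w}^D$ vanishes is a harmless (and slightly more careful) addition not spelled out in the paper.
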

\begin{proof}
	The proof is basically a generalization of Lemma~\ref{lem:av-occu_1}. Since $\rho_{\rho_i^e}^{\pi^e}$ is the normalized occupancy measure generated by $(\rho_i^e,\pi^e,T)$, we know each $\rho_{\rho_i^e}^{\pi^e}$ is the fixed-point of the Bellman flow equation:
	$$\rho_{\rho_i^e}^{\pi^e}(s)=(1-\gamma)\rho_i^e(s)+\gamma\int T(s|s',a')\pi^e(a'|s')\rho_{\rho_i^e}^{\pi^e}(s')ds'da',~~~\forall~e\in[1,...,N].$$
	Taking a weighted average, we get
	\begin{align*}
	&\frac{1}{\sum_e w_e}\sum_ew_e\rho_{\rho_i^e}^{\pi^e}(s)\\
	=&\frac{1}{\sum_e w_e}\sum_ew_e\left[(1-\gamma)\rho_i^e(s)+\gamma\int T(s|s',a')\pi^e(a'|s')\rho_{\rho_i^e}^{\pi^e}(s')ds'da'\right]\\
	=&(1-\gamma)\overline{\rho}_{i,w}(s)+\gamma\int T(s|s',a')\frac{1}{\sum_e w_e}\sum_ew_e\left[\pi^e(a'|s')\rho_{\rho_i^e}^{\pi^e}(s')\right]ds'da'\\
	=&(1-\gamma)\overline{\rho}_{i,w}(s)+\gamma\int T(s|s',a')\frac{\sum_ew_e\pi^e(a'|s')\rho_{\rho_i^e}^{\pi^e}(s')}{\sum_ew_e\rho_{\rho_i^e}^{\pi^e}(s')}\frac{\sum_ew_e\rho_{\rho_i^e}^{\pi^e}(s')}{\sum_ew_e}ds'da'\\
	=&(1-\gamma)\overline{\rho}_{i,w}(s)+\gamma\int T(s|s',a')\pi_{i,w}^D(a'|s')\frac{1}{\sum_ew_e}\sum_ew_e\rho_{\rho_i^e}^{\pi^e}(s')ds'da',
	\end{align*}
	where $\pi_i^D(a|s)\triangleq \frac{\sum_ew_e\pi^e(a|s)\rho_{\rho_i^e}^{\pi^e}(s)}{\sum_ew_e\rho_{\rho_i^e}^{\pi^e}(s)}$ is the weighted behavior policy at step $i$. Since the Bellman flow operator is a $\gamma$-contraction in TV distance and hence has a unique (up to difference in some measure zero set) fixed point, denoted as $\rho_{\overline{\rho}_{i,w}}^{\pi_{i,w}^D}(s)$, we arrive at $\frac{1}{\sum_ew_e}\sum_ew_e\rho_{\rho_i^e}^{\pi^e}(s)=\rho_{\overline{\rho}_{i,w}}^{\pi_{i,w}^D}(s)~a.e.$ Finally, by definition of $\pi_{i,w}^D(a|s)$, we conclude that $\frac{1}{\sum_e w_e}\sum_ew_e\rho_{\rho_i^e}^{\pi^e}(s)\pi^e(a|s)=\rho_{\overline{\rho}_{i,w}}^{\pi_{i,w}^D}(s)\pi_{i,w}^D(a|s)~a.e.$
\end{proof}

\begin{theorem}
	Let $w=\{w_e>0\}_{e=1}^N$ be the (unnormalized) weights for the episodes. Let $\tilde{E}_{0,w}^{\infty}$ be the operator defined as $\tilde{E}_{0,w}^{\infty}f=\frac{1}{\sum_ew_e}\sum_{e=1}^N\sum_{i=0}^\infty(1-\gamma)\gamma^i w_ef(s_i^e,a_i^e)$. Fix assumptions (1) (2) (3) of Theorem~\ref{thm:q-qpi_1}. Rewrite the policy mismatch error and the Bellman error as $W_{1,w}^{0,\infty}=\tilde{E}_{0,w}^{\infty}\wass{\pi^N}{\pi_{0,w}^D}(\cdot)$ and $\epsilon^{0,\infty}_{\hat{Q},w}=\tilde{E}_{0,w}^{\infty}|\hat{Q}(\cdot,\cdot)-(\B^{\pi^N} \hat{Q})(\cdot,\cdot)|$, respectively, where $\wass{\pi^N}{\pi_{0,w}^D}(s)=\wass{\pi^N(\cdot|s)}{\pi_{0,w}^D(\cdot|s)}$.
	With probability greater than $1-\delta$, we have
	\begin{align*}
	&\E_{(s_0,a_0)\sim\rho_0(s)\pi^D_{0,w}(a|s)} \Big| \hat{Q}(s_0,a_0)-Q^{\pi^N}(s_0,a_0) \Big|\leq \frac{r^{\max}}{1-\gamma}\sqrt{\frac{\sum_ew_e^2}{2(\sum_ew_e)^2}\log\frac{2}{\delta}}\\
	&\quad\quad\quad\quad\quad+\Big(\frac{r^{\max}}{(1-\gamma)^2}+\frac{2L_{\A}\diam_{\A}}{1-\gamma}\Big)\sqrt{\frac{2\sum_ew_e^2}{(\sum_ew_e)^2}\log \frac{2}{\delta}}+ \frac{1}{1-\gamma}\Big( \epsilon_{\hat{Q},w}^{0,\infty}+ 2L_{\A}W_{1,w}^{0,\infty}\Big).
	\end{align*}
	\label{thm:nonunif-q-qpi_1}
\end{theorem}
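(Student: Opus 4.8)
The plan is to replay the proof of Theorem~\ref{thm:q-qpi_1} almost line for line, with the uniform average $\frac1N\sum_e$ replaced everywhere by the weighted average $\frac{1}{\sum_e w_e}\sum_e w_e$ and with Lemma~\ref{lem:nonunif-occu_1} used in place of Lemma~\ref{lem:av-occu_1}. First I would set $\hat\rho_{0,w}\triangleq\frac{1}{\sum_e w_e}\sum_{e=1}^N w_e\,\delta_{s_0^e}$, the weighted empirical initial distribution, and note that since every episode starts from $\rho_0$, the samples $s_0^1,\dots,s_0^N$ are i.i.d.\ from $\rho_0$. By Lemma~\ref{lem:nonunif-occu_1}, $\rho^{\pi^D_{0,w}}_{\hat\rho_{0,w}}(s)\pi^D_{0,w}(a|s)=\frac{1}{\sum_e w_e}\sum_{e}w_e\,\rho^{\pi^e}_{\delta_{s_0^e}}(s)\pi^e(a|s)$ a.e., so $\rho^{\pi^D_{0,w}}_{\hat\rho_{0,w}}$ plays exactly the role that $\rho^{\pi^D_0}_{\hat\rho_0}$ did in the uniform case. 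Then I would decompose
\[
\E_{(s_0,a_0)\sim\rho_0\pi^D_{0,w}}\big|\hat Q-Q^{\pi^N}\big| = H_1 + \E_{(s_0,a_0)\sim\hat\rho_{0,w}\pi^D_{0,w}}\big|\hat Q-Q^{\pi^N}\big|,
\]
with $H_1=(\E_{s_0\sim\rho_0}-\E_{s_0\sim\hat\rho_{0,w}})\E_{a_0\sim\pi^D_{0,w}(\cdot|s_0)}|\hat Q-Q^{\pi^N}|$, and apply the triangle inequality $|\hat Q-Q^{\pi^N}|\le|\hat Q-Q^{\pi^D_{0,w}}|+|Q^{\pi^D_{0,w}}-Q^{\pi^N}|$ inside the second expectation.

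Next I would apply Lemma~\ref{lem:q-qpid} and Lemma~\ref{lem:qpid-qpi} verbatim: both hold for an arbitrary behavior policy, so using the substitutions $\pi^D_0\leftarrow\pi^D_{0,w}$ and $\hat\rho_0\leftarrow\hat\rho_{0,w}$ they give
\[
\E_{\hat\rho_{0,w}\pi^D_{0,w}}\big|\hat Q-Q^{\pi^N}\big|\le \frac{1}{1-\gamma}\,\E_{\rho^{\pi^D_{0,w}}_{\hat\rho_{0,w}}\pi^D_{0,w}}\!\Big[\,\big|\hat Q-\B^{\pi^N}\hat Q\big|+2L_{\A}\wass{\pi^N}{\pi^D_{0,w}}\,\Big].
\]
I would then split the right-hand expectation as $\frac{1}{1-\gamma}\big(H_2+\tilde E_{0,w}^{\infty}[\,\cdot\,]\big)$, where $H_2$ is the gap between the true weighted-occupancy expectation and its trajectory-sample version, and the $\tilde E_{0,w}^{\infty}$ term equals, by definition, $\epsilon^{0,\infty}_{\hat Q,w}+2L_{\A}W^{0,\infty}_{1,w}$. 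So the whole bound reduces to controlling the two stochastic terms $H_1$ and $H_2$.

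The two stochastic terms are where the only real work lies, and they require the weighted analogues of Hoeffding's inequality and of Lemma~\ref{lem:mgd}. For $H_1$: the $s_0^e$ are i.i.d.\ and $g(s_0)\triangleq\E_{a_0\sim\pi^D_{0,w}(\cdot|s_0)}|\hat Q-Q^{\pi^N}|\in[0,r^{\max}/(1-\gamma)]$, so the weighted Hoeffding bound for bounded independent variables gives $H_1\le \tfrac{r^{\max}}{1-\gamma}\sqrt{\tfrac{\sum_e w_e^2}{2(\sum_e w_e)^2}\log\tfrac1\delta}$. For $H_2$: expanding $\rho^{\pi^D_{0,w}}_{\hat\rho_{0,w}}(s)\pi^D_{0,w}(a|s)$ via Lemma~\ref{lem:nonunif-occu_1} and the definition of the normalized occupancy measure exactly as in Eq.~\eqref{eq:occu-expand}, I would set, for $f(s,a)=|\hat Q-\B^{\pi^N}\hat Q|+2L_{\A}\wass{\pi^N}{\pi^D_{0,w}}\in[0,\Delta]$ with $\Delta=\tfrac{r^{\max}}{1-\gamma}+2L_{\A}\diam_{\A}$,
\[
M^e=w_e\sum_{i=0}^{\infty}(1-\gamma)\gamma^i\big(f(s_i^e,a_i^e)-\E[f(s_i^e,a_i^e)\mid s_0^e]\big),
\]
so that $H_2=\frac{1}{\sum_e w_e}\sum_e(-M^e)$. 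As in Lemma~\ref{lem:mgd}, $\{M^e\}_{e=1}^N$ is a martingale difference sequence with respect to the episode filtration, now with $M^e\in[-w_e\Delta,\,w_e\Delta]$; Azuma--Hoeffding with these non-uniform increments yields $H_2\le \Delta\sqrt{\tfrac{2\sum_e w_e^2}{(\sum_e w_e)^2}\log\tfrac1\delta}$. Multiplying by $(1-\gamma)^{-1}$, combining $H_1$ and $H_2$ by a union bound, and rescaling $\delta\to\delta/2$ gives the stated inequality.

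The main obstacle is not computational but a measurability condition: for $\{M^e\}$ to remain a martingale difference sequence and to be bounded termwise by $w_e\Delta$, the weights $w_e$ must not depend on data from episodes $\ge e$ (deterministic weights, e.g.\ $1/$age, suffice). I would state this explicitly, either as part of the hypotheses or by noting that the theorem is applied only with such non-anticipating weights, so that the Azuma--Hoeffding step is legitimate. Everything else is bookkeeping: verifying that the inequalities borrowed from Lemmas~\ref{lem:qpid-qpi} and \ref{lem:q-qpid} are insensitive to whether the reference policy is $\pi^D_0$ or $\pi^D_{0,w}$, and that the weighted occupancy identity of Lemma~\ref{lem:nonunif-occu_1} expands over discounted trajectory steps in the same way as Eq.~\eqref{eq:occu-expand}.
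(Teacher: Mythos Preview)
Your proposal is correct and follows essentially the same route as the paper's proof: define the weighted empirical initial distribution $\hat\rho_{0,w}$, decompose into $H_1+\frac{H_2}{1-\gamma}+\frac{1}{1-\gamma}(\epsilon_{\hat Q,w}^{0,\infty}+2L_{\A}W_{1,w}^{0,\infty})$ via Lemmas~\ref{lem:q-qpid} and~\ref{lem:qpid-qpi} with $\pi^D_{0,w}$ in place of $\pi^D_0$, then bound $H_1$ by a weighted Hoeffding inequality and $H_2$ by the weighted Azuma--Hoeffding argument of Lemma~\ref{lem:mgd}, and finish with a union bound and the $\delta\to\delta/2$ rescaling. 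Your explicit remark that the weights must be non-anticipating (so that $\{M^e\}$ really is a martingale difference sequence with $|M^e|\le w_e\Delta$) is a point the paper leaves implicit.
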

\begin{proof}
	The proof basically follows from Theorem~\ref{thm:q-qpi_1}. Decompose the objective using Lemma~\ref{lem:qpid-qpi},~\ref{lem:q-qpid} as:
	\begin{equation}
	\begin{split}
	&\E_{(s_0,a_0)\sim\rho_0(s)\pi^D_{0,w}(a|s)} \Big| \hat{Q}(s_0,a_0)-Q^{\pi^N}(s_0,a_0) \Big|\\
	= &H_1 + \E_{(s_0,a_0)\sim\hat{\rho}_{0,w}(s)\pi^D_{0,w}(a|s)} \Big| \hat{Q}(s_0,a_0)-Q^{\pi^N}(s_0,a_0) \Big|\\
	\leq &H_1+\E_{(s,a)\sim\hat{\rho}_{0,w}(s)\pi^D_{0,w}(a|s)} \Big| \hat{Q}(s_0,a_0)-Q^{\pi^D_{0,w}}(s_0,a_0) \Big| + \Big| Q^{\pi^D_{0,w}}(s_0,a_0)-Q^{\pi^N}(s_0,a_0) \Big|\\
	\overset{\ref{lem:q-qpid},~\ref{lem:qpid-qpi}}{\leq} &H_1 + \frac{1}{1-\gamma}\E_{(s,a)\sim\rho^{\pi^D_{0,w}}_{\hat{\rho}_{0,w}}(s)\pi^D_{0,w}(a|s)}\bigg[ \Big| \hat{Q}(s,a) - (\B^{\pi^N} \hat{Q})(s,a)\Big| + 2L_{\A}\wass{\pi^N(\cdot|s)}{\pi^D_{0,w}(\cdot|s)}\bigg]\\
	=&H_1 + \frac{H_2}{1-\gamma}+ \frac{1}{1-\gamma}\tilde{E}_{0,w}^{\infty}\bigg[ \Big| \hat{Q}(\cdot,\cdot) - (\B^{\pi^N} \hat{Q})(\cdot,\cdot)\Big| + 2L_{\A}\wass{\pi^N}{\pi^D_{0,w}}(\cdot)\bigg]\\
	\overset{\text{Assump.}}{\leq}&H_1 + \frac{H_2}{1-\gamma}+ \frac{1}{1-\gamma}\Big( \epsilon_{\hat{Q},w}^{0,\infty}+ 2L_{\A}W_{1,w}^{0,\infty}\Big),
	\end{split}
	\label{eq:nonunif-main-bound}
	\end{equation}
	where $\hat{\rho}_{0,w}(s)=\frac{\sum_{e}w_e\delta(s-s^e_0)}{\sum_ew_e}$ is the weighted empirical initial state distribution.
	Because $|\hat{Q}(s,a)-Q^{\pi^N}(s,a)|\in [0, \frac{r^{\max}}{1-\gamma}]$, we know $\E_{a\sim\pi^D_{0,w}(\cdot|s)}|\hat{Q}(s,a)-Q^{\pi^N}(s,a)|\in [0, \frac{r^{\max}}{1-\gamma}]$, too. By Hoeffding's inequality, with probability greater than $1-\delta$,
	\begin{equation}
	\begin{split}
	H_1&=\big(\E_{s_0\sim\rho_0}-\E_{s_0\sim\hat{\rho}_{0,w}} \big)\E_{a_0\sim\pi^D_{0,w}(\cdot|s_0)}\big| \hat{Q}(s_0,a_0)-Q^{\pi^N}(s_0,a_0) \big|\\
	&\leq \frac{1}{\sum_ew_e}\sqrt{\frac{\sum_e\Big(w_e\frac{r^{\max}}{1-\gamma}\Big)^2}{2}\log\frac{1}{\delta}}=  \frac{r^{\max}}{1-\gamma}\sqrt{\frac{\sum_ew_e^2}{2(\sum_ew_e)^2}\log\frac{1}{\delta}}.
	\end{split}
	\label{eq:nonunif-h1}
	\end{equation}
	Also, let $f(s,a)=\big| \hat{Q}(s,a) - (\B^{\pi^N} \hat{Q})(s,a)\big| + 2L_{\A}\wass{\pi^N(\cdot|s)}{\pi^D_{0,w}(\cdot|s)}$. Then $f(s,a)$ is bounded in $[0,\frac{r^{\max}}{1-\gamma}+2L_{\A}\diam_{\A}]$. Using a weighted version of Azuma-Hoeffding in Lemma~\ref{lem:mgd}, we have that with probability greater than $1-\delta$,
	\begin{equation}
	\begin{split}
	H_2=&\Big(\E_{(s,a)\sim\rho^{\pi^D_{0,w}}_{\hat{\rho}_{0,w}}(s)\pi^D_{0,w}(a|s)}-\tilde{E}_{0,w}^\infty\Big)\bigg[ \Big| \hat{Q}(s,a) - (\B^{\pi^N} \hat{Q})(s,a)\Big| + 2L_{\A}\wass{\pi^N(\cdot|s)}{\pi^D_{0,w}(\cdot|s)}\bigg]\\
	\leq&\Big(\frac{r^{\max}}{1-\gamma}+2L_{\A}\diam_{\A}\Big)\sqrt{\frac{2\sum_ew_e^2}{(\sum_ew_e)^2}\log \frac{1}{\delta}}
	\end{split}
	\label{eq:nonunif-h2}
	\end{equation}
	Combining Eq.~\eqref{eq:nonunif-main-bound},~\eqref{eq:nonunif-h1} and \eqref{eq:nonunif-h2}, a union bound implies with probability greater than $1-2\delta$,
	\begin{align*}
	&\E_{(s_0,a_0)\sim\rho_0(s)\pi^D_{0,w}(a|s)} \Big| \hat{Q}(s_0,a_0)-Q^{\pi^N}(s_0,a_0) \Big|\leq\frac{r^{\max}}{1-\gamma}\sqrt{\frac{\sum_ew_e^2}{2(\sum_ew_e)^2}\log\frac{1}{\delta}}\\
	&\quad\quad\quad\quad\quad+\Big(\frac{r^{\max}}{(1-\gamma)^2}+\frac{2L_{\A}\diam_{\A}}{1-\gamma}\Big)\sqrt{\frac{2\sum_ew_e^2}{(\sum_ew_e)^2}\log \frac{1}{\delta}}
	+ \frac{1}{1-\gamma}\Big( \epsilon_{\hat{Q},w}^{0,\infty}+ 2L_{\A}W_{1,w}^{0,\infty}\Big)
	\end{align*}
	Finally, rescaling $\delta$ to $\delta/2$ finishes the proof.
\end{proof}

\begin{corollary}
	Let $\tilde{E}_{i,w}^{L}$ be the operator defined as $\tilde{E}_{i,w}^{L}f=\frac{1}{\sum_ew_e}\sum_e\sum_{j=i}^{L-1}(1-\gamma)\gamma^{j-i} w_ef(s_j^e,a_j^e)$.
	Fix assumptions (1) (2) (3) of Theorem~\ref{thm:q-qpi_1}. Rewrite the policy mismatch error and the Bellman error as $W_{1,w}^{i,L}=\tilde{E}_{i,w}^L\wass{\pi^N}{\pi_{i,w}^D}(\cdot)$ and $\epsilon_{\hat{Q},w}^{i,L}=\tilde{E}_{i,w}^L|\hat{Q}(\cdot,\cdot)-(\B^{\pi^N} \hat{Q})(\cdot,\cdot)|$, respectively, where $\wass{\pi^N}{\pi_{i,w}^D}(s)=\wass{\pi^N(\cdot|s)}{\pi_{i,w}^D(\cdot|s)}$. Let $\overline{\rho}_{i,w}(s)$ be the average weighted state density at trajectory step $i$. Then, with probability greater than $1-\delta$, we have
	\begin{align*}
	&\E_{(s_i,a_i)\sim\overline{\rho}_{i,w}(s)\pi_{i,w}^D(a|s)} \Big| \hat{Q}(s_i,a_i)-Q^{\pi^N}(s_i,a_i) \Big|
	\leq \frac{r^{\max}}{1-\gamma}\sqrt{\frac{\sum_ew_e^2}{2(\sum_ew_e)^2}\log\frac{2}{\delta}}\\
	&\quad\quad\quad+\Big(\frac{r^{\max}}{(1-\gamma)^2}+\frac{2L_{\A}\diam_{\A}}{1-\gamma}\Big)\Big(\sqrt{\frac{2\sum_ew_e^2}{(\sum_ew_e)^2}\log \frac{2}{\delta}}+\gamma^{L-i}\Big)+ \frac{1}{1-\gamma}\Big( \epsilon_{\hat{Q},w}^{i,L}+ 2L_{\A}W_{1,w}^{i,L}\Big).
	\end{align*}
	Moreover, if $i\leq L-\frac{\log \epsilon}{\log\gamma}$ and the constants are normalized as $L_{\A}=c/(1-\gamma)$, $\epsilon_{\hat{Q},w}^{i,L}=\xi_{\hat{Q},w}^{i,L}/(1-\gamma)$, then, with probability greater than $1-\delta$, we have
	\begin{align*}
	&\E_{(s_i,a_i)\sim\overline{\rho}_{i,w}(s)\pi_{i,w}^D(a|s)} \Big| \hat{Q}(s_i,a_i)-Q^{\pi^N}(s_i,a_i) \Big|\\
	\leq&\tilde{O}\Big(\frac{1}{(1-\gamma)^2}\Big((r^{\max}+c\cdot\diam_{\A})\Big(\sqrt{(\sum_ew_e^2) /(\sum_ew_e)^2}+\epsilon\Big)+\xi_{\hat{Q},w}^{i,L}+c\cdot W_{1,w}^{i,L}\Big)\Big)
	\end{align*}
	\label{cor:nonunif-q-qpi_i_1}
\end{corollary}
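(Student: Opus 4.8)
I would reduce the statement to Theorem~\ref{thm:nonunif-q-qpi_1} in exactly the way Corollary~\ref{cor:q-qpi_i_1} is reduced to Theorem~\ref{thm:q-qpi_1}, so that essentially all of the analytic content---the recursive Bellman-flow expansions, the Lipschitz/Wasserstein bounds of Lemmas~\ref{lem:wass-bound}--\ref{lem:q-qpid}, and the weighted Azuma--Hoeffding concentration in the weighted analogue of Lemma~\ref{lem:mgd}---is already in hand. Concretely the plan has three steps: (i) lift the $i=0$, $L=\infty$ bound of Theorem~\ref{thm:nonunif-q-qpi_1} to an arbitrary trajectory step $i\ge 0$; (ii) truncate the infinite-horizon averaging operator $\tilde{E}_{i,w}^{\infty}$ to the length-$L$ operator $\tilde{E}_{i,w}^{L}$; (iii) specialize the constants and absorb logarithmic factors into $\tilde{O}(\cdot)$.

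For step (i) I would first note that every episode's trajectory starts from the common initial distribution $\rho_0$, so $\rho_0=\overline{\rho}_{0,w}$ and Theorem~\ref{thm:nonunif-q-qpi_1} is really a bound on $\E_{(s_0,a_0)\sim\overline{\rho}_{0,w}(s)\pi^D_{0,w}(a|s)}|\hat{Q}-Q^{\pi^N}|$. Restricting attention to the sub-trajectories that begin at step $i$, Lemma~\ref{lem:nonunif-occu_1} gives $\frac{1}{\sum_ew_e}\sum_e w_e\rho_{\rho_i^e}^{\pi^e}(s)\pi^e(a|s)=\rho_{\overline{\rho}_{i,w}}^{\pi^D_{i,w}}(s)\pi^D_{i,w}(a|s)$ a.e., which is precisely the weighted-occupancy identity that the proof of Theorem~\ref{thm:nonunif-q-qpi_1} invokes at step $0$ (through the weighted version of Lemma~\ref{lem:mgd}, where the deterministic weights $w_e$ keep $\{w_eM^e\}$ a martingale difference sequence). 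Hence the substitution $(\overline{\rho}_{0,w},\pi^D_{0,w},\tilde{E}_{0,w}^{\infty})\mapsto(\overline{\rho}_{i,w},\pi^D_{i,w},\tilde{E}_{i,w}^{\infty})$ is legitimate and reproduces the same bound with $\epsilon^{0,\infty}_{\hat{Q},w},W^{0,\infty}_{1,w}$ replaced by $\epsilon^{i,\infty}_{\hat{Q},w},W^{i,\infty}_{1,w}$.

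For step (ii), for any nonnegative bounded $f$ the geometric tail of the averaging weights gives $\tilde{E}_{i,w}^{\infty}f\le \tilde{E}_{i,w}^{L}f+\sum_{j\ge L}(1-\gamma)\gamma^{j-i}\|f\|_\infty=\tilde{E}_{i,w}^{L}f+\gamma^{L-i}\|f\|_\infty$. Applying this with $f=|\hat{Q}-\B^{\pi^N}\hat{Q}|+2L_{\A}\wass{\pi^N}{\pi^D_{i,w}}$, which lies in $[0,\frac{r^{\max}}{1-\gamma}+2L_{\A}\diam_{\A}]$ by assumptions (2)--(3), converts the $i\ge0$, $L=\infty$ bound into the first displayed inequality; after multiplying by the $\frac{1}{1-\gamma}$ prefactor the truncation error $\gamma^{L-i}$ is exactly carried by the coefficient $\frac{r^{\max}}{(1-\gamma)^2}+\frac{2L_{\A}\diam_{\A}}{1-\gamma}$, matching the form of Corollary~\ref{cor:q-qpi_i_1} with $1/\sqrt{N}$ replaced by $\sqrt{\sum_e w_e^2}/\sum_e w_e$. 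For step (iii), $i\le L-\frac{\log\epsilon}{\log\gamma}$ forces $\gamma^{L-i}\le\epsilon$, and the normalizations $L_{\A}=c/(1-\gamma)$, $\epsilon^{i,L}_{\hat{Q},w}=\xi^{i,L}_{\hat{Q},w}/(1-\gamma)$ make every term carry a $(1-\gamma)^{-2}$ factor; collecting $\sqrt{\sum_e w_e^2}/\sum_e w_e$ together with $\epsilon$ and hiding the $\sqrt{\log(2/\delta)}$ factors inside $\tilde{O}(\cdot)$ yields the second bound.

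I do not anticipate a genuine obstacle here, since the weighted concentration is done once and for all in Theorem~\ref{thm:nonunif-q-qpi_1}. The one point deserving an explicit line is that Lemma~\ref{lem:nonunif-occu_1} furnishes exactly the step-$i$ weighted occupancy identity needed to justify the substitution in step (i); the remaining difficulty is purely bookkeeping---attaching the truncation error to the correct coefficient so the final inequality matches Corollary~\ref{cor:q-qpi_i_1} verbatim up to the weight factor $\sqrt{\sum_e w_e^2}/\sum_e w_e$.
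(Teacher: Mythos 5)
Your proposal is correct and follows essentially the same route as the paper: the paper's proof likewise starts from Theorem~\ref{thm:nonunif-q-qpi_1}, uses the weighted occupancy identity of Lemma~\ref{lem:nonunif-occu_1} to shift from step $0$ to step $i$, applies the truncation bound $\tilde{E}_{i,w}^{\infty}f\leq \tilde{E}_{i,w}^{L}f+\gamma^{L-i}\norm{f}_\infty$, and then normalizes constants exactly as in Corollary~\ref{cor:q-qpi_i_1}. Your write-up is in fact more explicit than the paper's two-line proof about where the truncation error attaches and why the substitution at step $i$ is legitimate.
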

\begin{proof}
	Observe that for any bounded $f$:
	$$\tilde{E}_{i,w}^\infty f\leq \tilde{E}_{i,w}^L f + \gamma^{L-i}\norm{f}_\infty$$
	Thus, we can start from Theorem~\ref{thm:nonunif-q-qpi_1} and prove with the same argument in Corollary~\ref{cor:q-qpi_i_1}.
\end{proof}

\subsection{Proof of Emphazing Recent Experience}
\begin{proposition}
	The ERE strategy in \citet{wang2019boosting} is equivalent to a non-uniform sampling with weight $w_t$:
	\begin{align*}
	w_t \propto \frac{1}{1-\eta^{L_0/K}}\Big(\frac{1}{\max(t,c_{\min},N_0\eta^{L_0})}-\frac{1}{N_0}\Big) + \frac{\mathbbm{1}(t\leq c_{\min}) K}{c_{\min}}\max\Big(1-\frac{\ln c_{\min}/N_0}{L_0\ln \eta},~0\Big),
	\end{align*}
	where $t$ is the age of a data point relative to the newest time step; i.e., $w_0$ is the newest sample. $N_0$ is the size of the experience replay. $L_0=1000$ is the maximum horizon of the environment. $K$ is the length of the recent trajectory. $\eta\approx 0.996$ is the decay parameter. $c_{\min}\approx 5000$ is the minimum coverage of the sampling. Moreover, the ERE strategy can be approximated (by Taylor Approximation) as
	\begin{align*}
	w_t \propto \frac{1}{\max(t,c_{\min},N_0\eta^{L_0})}-\frac{1}{N_0} +\frac{\mathbbm{1}(t\leq c_{\min})}{c_{\min}} \max\Big(\ln \frac{c_{\min}}{N_0\eta^{L_0}},~0\Big)
	\end{align*}
	\label{prop:ere_1}
\end{proposition}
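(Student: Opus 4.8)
The plan is to read off the aggregate sampling weight of a data point directly from the ERE procedure and then simplify it. Fix an episode in which the replay buffer has size $N_0$ and the new trajectory has length $K$; in the $k$-th of the $K$ updates ($1\le k\le K$) ERE draws its mini-batch uniformly from the $c_k=\max(N_0\eta^{kL_0/K},c_{\min})$ most recent samples, so an age-$t$ sample is selected in that update with probability proportional to $\mathbbm{1}(t\le c_k)/c_k$. Adding the expected selection counts over $k=1,\dots,K$, the aggregate (unnormalized) weight is $w_t\propto\sum_{k=1}^{K}\mathbbm{1}(t\le c_k)/c_k$. First I would note that $c_k$ is non-increasing in $k$, and introduce the two real thresholds $k^{*}(t)=\tfrac{K\ln(t/N_0)}{L_0\ln\eta}$, at which $N_0\eta^{kL_0/K}$ crosses $t$, and $k_{\min}=\tfrac{K\ln(c_{\min}/N_0)}{L_0\ln\eta}$, at which it drops to the floor $c_{\min}$; which of $t$, $c_{\min}$, $N_0\eta^{L_0}$ actually binds is decided by the ordering of these numbers, so a short case split on that ordering is unavoidable.

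Next I would evaluate the sum block by block. On the exponential part, where $c_k=N_0\eta^{kL_0/K}$, the summand is geometric, and the geometric-series identity gives $\sum_{k=1}^{m}1/(N_0\eta^{kL_0/K})=\tfrac{1}{N_0(1-\eta^{L_0/K})}(\eta^{-mL_0/K}-1)$. The key algebraic simplification is that at the thresholds $\eta^{-k^{*}(t)L_0/K}=N_0/t$ and $\eta^{-k_{\min}L_0/K}=N_0/c_{\min}$, so each geometric block collapses to $\tfrac{1}{1-\eta^{L_0/K}}\bigl(\tfrac{1}{v}-\tfrac{1}{N_0}\bigr)$, where $v$ is the first of $t$, $c_{\min}$, $N_0\eta^{L_0}$ that binds within $k\le K$ (the last being the value of $N_0\eta^{kL_0/K}$ at $k=K$); these three possibilities merge into $v=\max(t,c_{\min},N_0\eta^{L_0})$. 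On the floor part, where $c_k=c_{\min}$, the summand is constant and contributes $(K-k_{\min})/c_{\min}$, present only when $t\le c_{\min}$ and $k_{\min}\le K$; rewriting $K-k_{\min}=K\bigl(1-\tfrac{\ln(c_{\min}/N_0)}{L_0\ln\eta}\bigr)$ and guarding the large-buffer case $k_{\min}>K$ with $\max(\cdot,0)$ reproduces exactly the $\mathbbm{1}(t\le c_{\min})$ term of the stated exact formula. Reassembling the blocks over all regimes gives the first displayed equality.

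For the Taylor-approximate form I would use $1-\eta^{L_0/K}=1-e^{(L_0/K)\ln\eta}\approx (L_0/K)\ln(1/\eta)$, valid when $\eta$ is close to $1$, so that $1/(1-\eta^{L_0/K})\approx K/(L_0\ln(1/\eta))$, together with the identity $1-\tfrac{\ln(c_{\min}/N_0)}{L_0\ln\eta}=\tfrac{1}{L_0\ln(1/\eta)}\ln\tfrac{c_{\min}}{N_0\eta^{L_0}}$. Both substitutions extract the common, $t$-independent factor $K/(L_0\ln(1/\eta))$, which is absorbed by the proportionality, leaving
\[
w_t\;\propto\;\frac{1}{\max(t,c_{\min},N_0\eta^{L_0})}-\frac{1}{N_0}+\frac{\mathbbm{1}(t\le c_{\min})}{c_{\min}}\,\max\!\Bigl(\ln\frac{c_{\min}}{N_0\eta^{L_0}},\,0\Bigr),
\]
which is the claimed approximation.

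I expect the main obstacle to be bookkeeping rather than any single hard estimate: one must keep the nested case analysis --- small- versus large-buffer ($c_{\min}$ vs.\ $N_0\eta^{L_0}$), $t$ above or below $c_{\min}$, and whether $k^{*}(t)$ and $k_{\min}$ lie inside $[1,K]$ --- mutually consistent, so that the two $\max$ operators in the final formula genuinely absorb every case. It is also worth stating at the outset that the equivalence holds only up to rounding the real thresholds $k^{*}(t),k_{\min}$ and the window sizes $c_k$ to integers (ERE uses integer-sized windows), an error that is lower order in $K$ and is precisely what the word ``equivalent'' glosses over.
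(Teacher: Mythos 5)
Your proposal is correct and follows essentially the same route as the paper's proof: aggregate the per-update weights into $w_t\propto\sum_{k\colon c_k\ge t}1/c_k$, evaluate the geometric block and the constant $c_{\min}$ block with a case split on where the thresholds fall, and then Taylor-expand $1-\eta^{L_0/K}\approx (L_0/K)\ln(1/\eta)$ so the common $t$-independent factor is absorbed by the proportionality. Your explicit remark about rounding the real-valued thresholds and window sizes to integers is a fair clarification of what the paper's ``equivalent'' silently glosses over, but it does not change the argument.
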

\begin{proof}
	Recall that \citet{wang2019boosting} assume a situation of doing $K$ updates in each episode. In the $k$th update, the data is sampled uniformly from the most recent $c_k=\max(N_0 \eta^{k\frac{L_0}{K}}, c_{\min})$ points.
	
	To compute the aggregrated weight $w_t$ over these $K$ updates, observe that a data point of age $t$ is in the most recent $c_k$ points if $c_k\geq t$ and that the weight in each uniform sample is $1/c_k$. Therefore, $w_t$ should be proportional to
	\begin{equation}
	w_t\propto \sum_{\substack{k:~1\leq k\leq K,\\~c_k\geq t}}\frac{1}{c_k}.
	\label{eq:main_sum}
	\end{equation}
	Because $c_k$ is designed to be lower bounded by $c_{\min}$, we shall discuss Eq.~\eqref{eq:main_sum} by cases. 
	\begin{enumerate}
		\item[(1)] When $t>c_{\min}$, we know $c_k> c_{\min}$ because $c_k\geq t $ is a constraint in the sum. This means $c_k = N_0\eta ^{kL_0/K}$ and hence $c_k\geq t$ is equivalent to $k\leq \ln \frac{t}{N_0}/\ln \eta ^{L_0/K}$. Eq.~\eqref{eq:main_sum} becomes 
		\begin{equation*}
		\begin{split}
		&\sum_{\substack{k:~1\leq k\leq K,\\~c_k\geq t}}\frac{1}{c_k}=\sum_{k=1}^{\min(K,~\ln \frac{t}{N_0}/\ln \eta ^{L_0/K})}\frac{1}{N_0}\eta^{-L_0k/K}\overset{(*)}{=}\sum_{k=1}^{\min(K,~\ln \frac{t}{N_0}/\ln \xi)}\frac{1}{N_0}\xi^{-k}\\
		=&\begin{cases}
		\frac{\xi^{-1}}{N_0}\frac{1-\xi^{-\log_\xi\frac{t}{N_0}}}{1-\xi^{-1}}=\frac{1/t-1/N_0}{1-\eta^{L_0/K}} & \text{if}~K>\ln \frac{t}{N_0}/\ln \eta ^{L_0/K}\\
		\frac{\xi^{-1}}{N_0}\frac{1-\xi^{-K}}{1-\xi^{-1}}=\frac{\eta^{-L_0}/N_0-1/N_0}{1-\eta^{L_0/K}} & \text{if}~K\leq\ln \frac{t}{N_0}/\ln \eta ^{L_0/K}
		\end{cases}\\
		=&\begin{cases}
		\frac{1/t-1/N_0}{1-\eta^{L_0/K}} & \text{if}~t> N_0\eta^{L_0}\\
		\frac{1/(N_0\eta^{L_0})-1/N_0}{1-\eta^{L_0/K}} & \text{if}~t\leq N_0\eta^{L_0}
		\end{cases}=\frac{1}{1-\eta^{L_0/K}}\Big(\frac{1}{\max(t,~N_0\eta^{L_0})}-\frac{1}{N_0}\Big)
		\end{split}
		\end{equation*}
		where (*) is a substitution: $\xi = \eta^{L_0/K}$.
		\item[(2)] When $t\leq c_{\min}$, we know $c_k\geq t$ for all $k$ because $c_k\geq c_{\min}$ by definition. Eq.~\eqref{eq:main_sum} becomes
		\begin{equation*}
		\begin{split}
		&\sum_{\substack{k:~1\leq k\leq K,\\~c_k\geq t}}\frac{1}{c_k}=\sum_{k=1}^{K}\frac{1}{c_k}\\
		\overset{(*)}{=}&\sum_{k=1}^{\min(K,~\ln \frac{c_{\min}}{N_0}/\ln \xi)}\frac{1}{N_0}\xi^{-k} + \max(K-\ln \frac{c_{\min}}{N_0}/\ln \xi,~0) \frac{1}{c_{\min}}\\
		\overset{(**)}{=}&\frac{1}{1-\eta^{L_0/K}}\Big(\frac{1}{\max(c_{\min},~N_0\eta^{L_0})}-\frac{1}{N_0}\Big) + \max(K-K\frac{\ln c_{\min}/N_0}{L_0\ln \eta},~0) \frac{1}{c_{\min}}\\
		=&\frac{1}{1-\eta^{L_0/K}}\Big(\frac{1}{\max(c_{\min},~N_0\eta^{L_0})}-\frac{1}{N_0}\Big) + \frac{K}{c_{\min}}\max\Big(1-\frac{\ln c_{\min}/N_0}{L_0\ln \eta},~0\Big) ,
		\end{split}
		\end{equation*}
		where (*) does a substitution: $\xi = \eta^{L_0/K}$ and split the sum. (**) reuses the analysis in case (1).
	\end{enumerate}
	Combining cases (1) and (2), we arrive at the first conclusion. As for the approximation, since $\eta\approx 0.996$, let $\eta = 1-\kappa$. We have
	$$1-\eta^{L_0/K}= 1-(1-\kappa)^{L_0/K}\approx 1-1+\kappa L_0/K= \kappa \frac{L_0}{K}\approx - \frac{L_0}{K}\ln (1-\kappa) = -\frac{L_0}{K}\ln\eta$$ 
	Thus the first term in the conclusion of Prop.~\ref{prop:ere_1} is proportional to $K$. The second term is also proportional to $K$. Since $w_t$ is only made to be proportional to the RHS and both terms on the RHS become proportional to $K$, we can remove $K$ on the RHS:
	
	\begin{align*}
	w_t \propto \frac{1}{-L_0\ln \eta}\Big(\frac{1}{\max(t,c_{\min},N_0\eta^{L_0})}-\frac{1}{N_0}\Big) + \frac{\mathbbm{1}(t\leq c_{\min})}{c_{\min}}\max\Big(1-\frac{\ln c_{\min}/N_0}{L_0\ln \eta},~0\Big) 
	\end{align*}
	
	Finally, because $0<\eta<1$, $-L_0\ln\eta$ is a positive number, the above expression can be further simplified by timing $-L_0\ln\eta$ on the RHS, yielding the result.
\end{proof}

\begin{proposition}
	Let $\{w_t>0\}_{t=1}^N$ be the weights of the data indexed by $t$. Then the Hoeffding error $\sqrt{\sum_{t=1}^N w_t^2}/\sum_{t=1}^Nw_t$ is minimized when the weights are equal: $w_t=c>0,~\forall~t$.
	\label{prop:hoef-wei_1}
\end{proposition}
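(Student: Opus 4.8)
The plan is to exploit the fact that the objective $\sqrt{\sum_{t=1}^N w_t^2}\big/\sum_{t=1}^N w_t$ is invariant under the rescaling $w\mapsto\lambda w$ for any $\lambda>0$ (numerator and denominator both scale by $\lambda$), so it is homogeneous of degree zero and it suffices to analyze it on the open simplex $\{w:~w_t>0,~\sum_{t=1}^N w_t=1\}$, where the problem reduces to minimizing $\sum_{t=1}^N w_t^2$. On this set the answer follows from a single application of the Cauchy--Schwarz inequality to the vectors $(1,\dots,1)$ and $(w_1,\dots,w_N)$.

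Concretely, first I would write $\big(\sum_{t=1}^N w_t\big)^2=\big(\sum_{t=1}^N 1\cdot w_t\big)^2\le N\sum_{t=1}^N w_t^2$ by Cauchy--Schwarz. Dividing both sides by $\big(\sum_{t=1}^N w_t\big)^2$ and taking square roots gives the uniform lower bound $\sqrt{\sum_{t=1}^N w_t^2}\big/\sum_{t=1}^N w_t\ge 1/\sqrt{N}$, which does not depend on the particular choice of $w$. Then I would verify that this bound is attained at equal weights: substituting $w_t=c>0$ for all $t$ yields $\sqrt{Nc^2}\big/(Nc)=1/\sqrt{N}$, so the minimum value $1/\sqrt{N}$ is indeed achieved.

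For the remaining ``only if'' part of the statement, I would invoke the equality case of Cauchy--Schwarz: equality in $\big(\sum_t 1\cdot w_t\big)^2\le N\sum_t w_t^2$ holds precisely when $(w_1,\dots,w_N)$ is a scalar multiple of $(1,\dots,1)$, i.e.\ exactly when all the $w_t$ coincide. Since the bound is strict otherwise, this pins down the minimizers to be the equal-weight configurations (the only ones, up to the overall scaling freedom). As an alternative route one could set up the Lagrangian $\sum_t w_t^2-\mu\big(\sum_t w_t-1\big)$, deduce $2w_t=\mu$ for every $t$ from stationarity, and observe that $\sum_t w_t^2$ is strictly convex, so this critical point is the unique global minimum on the simplex.

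There is no real obstacle here: the argument is essentially a one-line Cauchy--Schwarz estimate together with its equality case. The only point that warrants a little care is stating the equality case explicitly, so that the proposition's claim that the error is minimized \emph{when} (and only when) the weights are equal is actually justified, rather than merely exhibiting one minimizing choice.
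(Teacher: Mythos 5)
Your proposal is correct, and it takes a different route from the paper. The paper's proof performs the same scale-invariance reduction you note (writing the objective as $\norm{w/(\mathbbm{1}^\top w)}$ and minimizing $\norm{z}$ subject to $\mathbbm{1}^\top z=1$), but then argues via a Lagrange-multiplier stationarity condition $z/\norm{z}=\lambda\mathbbm{1}$ to conclude the minimizer has equal coordinates; it does not explicitly verify that this critical point is a global minimum (which your side remark about strict convexity of $\sum_t w_t^2$ would supply). Your main argument instead gets everything from one Cauchy--Schwarz estimate: the global lower bound $\sqrt{\sum_t w_t^2}/\sum_t w_t\geq 1/\sqrt{N}$, its attainment at $w_t=c$, and, via the equality case, the characterization that equal weights (up to overall scaling) are the only minimizers. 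This buys a fully elementary, calculus-free proof with an explicit minimum value and a complete ``only if'' statement, whereas the paper's approach is a shorter variational sketch that identifies the same minimizer but leaves the global-optimality and uniqueness bookkeeping implicit. Both arguments are sound for the proposition as stated.
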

\begin{proof}
	Let $w=[w_1,...,w_N]^\top$ be the weight vector and $f(w)=\sqrt{w^\top w}/(\mathbbm{1}^\top w)$ be the Hoeffding error. Observe that $f(w)$ is of the form:
	$$f(w)=\norm{w}/(\mathbbm{1}^\top w)=\norm{w/(\mathbbm{1}^\top w)},$$
	where $\norm{w}=\sqrt{w^\top w}$ is the 2-norm of $w$. Thereby, let $z=w/(\mathbbm{1}^\top w)$ be the normalized vector. That $f(w)$ is minimized is equivalent to that $g(z)=\norm{z}$ is minimized for $\mathbbm{1}^\top z=1$. By the lagrange multiplier, this happens when 
	$$\nabla g = \frac{z}{\norm{z}}=\lambda \mathbbm{1},~\text{for~some}~\lambda\in \mathbb{R}.$$
	This can be achieved by $z_t=c$ for some $c>0$. Therefore, we know $f$ is minimized when 
	$$z_t=\frac{w_t}{\mathbbm{1}^\top w}=c~\text{for~some}~c.$$
	Since ${\mathbbm{1}^\top w}$ does not depend on $t$, we conclude that the minimizer happens at $w_t=c>0,~\forall t.$
\end{proof}

\end{document}